\def\isarxiv{1}
\renewcommand\footnotetextcopyrightpermission[1]{}
\def\th@acmplain{%
  \itshape 
  \setlength{\parindent}{0pt}%
  \thm@headfont{\bfseries\scshape}
  \thm@notefont{\normalfont\scshape}
}
\definecolor{lightgraybg}{RGB}{235,235,235}
\newtcolorbox{graybox}{
    colback=lightgraybg,
    colframe=lightgraybg,
    fonttitle=\bfseries,
    boxsep=2pt,
    left=0pt,
    right=0pt,
    top=0pt,
    bottom=0pt,
    breakable,
    arc=0mm,
    before skip=\topsep,
    after skip=\topsep
}
\newtheorem{theorem}{Theorem}[section]
\newtheorem{claim}[theorem]{Claim}
\newtheorem{corollary}[theorem]{Corollary}
\newtheorem{lemma}[theorem]{Lemma}
\newtheorem{proposition}[theorem]{Proposition}
\newtheorem{definition}[theorem]{Definition}
\newtheorem{remark}[theorem]{Remark}
\newtheorem{fact}[theorem]{Fact}
\newcommand{\R}{\mathbb{R}}
\newcommand{\wh}[1]{\widehat{#1}}
\newcommand{\wt}[1]{\widetilde{#1}}
\renewcommand{\d}{\mathrm{d}}
\newcommand{\argmin}{\mathop{\arg\min}}
\newcommand{\thetab}{\boldsymbol{\theta}}
\newcommand{\taub}{\boldsymbol{\tau}}
\newcommand{\Xib}{\boldsymbol{\Xi}}
\newcommand{\eb}{\mathbf{e}}
\newcommand{\hb}{\mathbf{h}}
\newcommand{\mb}{\mathbf{m}}
\newcommand{\pb}{\mathbf{p}}
\newcommand{\qb}{\mathbf{q}}
\newcommand{\wb}{\mathbf{w}}
\newcommand{\xb}{\mathbf{x}}
\newcommand{\yb}{\mathbf{y}}
\newcommand{\zb}{\mathbf{z}}
\newcommand{\Ab}{\mathbf{A}}
\newcommand{\Bb}{\mathbf{B}}
\newcommand{\Cb}{\mathbf{C}}
\newcommand{\Db}{\mathbf{D}}
\newcommand{\Hb}{\mathbf{H}}
\newcommand{\Ib}{\mathbf{I}}
\newcommand{\Pb}{\mathbf{P}}
\newcommand{\Tb}{\mathbf{T}}
\newcommand{\Xb}{\mathbf{X}}
\newcommand{\Zb}{\mathbf{Z}}
\newcommand{\Ecal}{\mathcal{E}}
\newcommand{\Fcal}{\mathcal{F}}
\newcommand{\Gcal}{\mathcal{G}}
\newcommand{\Kcal}{\mathcal{K}}
\newcommand{\Lcal}{\mathcal{L}}
\newcommand{\Ncal}{\mathcal{N}}
\newcommand{\Pcal}{\mathcal{P}}
\newcommand{\Qcal}{\mathcal{Q}}
\newcommand{\Vcal}{\mathcal{V}}
\newcommand{\Xcal}{\mathcal{X}}
\newcommand{\AppendixName}{\textit{Supplementary}}
\begin{document}

\title{Unlearning Inversion Attacks for Graph Neural Networks}


\author{Jiahao Zhang}
\affiliation{%
  \institution{The Pennsylvania State University}
  \city{University Park}
  \country{United States}
  }
\email{jiahao.zhang@psu.edu}
\author{Yilong Wang}
\affiliation{%
  \institution{The Pennsylvania State University}
  \city{University Park}
  \country{United States}
  }
\email{yvw5769@psu.edu}
\author{Zhiwei Zhang}
\affiliation{%
  \institution{The Pennsylvania State University}
  \city{University Park}
  \country{United States}
  }
\email{zbz5349@psu.edu}
\author{Xiaorui Liu}
\affiliation{%
  \institution{North Carolina State University}
  \city{Raleigh}
  \country{United States}
  }
\email{xliu96@ncsu.edu}
\author{Suhang Wang}
\affiliation{%
  \institution{The Pennsylvania State University}
  \city{University Park}
  \country{United States}
  }
\email{szw494@psu.edu}

\renewcommand{\shortauthors}{Zhang et al.}

\begin{abstract}
  
Graph unlearning methods aim to efficiently remove the impact of sensitive data from trained GNNs without full retraining, assuming that deleted information cannot be recovered. In this work, we challenge this assumption by introducing the \emph{graph unlearning inversion} attack: given only black-box access to an unlearned GNN and partial graph knowledge, can an adversary reconstruct the removed edges? We identify two key challenges: varying probability-similarity thresholds for unlearned versus retained edges, and the difficulty of locating unlearned edge endpoints, and address them with \textbf{TrendAttack}. First, we derive and exploit the \emph{confidence pitfall}, a theoretical and empirical pattern showing that nodes adjacent to unlearned edges exhibit a large drop in model confidence. Second, we design an adaptive prediction mechanism that applies different similarity thresholds to unlearned and other membership edges. Our framework flexibly integrates existing membership inference techniques and extends them with trend features. Experiments on four real-world datasets demonstrate that TrendAttack significantly outperforms state-of-the-art GNN membership inference baselines, exposing a critical privacy vulnerability in current graph unlearning methods. \textbf{The source code is available on GitHub\footnote{{\color{blue}\bf \url{https://github.com/QwQ2000/WSDM26-Graph-Unlearning-Inversion}}}}.

\end{abstract}

\begin{CCSXML}
<ccs2012>
   <concept>
       <concept_id>10010147.10010257</concept_id>
       <concept_desc>Computing methodologies~Machine learning</concept_desc>
       <concept_significance>500</concept_significance>
       </concept>
   <concept>
       <concept_id>10002978.10003029.10011150</concept_id>
       <concept_desc>Security and privacy~Privacy protections</concept_desc>
       <concept_significance>500</concept_significance>
       </concept>
 </ccs2012>
\end{CCSXML}

\ccsdesc[500]{Computing methodologies~Machine learning}
\ccsdesc[500]{Security and privacy~Privacy protections}

\keywords{Graph Unlearning, Privacy Attack, Graph Neural Networks}


\maketitle

\section{Introduction}

Graph-structured data is prevalent in numerous real-world applications, such as recommender systems~\cite{he2020lightgcn,zhang2024linear,liu2026continuous}, social media platforms~\cite{fan2019graph,sankar2021graph,liu2024score}, financial transaction networks~\cite{dou2020enhancing,liu2020alleviating,wang2025enhance}, and computational biology~\cite{jing2021fast,fan2025computational,xu2025dualequinet}. Graph Neural Networks (GNNs)~\cite{hamilton2017inductive,velickovic2018graph,wu2019simplifying} have emerged as powerful tools for modelling such data, leveraging their ability to capture both node attributes and graph topology. The effectiveness of GNNs relies on the message-passing mechanism~\cite{gilmer2017neural,feng2022powerful}, which iteratively propagates information between nodes and their neighbors. This enables GNNs to generate rich node representations, facilitating key tasks like node classification~\cite{kipf2017semi,xue2023lazygnn,lin2024trojan}, link prediction~\cite{zhang2018link,yang2024conversational,wang2025bridging}, and graph classification~\cite{lee2018graph,fu2021sagn,al2025graph}.

Despite their success, GNNs raise significant concerns about privacy risks due to the sensitive nature of graph data~\cite{sajadmanesh2021locally,dai2024comprehensive}. Real-world datasets often contain private information, such as purchasing records in recommender systems~\cite{zhang2021membership,xin2023user} or loan histories in financial networks~\cite{pedarsani2011privacy,wang2024data}. During training, GNNs inherently encode such sensitive information into their model parameters. When these trained models are shared via model APIs, privacy breaches may occur. These risks have led to regulations like the GDPR~\cite{mantelero2013eu}, CCPA~\cite{ccpa}, and PIPEDA~\cite{pipeda}, which enforce the \emph{right to be forgotten}, allowing users to request the removal of their personal data from systems and models. This demand has necessitated the development of methods to remove the influence of specific data points from trained GNNs, a process known as \emph{graph unlearning}~\cite{chien2022certified,chen2022graph,wu2023gif}.

A straightforward way of graph unlearning is to retrain the model from scratch on a cleaned training graph. However, this approach is computationally infeasible for large-scale graphs, such as purchase networks in popular e-commerce platforms~\cite{wang2018billion,jin2023amazon} and social networks on social media~\cite{armstrong2013linkbench,ching2015one}, which may involve billions of nodes and edges. 
To address this, a wide range of recent graph unlearning methods focus on efficient parameter manipulation techniques~\cite{wu2023gif,wu2023certified,wu2024graphguard}, which approximate the removal of data by adjusting model parameters based on their influence. 
After unlearning, membership inference attacks for GNNs~\cite{olatunji2021membership,he2021stealing,zhang2023demystifying} could become unable to determine whether the unlearned edges were part of the training set, as the GNN no longer memorizes them. 
Hence, these techniques are often regarded as privacy-preserving, assuming that the unlearned data cannot be reconstructed. 

\begin{figure*}[!t]
  \centering
  \includegraphics[width=0.8\linewidth]{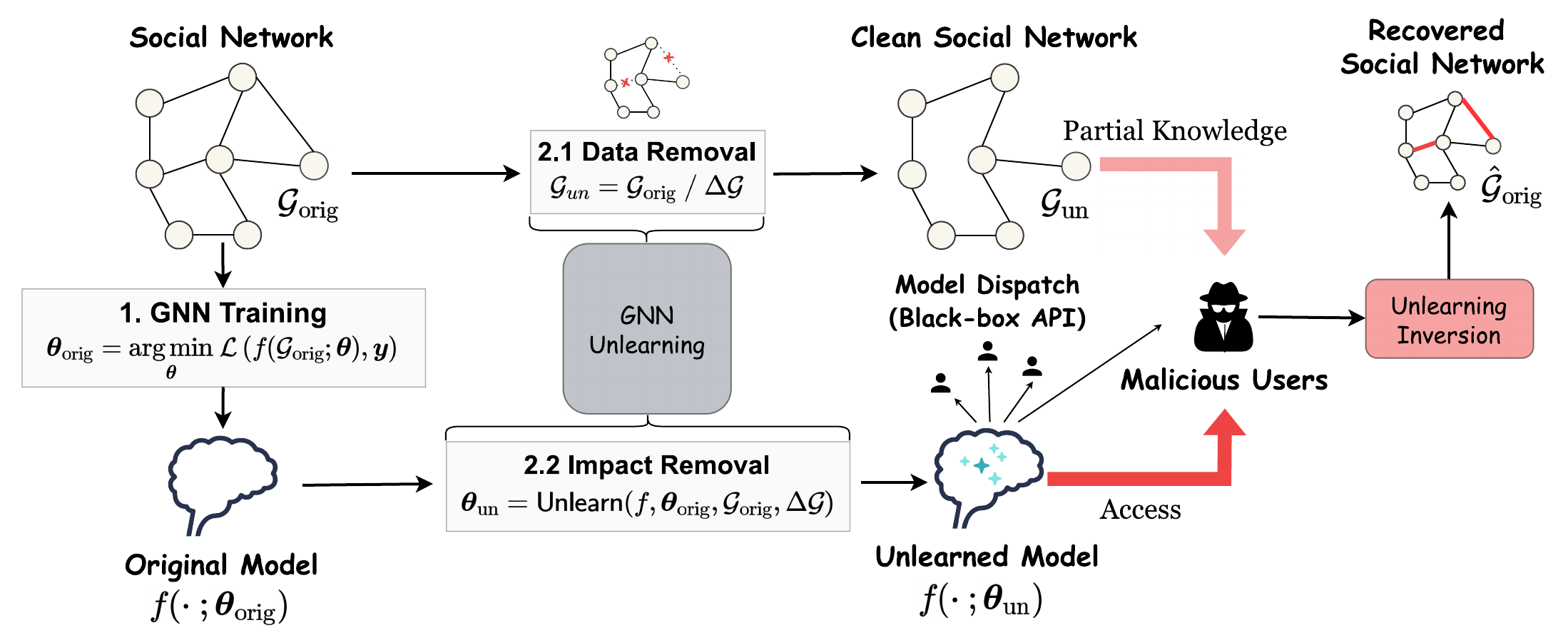}
  \vskip -1.4em
  \caption{\textbf{Illustration of the unlearning inversion attack}. Considering an online social network $\mathcal{G}_{\mathrm{orig}}$, where a user requests the deletion of sensitive friendship information, resulting in a cleaned graph $\mathcal{G}_{\mathrm{un}}$ and updated model parameters $\thetab_{\mathrm{un}}$. The GNN model may be shared with third-parties via black-box APIs. If an attacker, leveraging the model API and auxiliary information about $\mathcal{G}_{\mathrm{un}}$, can reconstruct the removed knowledge $\Delta \mathcal{G}$ through an unlearning inversion attack, sensitive relationships may be exposed, severely compromising user privacy.}
  \label{fig:prob_setting}
  \vskip -0.15in
\end{figure*}

In this paper, we challenge the assumption that existing graph unlearning methods are robust to privacy attacks. 
Specifically, we study a novel and important problem of whether unlearned information can be recovered through
\emph{unlearning inversion attacks}~\cite{hu2024learn} as illustrated in Figure~\ref{fig:prob_setting}. This attack is important because it reveals critical privacy vulnerabilities in graph unlearning methods and shows a concerning scenario where “forgotten” user information on the Web can be reconstructed through carefully designed attacks. Our central research question is: 

\begin{graybox}
    {\it Can third parties exploit model APIs of unlearned GNNs to recover sensitive information that was meant to be forgotten?}
\end{graybox}

Although several pioneering studies have examined the privacy risks of unlearned machine learning (ML) models~\cite{chen2021machine,thudi2022necessity,hu2024learn,bertran2024reconstruction}, unlearning inversion attacks on GNNs still remain largely unexplored. Existing general-purpose unlearning inversion attacks~\cite{hu2024learn,bertran2024reconstruction} typically adopt a ``two-model'' setting, where the attacker is assumed to have access to model parameters~\cite{bertran2024reconstruction} or output probabilities~\cite{hu2024learn} both before and after unlearning. This assumption is often unrealistic in practice, as pre-unlearning models may contain sensitive information and are usually inaccessible to third parties, necessitating a more practical ``one-model'' setting where the attacker only has access to the model after unlearning. Moreover, these methods are designed for i.i.d. data (e.g., images) and fail to capture the non-i.i.d. nature of graphs, including structural dependencies and the propagation of unlearning effects across the network. It is also worth noting that other privacy attacks on GNNs, such as membership inference attacks (MIAs)~\cite{he2021node,he2021stealing,olatunji2021membership,zhang2022inference}, may also be less effective in this context, as they aim to extract knowledge explicitly memorized by the model, while unlearned GNNs have already forgotten most of this sensitive information.

Therefore, in this work, we study a novel problem of link-level unlearning inversion attack for black-box unlearned GNN models, with the goal of accurately recovering unlearned links using only black-box GNN outputs and partial knowledge of the unlearned graph $\mathcal{G}_{\mathrm{un}}$. 
A straightforward solution is to train a link classifier on the output probabilities of the unlearned GNN model $f(\cdot;\thetab_{\mathrm{un}})$ to decide whether an edge belongs to the training data (regardless of whether the edge was unlearned). 
However, this naive approach faces two major technical challenges:
(i) The output probabilities of two nodes connected by an unlearned edge often do not show a strong similarity signal, making these unlearned edges hard to separate from ordinary non-existent edges. 
(ii) Even if we could measure such similarity, it is fundamentally difficult to identify which nodes are associated with unlearned edges, because we only observe the post-unlearning model and have no access to the pre-unlearning model as a reference. 

In response to these challenges, we propose a novel link-level unlearning inversion attack for black-box unlearned GNN models, namely \textbf{TrendAttack}. To address challenge (i), we design an adaptive prediction mechanism that applies different thresholds to infer two types of training graph edges: unlearned edges and other membership edges, enhancing TrendAttack's flexibility to accommodate varying similarity levels. For challenge (ii), we identify a key phenomenon called the \emph{confidence pitfall}, which enables the distinction between nodes connected to unlearned edges and others, using only black-box model outputs. This phenomenon describes how the model’s confidence in nodes near unlearned edges tends to decrease, and is supported by both empirical and theoretical evidence, as detailed in Section~\ref{sec:design_motivation}. By jointly incorporating the adaptive treatment of different edge types and the confidence trend pattern, TrendAttack achieves strong membership inference performance for both unlearned and other membership edges. Our \textbf{main contributions} are summarized as follows: 
\begin{itemize}[leftmargin=*]
    \item We formulate a novel problem of graph unlearning inversion attacks (Section~\ref{sec:problem}), highlighting the vulnerability of existing graph unlearning methods. {\bf This work is among the first to study GNN unlearning vulnerabilities via privacy attacks.}
    \item We identify a simple yet effective pattern, the \emph{confidence pitfall} (Section~\ref{sec:design_motivation}), which distinguishes nodes connected to unlearned edges, supported by both empirical and theoretical evidence. 
    \item We introduce a novel unlearning inversion attack, namely \textbf{TrendAttack} (Section~\ref{sec:trendattack}), which leverages confidence pitfall to accurately identify unlearned edges from black-box GNN outputs.
    \item Comprehensive evaluation (Section~\ref{sec:experiments}) on four real-world datasets shows that our method consistently outperforms state-of-the-art GNN privacy attacks on recovering unlearned edges.
\end{itemize}

\section{Related Works}\label{sec:rel_works}

\textbf{Graph Unlearning.} Graph Unlearning enables the efficient removal of unwanted data’s influence from trained graph ML models~\cite{chen2022graph,said2023survey,fan2025opengu}. This removal process balances model utility, unlearning efficiency, and removal guarantees, following two major lines of research: retrain-based unlearning and approximate unlearning. 
Retrain-based unlearning partitions the original training graph into disjoint subgraphs, training independent submodels on them, enabling unlearning through retraining on a smaller subset of the data. Specifically, GraphEraser~\cite{chen2022graph} pioneered the first retraining-based unlearning framework for GNNs, utilizing balanced clustering methods for subgraph partitioning and ensembling submodels in prediction with trainable fusion weights to enhance model utility. 
Subsequently, many studies~\cite{wang2023inductive,li2023ultrare,zhang2024graph,li2025community} have made significant contributions to improving the Pareto front of utility and efficiency in these methods, employing techniques such as data condensation~\cite{li2025community} and enhanced clustering~\cite{li2023ultrare,zhang2024graph}. 
Approximate unlearning efficiently updates model parameters to remove unwanted data. Certified graph unlearning~\cite{chien2022certified} provides an important early exploration of approximate unlearning in SGC~\cite{wu2019simplifying}, with provable unlearning guarantees. GraphGuard~\cite{wu2024graphguard} introduces a comprehensive system to mitigate training data misuse in GNNs, featuring a significant gradient ascent unlearning method as one of its core components. 
GIF~\cite{wu2023gif} presents a novel influence function-based unlearning approach tailored to graph data, considering feature, node, and edge unlearning settings. 
Recent innovative works have further advanced the scalability~\cite{pan2023unlearning,li2024tcgu,yi2025scalable,yang2025erase,zhang2025dynamic} and model utility~\cite{li2024towards,zhang2025node} of approximate unlearning methods. 
In this paper, we explore the privacy vulnerabilities of graph unlearning by proposing a novel membership inference attack tailored to unlearned GNN models, introducing a new defense frontier that graph unlearning should consider from a security perspective.

\noindent\textbf{Membership Inference Attack for GNNs.} Membership Inference Attack (MIA) is a privacy attack targeting ML models, aiming to distinguish whether a specific data point belongs to the training set~\cite{shokri2017membership,hu2022membership}. 
Recently, MIA has been extended to graph learning, where a pioneering work~\cite{duddu2020quantifying} explored the feasibility of membership inference in classical graph embedding models. Subsequently, interest has shifted towards attacking graph neural networks (GNNs), with several impactful and innovative studies revealing GNNs' privacy vulnerabilities in node classification~\cite{he2021node,he2021stealing,olatunji2021membership,zhang2022inference} and graph classification tasks~\cite{wu2021adapting}, covering cover node-level~\cite{he2021node,olatunji2021membership}, link-level~\cite{he2021stealing}, and graph-level~\cite{wu2021adapting,zhang2022inference} inference risks. Building on this, GroupAttack~\cite{zhang2023demystifying} presents a compelling advancement in link-stealing attacks~\cite{he2021stealing} on GNNs, theoretically demonstrating that different edge groups exhibit varying risk levels and require distinct attack thresholds, while a label-only attack has been proposed to target node-level privacy vulnerabilities~\cite{conti2022label} with a stricter setting. Another significant line of research involves graph model inversion attacks, which aim to reconstruct the graph structure using model gradients from white-box models~\cite{zhang2021graphmi} or approximated gradients from black-box models~\cite{zhang2022model}. 

Despite the impressive contributions of previous MIA studies in graph ML models, existing approaches overlook GNNs containing unlearned sensitive knowledge and do not focus on recovering such knowledge from unlearned GNN models. 
Additional related works are in \AppendixName~\ref{sec:more_rel_works}.

\section{Preliminaries}\label{sec:preliminaries}

In this section, we present the notations used in this paper and give preliminaries on node classification and graph unlearning. 

\noindent\textbf{Notations}. In this paper, 
bold uppercase letters (e.g., $\Xb$) denote matrices, bold lowercase letters (e.g., $\xb$) denote column vectors, and normal letters (e.g., $x$) indicate scalars. 
We use $\mathcal{A} \setminus \mathcal{B}:=\{x:x\in\mathcal{A}, x\notin\mathcal{B}\}$ to denote the set difference between sets $\mathcal{A}$ and $\mathcal{B}$. 
Let $\mathcal{G} = (\mathcal{V}, \mathcal{E})$ denote a graph, where $\mathcal{V} = \{v_1, \cdots, v_n\}$ is the node set and $\mathcal{E} \subseteq \mathcal{V} \times \mathcal{V}$ is the edge set. The node feature of node $v_i \in \Vcal$ is denoted by $\xb_i \in \R^d$, and the feature matrix for all nodes is denoted by $\Xb = [\xb_1, \cdots, \xb_n]^\top \in \R^{n \times d}$. The adjacency matrix $\mathbf{A} \in \{0, 1\}^{n \times n}$ encodes the edge set, where $A_{i,j} = 1$ if $(v_i, v_j) \in \mathcal{E}$, and $\mathbf{A}_{i,j} = 0$ otherwise. Specifically, $\mathbf{D}\in\R^{n\times n}$ is the degree matrix, where the diagonal elements $D_{i,i} = \sum_{j=1}^nA_{i,j}$. We use $\Ncal(v)$ to denote the neighborhood of node $v_i \in \Vcal$, and use $\wh{\Ncal}(v)$ to denote a subset of $\Ncal(v)$. Specifically, $\Ncal^{(k)}(v_i)$ represents all nodes in $v_i$'s $k$-hop neighborhood. 
A full list of notations is in \AppendixName~\ref{sec:notations_append}.

\noindent\textbf{Semi-supervised Node Classification}. We focus on a semi-superv\\-ised node classification task in a transductive setting, which is common in real-world applications~\cite{kipf2017semi,wu2020comprehensive}. In this setting, the training graph $\mathcal{G}$ includes a small subset of labeled nodes $\mathcal{V}_L = \{u_1, \cdots, u_{|\mathcal{V}_L|}\} \subseteq \mathcal{V}$, where each node is annotated with a label $y\in\mathcal{Y}$. The remaining nodes are unlabeled and belong to the subset $\mathcal{V}_U$, where $\mathcal{V}_U \cap \mathcal{V}_L = \emptyset$. The test set $\mathcal{V}_T$ is a subset of the unlabeled nodes, represented as $\mathcal{V}_T \subseteq \mathcal{V}_U$. We denote the GNN output for a specific target node $v\in\mathcal{V}$ as $f_{\mathcal{G}}(v;\thetab)$, where $\mathcal{G}$ is the graph used for neighbor aggregation and $\theta$ is the model parameters. 

\noindent\textbf{Graph Unlearning}. Graph unlearning aims to remove the impact of some undesirable training data from trained GNN models under limited computational overhead~\cite{chen2022graph,wu2023gif}. Specifically, consider the original training graph $\mathcal{G}_{\mathrm{orig}} := (\mathcal{V}_{\mathrm{orig}}, \mathcal{E}_{\mathrm{orig}})$ including both desirable data and undesirable data. Normally, the parameters $\thetab_{\mathrm{orig}}$ of the GNN model trained on the original graph is given as:
\begin{align}\label{eq:param_orig}
    \thetab_{\mathrm{orig}} := \mathop{\arg\min}_{\thetab} \sum_{v \in \mathcal{V}_L} \mathcal{L}(f_{\mathcal{G}_{\mathrm{orig}}}(v; \thetab_{}), y_v),
\end{align}
where $\mathcal{V}_L\subseteq\mathcal{V}_{\mathrm{orig}}$ is the set of labeled nodes, and $\mathcal{L}$ is a loss function for node classification (e.g., cross-entropy~\cite{zhang2018generalized}). 

Let the undesirable knowledge be a subgraph $\Delta\mathcal{G} := (\Delta\mathcal{V}, \Delta\mathcal{E})$ of the original graph, where $\Delta\mathcal{V} \subseteq \mathcal{V}_{\mathrm{orig}}$ and $\Delta\mathcal{E} \subseteq \mathcal{E}_{\mathrm{orig}}$. The unlearned graph is defined as $\mathcal{G}_{\mathrm{un}} := (\mathcal{V} \setminus \Delta \mathcal{V}, \mathcal{E} \setminus \Delta \mathcal{E})$, which excludes the undesirable knowledge. The goal of graph unlearning is to obtain parameters $\thetab_{\mathrm{un}}$ using an efficient algorithm $\textsc{Unlearn}$ (e.g., gradient ascent~\cite{wu2024graphguard,zhang2025catastrophic}, or influence function computation~\cite{wu2023gif,wu2023certified}), such that $\thetab_{\mathrm{un}}$ closely approximates the retrained parameters $\thetab_{\mathrm{re}}$ from the cleaned graph $\mathcal{G}_{\mathrm{un}}$, while being significantly more efficient than retraining from scratch, i.e.,
\begin{align}\label{eq:param_unlearned} 
    \thetab_{\mathrm{un}} := & ~ \textsc{Unlearn}(f, \thetab_{\mathrm{orig}}, \mathcal{G}_{\mathrm{orig}}, \Delta\mathcal{G}) \notag \\
    \approx  & ~ \mathop{\arg\min}_{\thetab} \sum_{v \in \mathcal{V}_L \setminus \Delta\mathcal{V}} \mathcal{L}(f_{\mathcal{G}_{\mathrm{orig}} \setminus \Delta\mathcal{G}}(v; \thetab), y_v) ,
\end{align}
where the right optimization problem denotes retrain from scratch on the unlearned graph. We focus on the \textbf{edge unlearning} setting~\cite{wu2023certified}, where $\Delta \mathcal{V} = \emptyset$, i.e., only edges are removed. This setting captures practical scenarios such as users requesting the removal of private friendship links from social media or the deletion of sensitive purchase records from recommender systems.

\section{Problem Formulation}\label{sec:problem}

In this section, we begin by describing the threat model associated with the link-level graph unlearning inversion attack, and then present a formal problem definition. 

\subsection{Threat Model}\label{sec:threat_model}

\textbf{Attacker's Goal.} The adversary aims to recover links in the original training graph $\Gcal_{\mathrm{orig}}$, i.e., $\Ecal_{\mathrm{orig}}$. It includes both the unlearned edges $\Delta \Ecal$ and the remaining membership edges $\Ecal_{\mathrm{orig}} \setminus \Delta\Ecal$. As both types of edges can reveal private user information, with unlearned edges typically being more sensitive, the attacker’s goal is to accurately infer both. Specifically, given any pair of nodes $v_i, v_j \in \Vcal$, the attacker aims to determine whether the edge $(v_i, v_j)$ existed in $\Ecal_{\mathrm{orig}}$, i.e., whether $(v_i, v_j) \in \Ecal_{\mathrm{orig}}$. We leave the study of node-level and feature-level unlearning inversion as future work.

\noindent\textbf{Attacker's Knowledge and Capability.} We consider a black-box setting, motivated by the widespread deployment of machine learning models as a service via APIs~\cite{shokri2017membership,truex2019demystifying}. The attacker can query the unlearned GNN to obtain output probabilities $f_{\Gcal_{\mathrm{un}}}(\cdot; \thetab_{\mathrm{un}})$ for target nodes in $\Vcal$. Additionally, the attacker has partial access to the unlearned graph $\Gcal_{\mathrm{un}}$. For any pair of target nodes $v_i, v_j \in \Vcal$, the attacker has access to the following information: 
\textbf{(i)} The model output probabilities $f_{\Gcal_{\mathrm{un}}}(v_i; \thetab_{\mathrm{un}})$ and $f_{\Gcal_{\mathrm{un}}}(v_j; \thetab_{\mathrm{un}})$;
\textbf{(ii)} The input features $\xb_i$ and $\xb_j$; and
\textbf{(iii)} A subset of the $k$-hop neighborhood of $v_i$ and $v_j$, i.e., $\wh{\Ncal}^{(k)}(v_i)$ and $\wh{\Ncal}^{(k)}(v_j)$. Furthermore, similar to many previous privacy attacks on GNNs~\cite{he2021stealing,olatunji2021membership,dai2023unified} that use a shadow dataset to train a surrogate model, we also assume \textbf{(iv)} the attacker has access to a shadow dataset $\Gcal^{\mathrm{sha}}$ with a distribution similar to the training graph $\Gcal_{\mathrm{orig}}$ before unlearning. 

\noindent{\bf Discussion.} 
Although our attack setting extends the typical probab\\-ility-only black-box attack in (i), this level of attacker knowledge is realistic in practice, since (ii) can be obtained from users’ public profiles on social media platforms, and (iii) can be retrieved by querying public friend lists or social connections.

We note that such shadow datasets in (iv) are often accessible in practice. For example, in social network attacks, many public datasets are available~\cite{dai2020ginger,aliapoulios2021large,nielsen2022mumin}, enabling us to train an attack model on one platform and apply it to another (e.g., from X (Twitter) to Facebook), or transfer it across regions (e.g., Facebook networks in the US to those in the UK). 
In FinTech scenarios (e.g., GNN-based APIs for credit recommendation or fraud detection), attackers could build shadow datasets from public transaction networks (e.g., blockchain data)~\cite{weber2019anti,pareja2020evolvegcn,chai2023towards}, train TrendAttack models on surrogate victim models, and transfer them to real victim models to infer sensitive transactions from API outputs. For instance, the Bitcoin transaction network can serve as a shadow dataset, allowing the attack model to be transferred to other De-Fi ecosystems such as Ethereum or newly emerging platforms. 


\subsection{Graph Unlearning Inversion}\label{sec:gun_inv_formal}  
With the threat model presented in Section~\ref{sec:threat_model}, we now formalize the graph unlearning inversion attack under concrete conditions and attack objectives. We begin by defining the query set and the adversary's partial knowledge. 

\begin{definition}[Query Set]\label{dfn:query_set}
    The query set is a set of $Q$ node pairs of the adversary's interest, defined as $\Qcal := \{(v_{i_k}, v_{j_k}): v_{i_k}, v_{j_k} \in \Vcal\}_{k=1}^{Q}$, where $|\Qcal| = Q$.
\end{definition}

\begin{definition}[Partial Knowledge of Query Set]\label{dfn:partial_k_of_q}
    Let $f_{\Gcal_{\mathrm{un}}}(\cdot; \thetab_{\mathrm{un}})$ be an unlearned GNN model where the parameters $\thetab_{\mathrm{un}}$ are obtained from Eq.~\eqref{eq:param_unlearned}, and $\Delta \Ecal$ denotes the unlearned edge set. For a given query set $\Qcal$, the adversary's partial knowledge on the unlearned graph $\Gcal_{\mathrm{un}}$ is defined as a tuple $\Kcal_\Qcal := (\Pcal_\Qcal, \Fcal_\Qcal)$, where: (i) Probability set $\Pcal_\Qcal := \{f_{\Gcal_{\mathrm{un}}}(v_i; \thetab_{\mathrm{un}}): v_i \in \wh{\Ncal}^{(k)}(v_j),   v_j \in \Qcal\}$ represents the black-box model output probabilities for nodes in the $k$-hop neighborhood of the query nodes; and (ii) Feature set $\Fcal_\Qcal := \{\xb_i : v_i \in \Qcal\}$ contains the input features of nodes in the query pairs.
\end{definition}

In this setting, the attacker aims to find out whether node pairs in the query set $\Qcal$ (Definition~\ref{dfn:query_set}) were connected in the original graph $\Gcal_{\mathrm{orig}}$ before edge removal and GNN unlearning. Under the partial knowledge setting in Definition~\ref{dfn:partial_k_of_q}, the adversary has only black-box access to the unlearned model $f(\cdot;\thetab_{\mathrm{un}})$ on the clean graph $\Gcal_{\mathrm{un}}$ (i.e., the graph without the unlearned edges), which contains minimal information about removed edges, making our attack highly non-trivial. Compared with the strictest MIA setting (i.e., where only black-box model outputs are available)~\cite{olatunji2021membership}, the attacker also knows the IDs of part of neighboring nodes in the query set $\Qcal$ for black-box model queries, and has access to the input features of the nodes. Note that such assumptions are common in membership inference attacks, e.g.,  StealLink~\cite{he2021stealing} also assumes access to node features and a partial view of the attack graph.

With the above definitions, the graph unlearning inversion problem is then formalized as:
\begin{definition}[Graph Unlearning Inversion Problem]\label{dfn:gun_inv}  
    Let $\Qcal$ be the adversary’s node pairs of interest (Definition~\ref{dfn:query_set}), with partial knowledge $\Kcal_\Qcal$ on the unlearned graph $\Gcal_{\mathrm{un}}$ (Definition~\ref{dfn:partial_k_of_q}). Suppose the adversary also has access to a shadow graph $\Gcal^{\mathrm{sha}}$ drawn from a distribution similar to $\Gcal_{\mathrm{orig}}$. The goal of the graph unlearning inversion attack is to predict, for each $(v_i, v_j) \in \Qcal$, whether $(v_i, v_j) \in \Ecal_{\mathrm{orig}}$ (label $1$) or $(v_i, v_j) \notin \Ecal_{\mathrm{orig}}$ (label $0$).
\end{definition}

More specifically, the query set $\Qcal$ can be divided into three disjoint subsets: \textbf{(i)} $\Qcal^+_{\mathrm{un}} := \{(v_i, v_j): (v_i, v_j) \in \Delta \Ecal\}$, representing the positive unlearned edges; \textbf{(ii)} $\Qcal^+_{\mathrm{mem}} := \{(v_i, v_j): (v_i, v_j) \in \Ecal_{\mathrm{orig}} \setminus \Delta \Ecal\}$, representing the remaining membership edges in the original graph; and \textbf{(iii)} $\Qcal^- := \{(v_i, v_j): (v_i, v_j) \notin \Ecal_{\mathrm{orig}}\}$, denoting non-member (negative) pairs. The goal of this work is to distinguish both $\Qcal^+_{\mathrm{un}}$ and $\Qcal^+_{\mathrm{mem}}$ from $\Qcal^-$, enabling accurate membership inference on both positive subsets. This highlights a key difference between our work and prior MIA methods for GNNs~\cite{he2021node,he2021stealing,zhang2023demystifying}, which focus solely on distinguishing $\Qcal^+_{\mathrm{mem}}$ from $\Qcal^-$ and may fall short of inferring $\Qcal^+_{\mathrm{un}}$. Moreover, our work also differs substantially from previous unlearning inversion attacks~\cite{bertran2024reconstruction,hu2024learn}. A summary is provided in the Remark~\ref{rmk:strictest_setting} below, and detailed comparisons on attack settings can be found in \AppendixName~\ref{sec:compare_prev_inv_atk}.

\begin{remark}[Strictness of Attack Setting]\label{rmk:strictest_setting}
    Despite using a shadow dataset $\Gcal^{\mathrm{sha}}$, our attack still works under one of the strictest settings in the unlearning inversion literature. Unlike previous works~\cite{bertran2024reconstruction,hu2024learn} that require white-box or black-box access to both the original model $f(\cdot;\thetab_{\mathrm{orig}})$ and the unlearned model $f(\cdot;\thetab_{\mathrm{un}})$,
    we only require black-box access to the unlearned model $f(\cdot;\thetab_{\mathrm{orig}})$.
\end{remark}

\begin{remark}[Difference to GNN MIAs]\label{rmk:diff_mia}
    Existing GNN MIA methods~\cite{he2021node,he2021stealing,zhang2023demystifying} aim to tell whether $(v_i,v_j) \in \Ecal_{\mathrm{orig}}$ by using the black-box output of the original GNN model $f(\cdot;\thetab_{\mathrm{orig}})$, which may memorize such edges. In contrast, our attack in Definition~\ref{dfn:gun_inv} works in a harder setting: we infer whether $(v_i,v_j) \in \Ecal_{\mathrm{orig}}$ from the unlearned model $f(\cdot;\thetab_{\mathrm{un}})$, with minimal knowledge about the removed edges $\Delta \Ecal \subseteq \Ecal_{\mathrm{orig}}$. Remarkably, even if $(v_i,v_j) \in \Delta \Ecal$ and has been unlearned, we can still recover it accurately.
\end{remark}
\section{Proposed Method}

\begin{figure*}[!t]
  \centering
  \includegraphics[width=0.96\linewidth]{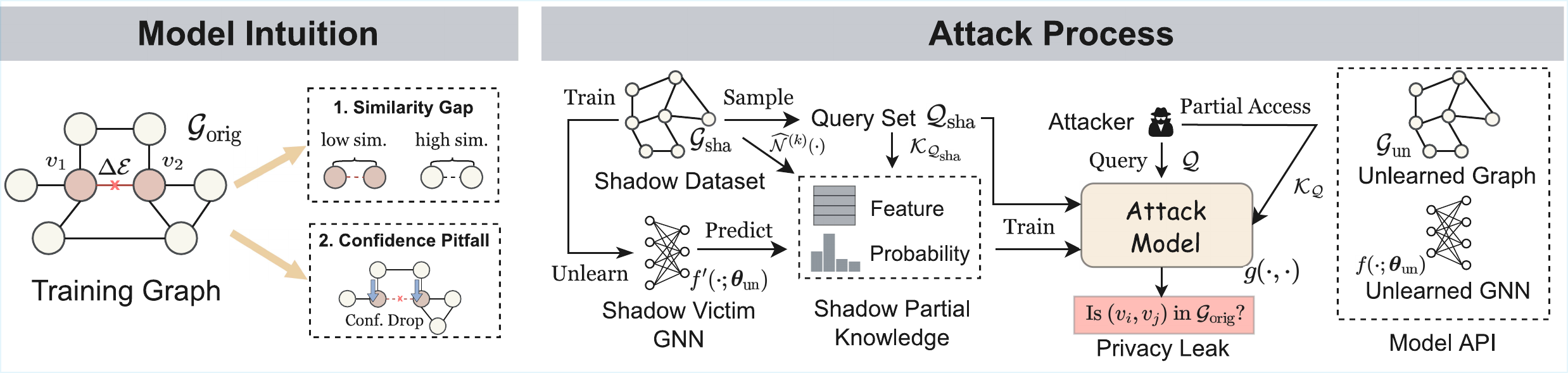}
  \vskip -1.5em
  \caption{\textbf{Illustration of the proposed TrendAttack}. }
  \vskip -0.15in
  \label{fig:model}
\end{figure*}

In this section, we first present the key motivation behind our method, focusing on the adaptive threshold and confidence pitfalls. We then introduce our proposed TrendAttack framework. An illustration of the proposed TrendAttack is in Figure~\ref{fig:model}. 

\subsection{Design Motivation}\label{sec:design_motivation}

As discussed in Section~\ref{sec:gun_inv_formal}, a key challenge that differentiates graph unlearning inversion attacks from traditional MIAs on GNNs is the need to distinguish both unlearned edges in $\Qcal_{\mathrm{un}}^+$ and memorized edges in $\Qcal_{\mathrm{mem}}^+$ from non-edge pairs in $\Qcal^-$. However, these two types of positive edges exhibit different behaviors: unlearned edges typically require a lower similarity threshold to be separated from $\Qcal^-$, while memorized edges require a higher threshold. This raises the question of how to adaptively decide which threshold to apply. 

Our first claim confirms the necessity of different thresholds, and our second claim shows that confidence trends provide a useful signal for classifying whether an edge belongs to $\Qcal_{\mathrm{un}}^+$ or $\Qcal_{\mathrm{mem}}^+$, thereby guiding the choice of threshold.

\noindent\textbf{Probability Similarity Gap.} 
Existing MIA methods typically decide whether a link exists between $v_i$ and $v_j$ by measuring the similarity of their input features $\mathrm{sim}(\xb_i, \xb_j)$~\cite{he2021stealing} and/or the similarity of their output probabilities $\mathrm{sim}(\pb_i, \pb_j)$~\cite{he2021stealing,olatunji2021membership}, where $\pb_i = f_{\Gcal_{\mathrm{un}}}(v_i; \thetab_{\mathrm{un}})$ denotes the GNN output. While feature similarity $\mathrm{sim}(\xb_i, \xb_j)$ is not affected by unlearning and can still be applied in our setting, it does not fully capture graph structure information. Probability similarity $\mathrm{sim}(\pb_i, \pb_j)$, on the other hand, encodes both node features and the structural signals learned by the GNN, and thus plays a central role in distinguishing edges from non-edges.

However, unlearning would affect the similarity of probability. When an edge $(v_i, v_j)$ belongs to $\Delta\Ecal$ and is removed from the graph, its structural contribution to the model is explicitly erased during unlearning. As a result, the similarity between $\pb_i$ and $\pb_j$ weakens compared to edges that remain memorized by the GNN in $\Qcal_{\mathrm{mem}}^+$. 
These considerations lead us to the following claim, which is empirically verified by our preliminary study in \AppendixName~\ref{sec:prelim_exp}:

\begin{claim}[Probability Similarity Gap]\label{clm:prob_sim_gap}
    The average similarity between predicted probabilities across the three query subsets follows the order:
    \begin{align*}
        \mathrm{ProbSim}(\Qcal^-) < \mathrm{ProbSim}(\Qcal^+_{\mathrm{un}}) < \mathrm{ProbSim}(\Qcal^+_{\mathrm{mem}}),
    \end{align*}
    where $\mathrm{ProbSim}(\Qcal_0):= |\Qcal_0|^{-1} \sum_{(v_i, v_j) \in \Qcal_0} \mathrm{sim}(\pb_i, \pb_j)$ denotes the average probability similarity over a query subset $\Qcal_0$.
\end{claim}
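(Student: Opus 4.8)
The plan is to establish the two inequalities of Claim~\ref{clm:prob_sim_gap} separately, treating them as claims about \emph{expected} probability similarity under a model that relates edge presence in the training graph to message passing. The cleanest route is to work in a linearized / SGC-style regime (as in the certified-unlearning analyses cited, e.g.~\cite{chien2022certified,wu2019simplifying}), where $\pb_i = f_{\Gcal_{\mathrm{un}}}(v_i;\thetab_{\mathrm{un}})$ is, up to a Lipschitz readout, a smoothed aggregate $\sum_j S_{ij}\xb_j$ over neighbors, with $S$ the (normalized) propagation operator of $\Gcal_{\mathrm{un}}$. Under this lens, $\mathrm{sim}(\pb_i,\pb_j)$ is large precisely when the aggregated neighborhoods of $v_i$ and $v_j$ overlap heavily, and a direct edge $(v_i,v_j)$ contributes a shared term that boosts this overlap.

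First I would prove $\mathrm{ProbSim}(\Qcal^-) < \mathrm{ProbSim}(\Qcal^+_{\mathrm{mem}})$: for a membership edge that survives unlearning, $(v_i,v_j)\in\Gcal_{\mathrm{un}}$, so the propagation operator couples $\pb_i$ and $\pb_j$ through the shared edge (each appears in the other's aggregation, plus increased common 2-hop mass), whereas for a generic non-member pair in $\Qcal^-$ there is no such systematic coupling and the expected similarity reduces to the ``background'' overlap of two random nodes. Homophily of $\Gcal_{\mathrm{orig}}$ (standard in this line of work) makes the edge-induced contribution strictly positive in expectation. This first inequality is essentially the known MIA signal~\cite{he2021stealing,olatunji2021membership,zhang2023demystifying} and I would cite / reuse it.

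The second inequality, $\mathrm{ProbSim}(\Qcal^+_{\mathrm{un}}) < \mathrm{ProbSim}(\Qcal^+_{\mathrm{mem}})$, is the substantive one and the expected main obstacle. Here the argument is: for $(v_i,v_j)\in\Delta\Ecal$ the edge is absent from $\Gcal_{\mathrm{un}}$, so the structural coupling term present in the membership case is removed from the propagation; what remains is only (a) the background overlap and (b) a residual term of size $O(\|\thetab_{\mathrm{un}}-\thetab_{\mathrm{re}}\|)$ coming from the imperfect unlearning update, which by Eq.~\eqref{eq:param_unlearned} is small but nonzero. Comparing $\Qcal^+_{\mathrm{un}}$ to $\Qcal^-$: both lack the edge, but unlearned pairs \emph{were} neighbors in $\Gcal_{\mathrm{orig}}$, hence inherit the homophily-driven feature correlation and common higher-order neighbors, pushing their similarity above the $\Qcal^-$ background --- this gives $\mathrm{ProbSim}(\Qcal^-)<\mathrm{ProbSim}(\Qcal^+_{\mathrm{un}})$. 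Comparing $\Qcal^+_{\mathrm{un}}$ to $\Qcal^+_{\mathrm{mem}}$: the membership pairs additionally enjoy the direct-edge propagation term, which I would show dominates the $O(\|\thetab_{\mathrm{un}}-\thetab_{\mathrm{re}}\|)$ residual whenever the unlearning approximation is accurate enough (a quantitative assumption I would state explicitly, e.g. the residual is smaller than the homophily gap times the minimum degree-normalization weight).

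The delicate points I anticipate: (i) making ``$\mathrm{sim}$'' concrete --- I would carry out the argument for cosine or inner-product similarity and note monotone transformations preserve the ordering; (ii) controlling the nonlinear readout --- I would either assume a Lipschitz/monotone readout so that ordering of aggregate overlaps transfers to ordering of output-probability similarities, or state the result as an approximate/expectation-level claim; and (iii) the residual-term bound --- this is where an explicit assumption on the unlearning error (inherited from the certified-unlearning guarantees of the cited methods) is unavoidable, and I would flag that the claim is empirical in general but provable under these standard regularity conditions. Since the statement is phrased as a \emph{hypothesis}/claim motivating the method rather than a formal theorem, I would present the above as a heuristic derivation plus the supporting empirical evidence deferred to the experiments, with the linearized calculation given in an appendix.
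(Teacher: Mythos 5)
Your instinct at the very end — that this statement is a hypothesis and should be supported by a heuristic derivation plus empirical evidence — is in fact all the paper does: Claim~\ref{clm:prob_sim_gap} is never proven. The paper's only support is the preliminary study in Appendix~\ref{sec:prelim_exp}, where a JS-divergence-based similarity $\phi(\pb_i,\pb_j)$ is computed on a GCN victim across four datasets and three unlearning methods (GIF, CEU, GA), and the empirical averages exhibit the ordering $\mathrm{ProbSim}(\Qcal^-)<\mathrm{ProbSim}(\Qcal^+_{\mathrm{un}})<\mathrm{ProbSim}(\Qcal^+_{\mathrm{mem}})$; the conclusion even lists the absence of a formal theoretical justification for this probability-gap intuition as an explicit limitation (the paper's influence-function analysis for linear GCNs in Appendices~\ref{sec:main_thms}--\ref{sec:proof} is reserved for the other claim, the confidence pitfall). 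So your proposal takes a genuinely different route: a linearized SGC-style argument where the direct-edge propagation term boosts overlap for $\Qcal^+_{\mathrm{mem}}$, unlearned pairs in $\Qcal^+_{\mathrm{un}}$ retain only homophily-driven correlation, shared higher-order structure, and an $O(\lVert\thetab_{\mathrm{un}}-\thetab_{\mathrm{re}}\rVert)$ residual, and $\Qcal^-$ supplies the background level. That sketch is consistent with the paper's informal motivation in Section~\ref{sec:design_motivation} and, if carried out, would buy something the paper does not have — sufficient conditions (homophily gap dominating the unlearning residual, Lipschitz readout, a concrete choice of $\mathrm{sim}$) under which the ordering provably holds in expectation. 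The cost is exactly the set of extra assumptions you flag, none of which the paper states, and your argument as written remains a plan rather than a proof (the dominance of the edge-coupling term over the residual is asserted, not bounded). Relative to the paper, then, you are not missing anything the authors did; you are proposing more than they attempted, while their actual evidence for the claim is purely empirical.
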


The intuition for this claim is that, memorized edges $\Qcal^+_{\mathrm{mem}}$ tend to show the highest probability similarity because the model has fully maintained their structural signals. After unlearning, the edges in $\Qcal^+_{\mathrm{un}}$ lose part of this signal, so their similarity becomes lower than memorized edges. However, practical unlearning methods are often approximate~\cite{chien2022certified,wu2023gif}, which means that a small residual effect of the removed edges may still remain in the model, preventing their similarity from dropping to the level of non-existent edges $\Qcal^-$. In contrast, non-edges naturally exhibit the lowest similarity because they lack any structural signal.  

This gap indicates that, to effectively perform the privacy attack with model output probabilities, one must first distinguish between unlearned and memorized edges, and then apply an adaptive similarity threshold when predicting edges versus non-edges. 

\noindent\textbf{Confidence Pitfall.} Our analysis of the probability similarity gap (Claim~\ref{clm:prob_sim_gap}) suggests that unlearned edges $\Qcal^+_{\mathrm{un}}$ and remaining memorized edges $\Qcal^+_{\mathrm{mem}}$ require different probability-similarity thresholds for accurate prediction. The remaining challenge, however, is that the attacker cannot directly tell whether a given query $(v_i, v_j)$ belongs to $\Qcal^+_{\mathrm{un}}$ or $\Qcal^+_{\mathrm{mem}}$, which makes it difficult to decide which similarity threshold to choose for the query $(v_i,v_j)$. 




To address this, we analyze how removing a specific edge $(v_i, v_j)$ affects model outputs, using the widely used analytical framework of influence functions~\cite{koh2017understanding,wu2023gif,wu2023certified}. This analysis separates two effects: (i) the immediate impact of dropping the edge from the training graph $\Gcal_{\mathrm{orig}}$, and (ii) the subsequent adjustment of model parameters by the unlearning procedure. We focus on a linear GCN model, which, despite its simplicity, captures the core behavior of many GNN architectures by choosing a different propagation matrix $\Cb$ (see Remark~\ref{rmk:lgn_universality} in \AppendixName~\ref{sec:proof}).

\begin{theorem}[Single‐Edge to Single‐Output Influence, Informal]\label{thm:influence_one_edge_one_node}
    Let $f(\Cb, \Xb; \wb^{\star}) := \Cb \Xb \wb^\star$ be a linear GCN with propagation matrix $~\Cb\in\R^{n\times n}$ and parameters $\wb^{\star}$ obtained by least‐squares on labels $\yb\in\R^d$. The influence of an undirected edge $(v_i, v_j)\in\Delta\Ecal$ on the model output $\pb_k:=f(\Cb, \Xb; \wb^{\star})_k$ of node $v_k\in\Vcal$ can be decomposed as follows:
    \begin{align*}
        \mathcal{I}(\pb_k) =&~ \underbrace{\boldsymbol{1}{\{v_k=v_i\}}\cdot (\xb^\top_i \wb^\star) + \boldsymbol{1}{\{v_k=v_j\}}\cdot (\xb^\top_j \wb^\star)}_{\mathrm{edge~influence}} \\
        &~~~~- \underbrace{\langle(\xb_j^\top \wb^\star)\zb_i+(\xb_i^\top \wb^\star) \zb_j, \zb_k\rangle_{\Hb^{-1}}}_{\mathrm{magnitude~weight~influence}} \\
        &~~~~ 
        + \underbrace{\langle (y_j - \zb_j^\top \wb^\star)\xb_i+ (y_i - \zb_i^\top \wb^\star) \xb_j, \zb_k\rangle_{\Hb^{-1}}}_{\mathrm{error~weight~influence}},
    \end{align*}
    where $\Zb := \Cb\Xb$ and  $\zb_l$ is the $l$‐th row of $~\Zb$, $\boldsymbol{1}{\{\cdot\}}$ denotes the indicator function, $\Hb$ is the Hessian of the least squares loss evaluated at $\wb^\star$, and $\langle \cdot, \cdot \rangle_{\Ab}$ denotes the weighted inner product with any positive semi-definite (PSD) matrix $\Ab$.
\end{theorem}
\begin{proof}
    Please see Theorem~\ref{cor:lgcn_out_pernode_infl_spec_one_edge} in \AppendixName~\ref{sec:proof}.
\end{proof}
In the theorem above, the third error weight influence term is governed by the empirical residuals at the optimal weight, $(y_i - \zb_i^\top \wb^\star)$ and $(y_j - \zb_j^\top \wb^\star)$. Under the assumption of a well‐trained (i.e., learnable) graph ML problem, these residuals become negligibly small, making the error weight influence term nearly zero. 

Meanwhile, the first edge influence term activates only when $v_k$ coincides with one of the endpoints $v_i$ or $v_j$, producing a direct and large perturbation. The second magnitude weight influence term, which depends on the inner‐product similarity $\langle(\xb_j^\top \wb^\star)\zb_i + (\xb_i^\top \wb^\star)\zb_j, \zb_k\rangle_{\Hb^{-1}}$, is also relatively larger when $v_k\in\{v_i,v_j\}$ and smaller otherwise. As a result, for $v_k = v_i$ or $v_j$, the sum of a substantial positive edge influence and a significant negative magnitude weight influence yields a dramatic net effect on the model output. Therefore, the influence on the endpoints of unlearned edges may be more significant than on other nodes. With this, we reasonably assume their confidence will drop and make the following claim:
\begin{claim}[Confidence Pitfall]\label{clm:prob_trend}
    The average model confidence of nodes appearing in unlearned edges is lower than that of other nodes, i.e., 
    \begin{align*}
        \mathrm{AvgConf}(\Vcal_{\Qcal^+_{\mathrm{un}}}) < \mathrm{AvgConf}(\Vcal \setminus \Vcal_{\Qcal^+_{\mathrm{un}}}),
    \end{align*}
    where $\Vcal_{\Qcal_0} := \{v_i:(v_i, v_j) \in \Qcal_0 ~\mathrm{ or }~ (v_j, v_i) \in \Qcal_0 \}$ is the set of all nodes involved in query subset $\Qcal_0$, and $\mathrm{AvgConf}(\Vcal_0):= |\Vcal_0|^{-1} \sum_{v_i \in \Vcal_0} \max_{\ell}\pb_{i, \ell}$ is the average model confidence of $\Vcal_0$.
\end{claim}
We also support this claim with a preliminary study that analyzes GNN model confidence at varying distances from unlearned edges on real-world datasets, as detailed in \AppendixName~\ref{sec:prelim_exp}.


\subsection{The Proposed TrendAttack}\label{sec:trendattack}


In the previous subsection, we presented two key claims that guide our model design. First, Claim~\ref{clm:prob_sim_gap} indicates that predicting the existence of a link for a node pair $(v_i,v_j)$ may require different similarity thresholds for the model outputs: a lower threshold for unlearned edges $(v_i,v_j) \in \Qcal^+_{\mathrm{un}}$ and a higher threshold for memorized edges $(v_i,v_j) \in \Qcal^+_{\mathrm{mem}}$ to distinguish them from non-existent edges. Second, Claim~\ref{clm:prob_trend} shows that we can distinguish these two types of edges using the model's confidence trend, which informs which threshold to apply. In this subsection, we instantiate these principles into concrete model components, resulting in a simple, flexible, and effective inversion framework called \textbf{TrendAttack}.

\subsubsection{\textbf{Attack Model}}
It is well established that the edge existence between $v_i$ and $v_j$ can be inferred from the similarity between their features and output probabilities~\cite{olatunji2021membership,he2021stealing}. Since prior GNN privacy attacks have developed a variety of similarity computation frameworks for this task, we adopt a general formulation that computes a scalar similarity score between $v_i$ and $v_j$ as $\phi([\xb_i~||~\pb_i], [\xb_j~||~\pb_j])$, where $\phi$ is an arbitrary similarity function. This formulation is flexible and covers several existing MIA methods. For example, when $\phi([\xb_i~||~\pb_i], [\xb_j~||~\pb_j]) = \hb^\top \cdot \mathrm{MLP}(\pb_i, \pb_j)$, the model recovers MIA-GNN~\cite{olatunji2021membership}. It can also recover the StealLink attack~\cite{he2021stealing} by incorporating their manually defined similarity features (e.g., cosine similarity, JS divergence, etc.) into $\phi$. This flexible structure allows our attack model to incorporate any existing MIA method as the backbone, ensuring it performs at least as well as prior approaches.

In addition, as indicated by Claim~\ref{clm:prob_sim_gap} and Claim~\ref{clm:prob_trend}, a key challenge in graph unlearning inversion is to distinguish nodes associated with unlearned edges from others and to apply an adaptive similarity threshold. Therefore, relying solely on $\phi(\cdot, \cdot)$ may be insufficient to capture this complexity. Thus, to explicitly capture confidence trends and address Claim~\ref{clm:prob_trend}, we define scalar-valued confidence trend features for each node $v_i \in \Vcal$ as follows:
\begin{align}\label{eq:trend_feature_1}
    \tau^{(0)}_i :=  & ~ \max_{\ell}\pb_{i, \ell}, \notag \\ 
    \tau^{(l)}_i :=  & ~ \sum_{v_j \in \wh{\Ncal}^{(1)}(v_i)} \wt{A}_{i, j} \tau^{(l-1)}_i \quad (l = 1,2,\ldots,k), 
\end{align}
where $k$ is the maximum order, $\wt{\Ab} := \Db^{-0.5} \Ab \Db^{-0.5}$ is the normalized adjacency matrix, and $\wt{A}_{i,j}$ is its $(i,j)$-th entry. 

Here, the zeroth-order feature $\tau^{(0)}_i$ is the model confidence of $f_{\Gcal_{\mathrm{un}}}(\cdot; \thetab_{\mathrm{un}})$ for node $v_i$, based on its output probability $\pb_i$. Higher-order features $\tau^{(l)}_i$ are obtained by aggregating lower-order features from the neighborhood of $v_i$. Note that computing $\wt{A}_{i,j}$ may not necessarily require the full adjacency matrix $\Ab$, since the normalization can be done using only the nodes accessible to the attacker.

Next, we define the confidence difference between orders as $\Delta \tau_i^{(k)} := \tau_i^{(k)} - \tau_i^{(k-1)}$ for $k \geq 1$. Based on Claim~\ref{clm:prob_trend}, the sign of these differences (e.g., between zeroth and first order, and first and second order) serves as a useful signal for identifying nodes that are endpoints of unlearned edges. Thus, we define the following binary-valued trend feature for each node $v_i$:
\begin{align}\label{eq:trend_feature_2}
    \wt{\boldsymbol{\tau}}_i^{(k)}
:= 
\mathop{\Big\|}_{l=1}^k 
\left[   \boldsymbol{1}\{\Delta\tau_i^{(l)}<0\}, 
  \boldsymbol{1}\{\Delta\tau_i^{(l)}>0\}   \right],
\end{align}
where $\|$ denotes vector concatenation, $\boldsymbol{1}\{\cdot\}$ denotes the indicator function, and the resulting $k$-th order trend feature $\wt{\boldsymbol{\tau}}_i^{(k)}$ captures whether the confidence difference is positive or negative at each hop of neighbor for node $v_i$.

In practice, we find that using only the first-order trend feature (i.e., $\wt{\boldsymbol{\tau}}_i^{(1)}$) is sufficient to achieve superior attack performance (see Figure~\ref{fig:ablation_trend_order} in \AppendixName~\ref{sec:more_experiments}). This means we only need the black-box model output for the first-hop neighbors of the node pair of interest $(v_i, v_j)$, demonstrating that our method only requires minimal additional knowledge. The final attack model is given as:
\begin{align}\label{eq:attack_model}
    g(v_i, v_j):=~ \sigma(\underbrace{\phi([\xb_i~||~\pb_i], [\xb_j~||~\pb_j])}_{\mathrm{Similarity}} + \underbrace{\hb^\top [\wt{\boldsymbol{\tau}}_i~||~\wt{\boldsymbol{\tau}_j}]}_{\mathrm{Trend}}).
\end{align}
In the equation above, the similarity term estimates membership from similarity in features and probabilities, and the trend term compensates for the similarity gap by distinguishing node types as indicated by Claim~\ref{clm:prob_sim_gap}. The sigmoid function $\sigma(\cdot)$ maps the output to $[0,1]$, with 0 indicating non-member and 1 indicating member. This design allows the attack model to incorporate an adaptive threshold based on node-specific trend features, while leveraging any existing MIA framework with $\phi(\cdot,\cdot)$ as its base.

\subsubsection{\textbf{Shadow Victim Model Training}} To train an attack model that predicts whether $(v_i,v_j) \in \Gcal_{\mathrm{orig}}$, we need ground-truth labels to optimize its parameters. To obtain these labels, we simulate the training and unlearning process of the target GNN by creating a shadow dataset $\Gcal^{\mathrm{sha}}$. We first train a shadow victim model on this dataset and then unlearn a subset of edges, allowing us to generate the necessary ground-truth labels for training the attack model. Below, we provide details of the shadow and attack models.

To construct an attack model $g(v_i, v_j)$ that accurately predicts membership information given partial knowledge of the unlearned graph $\Kcal_\Qcal := (\Pcal_\Qcal, \Fcal_\Qcal)$, we first simulate the victim model’s behavior using a shadow dataset $\Gcal^{\mathrm{sha}}$ and a shadow victim GNN $f'_{\Gcal_{\mathrm{un}}^{\mathrm{sha}}}(\cdot;\thetab_{\mathrm{un}}')$. This model is trained on a node classification task and then unlearns a small subset of edges $\Delta\Gcal^{\mathrm{sha}}$. Specifically, the pre- and post-unlearning parameters are obtained similarly via Eq.~\eqref{eq:param_orig} and Eq.~\eqref{eq:param_unlearned} in Section~\ref{sec:preliminaries}.


This training setup explicitly models unlearning behavior, in contrast to prior MIA approaches that rely solely on the original model $f'_{\Gcal^{\mathrm{sha}}_{\mathrm{orig}}}(\cdot;\thetab_{\mathrm{orig}}')$ and do not account for the effects of unlearning. Using this shadow victim model, we construct a shadow query set $\Qcal_{\mathrm{sha}}$ and its associated partial knowledge $\Kcal_{\Qcal_{\mathrm{sha}}} := (\Pcal_{\Qcal_{\mathrm{sha}}}, \Fcal_{\Qcal_{\mathrm{sha}}})$ from the features, connectivity, and outputs of $f'$ on $\Gcal^{\mathrm{sha}}$, which are then used to train the attack model. 

\subsubsection{\textbf{Attack Model Training}}
We train the attack model $g$ on the shadow dataset $\Gcal^{\mathrm{sha}}$ using the outputs of the shadow victim model $f'$. Given the shadow query set $\Qcal_{\mathrm{sha}}$ with known membership labels, we optimize a link‐prediction loss:
\begin{align}\label{eq:attack_train}
    \Lcal_{\mathrm{attack}}(\Qcal_{\mathrm{sha}}) := \!\!\!\!\!\!\!\sum_{(v_i, v_j)\in \Qcal_{\mathrm{sha}}^+} \!\!\!\!\!\!\log g(v_i, v_j) -\!\!\!\!\!\!\!\sum_{(v_i, v_j)\in \Qcal_{\mathrm{sha}}^-} \!\!\!\!\!\!\log(1 - g(v_i, v_j)).
\end{align}
The pseudo-code for building the shadow victim model and training the attack model $g$ on the shadow dataset $\Gcal^{\mathrm{sha}}$ is provided in Algorithm~\ref{alg:shadow_training} in \AppendixName~\ref{sec:model_details}.

\subsubsection{\textbf{Performing TrendAttack}.}  
After training, we apply the attack model $g$ to the target unlearned graph $\Gcal_{\mathrm{un}}$ by computing, for each query pair $(v_i,v_j)\in\Qcal$, the feature-probability similarity and trend features from the real model output $f_{\Gcal_{\mathrm{un}}}(\cdot;\thetab_{\mathrm{un}})$ and partial graph knowledge $\Kcal$. We then evaluate $g(v_i,v_j)$ on predicting whether $(v_i,v_j)\in\Ecal_{\mathrm{orig}}$. By combining both similarity and trend signals learned on the shadow graph, $g$ effectively supports unlearning inversion on the real unlearned graph. The pseudo-code for the attack process is provided in Algorithm~\ref{alg:attack} in \AppendixName~\ref{sec:model_details}.

\section{Experiments}\label{sec:experiments}

In this section, we describe our experimental setup and present the main empirical results of this work.

\subsection{Experiment Settings}\label{sec:exp_settings}

\textbf{Datasets.} We evaluate our attack method on four standard graph ML benchmark datasets: Cora~\cite{yang2016revisiting}, Citeseer~\cite{yang2016revisiting}, Pubmed~\cite{yang2016revisiting}, and LastFM-Asia~\cite{yang2016revisiting}. To construct the shadow dataset $\Gcal_{\mathrm{sha}}$ and the real attack dataset $\Gcal_{\mathrm{orig}}$, which should share similar distributions, we use METIS to partition the entire training graph into two balanced subgraphs, one for shadow and one for attack. Following the setting in GIF~\cite{wu2023gif}, we use 90\% of the nodes in each subgraph for training and the remaining for testing. All edges between the two subgraphs are removed, and there are no shared nodes, simulating real-world scenarios where shadow and attack datasets are disconnected.

\noindent\textbf{Victim Model.} We adopt a two-layer GCN~\cite{kipf2017semi} as the victim model, trained for node classification. Our GCN implementation follows the standard settings in GIF~\cite{wu2023gif} for consistency and reproducibility. After training the GCN on both the shadow and attack datasets, we perform edge unlearning on 5\% of randomly selected edges using standard graph unlearning methods, including GIF~\cite{wu2023gif}, CEU~\cite{wu2023certified}, and Gradient Ascent (GA)~\cite{wu2024graphguard}. We follow the official settings from each method's paper and codebase to ensure faithful reproduction.

\noindent\textbf{Baselines.} We do not compare with unlearning inversion attacks~\cite{hu2024learn} that require access to pre-unlearning models, which is unrealistic under our setting. To demonstrate that unlearning inversion cannot be solved by naive link prediction, we evaluate a simple GraphSAGE model and a state-of-the-art link prediction method, NCN~\cite{wang2024neural}. For membership inference attacks (MIAs) under the same black-box assumption as ours, we consider three widely used methods: StealLink~\cite{he2021stealing}, MIA-GNN~\cite{olatunji2021membership}, and GroupAttack~\cite{zhang2023demystifying}. We follow their official hyperparameter settings from their respective papers and repositories. All experiments are run five times, and we report the mean and the standard error.

\noindent\textbf{Evaluation Metrics.} We evaluate our attack on the attack dataset using a specific query set $\Qcal$. We randomly select 5\% of the edges as unlearned edges $\Qcal^+_{\mathrm{un}}$ (label 1), and another 5\% of remaining edges as regular member edges $\Qcal^+_{\mathrm{mem}}$ (label 1). We then sample an equal number of non-existent (negative) edges $\Qcal^-$ (label 0) such that $|\Qcal^-| = |\Qcal^+_{\mathrm{un}}| + |\Qcal^+_{\mathrm{mem}}|$. We use AUC as the primary evaluation metric. Since both unlearned edges and regular member edges are important for membership inference, we compute the overall AUC on the full query set $\Qcal = \Qcal^+_{\mathrm{un}} \cup \Qcal^+_{\mathrm{mem}} \cup \Qcal^-$.

More experimental details, including model parameters, baselines, and datasets, are in \AppendixName~\ref{sec:append_exp_settings}.

\subsection{Unlearning Inversion Attack Performance}\label{sec:comp_exp}

\begin{table*}[!t]
\caption{\textbf{Main Comparison Results}. We present the AUC scores for attack methods across different edge groups. The best results are highlighted in \textbf{bold}, while the second-best results are \underline{underlined}.}
\label{tab:main}
\centering
\vskip -1em
\resizebox{\textwidth}{!}{
\begin{tabular}{llccc|ccc|ccc|ccc}
\toprule
\multirow{2}{*}{\textbf{Unlearn method}} & \multirow{2}{*}{\textbf{Attack}}
  & \multicolumn{3}{c}{\textbf{Cora}}
  & \multicolumn{3}{c}{\textbf{Citeseer}}
  & \multicolumn{3}{c}{\textbf{Pubmed}}
  & \multicolumn{3}{c}{\textbf{LastFM-Asia}} \\
\cmidrule(lr){3-5} \cmidrule(lr){6-8} \cmidrule(lr){9-11} \cmidrule(lr){12-14}
 & & Unlearned & Original & All
   & Unlearned & Original & All
   & Unlearned & Original & All
   & Unlearned & Original & All \\
\midrule
\textbf{GIF} & GraphSAGE    & 0.5356 & 0.5484 & 0.5420   & 0.5275 & 0.5216 & 0.5246   & 0.6503 & 0.6457 & 0.6480   & 0.6914 & 0.6853 & 0.6884 \\
             & NCN        & 0.7403 & 0.7405 & 0.7404   & 0.6750 & 0.6872 & 0.6811   & 0.6661 & 0.6718 & 0.6690   & 0.7283 & 0.7273 & 0.7278 \\
             & MIA-GNN    & 0.7547 & 0.7916 & 0.7732   & 0.7802 & 0.8245 & 0.8023   & 0.7028 & 0.7902 & 0.7465   & 0.5955 & 0.5744 & 0.5850 \\
             & StealLink  & 0.7841 & 0.8289 & 0.8065   & 0.7369 &\underline{0.8404} & 0.7887   & 0.8248 & 0.8964 & 0.8606   & \underline{0.8472} & \underline{0.9037} & \underline{0.8755} \\
             & GroupAttack& 0.7982 & 0.8053 & 0.8018   & 0.7771 & 0.7618 & 0.7695   & 0.6497 & 0.6554 & 0.6525   & 0.7858 & 0.7850 & 0.7854 \\
             & \textbf{TrendAttack-MIA}   & \underline{0.8240} & \underline{0.8448} & \underline{0.8344}   & \underline{0.8069} & 0.8078 & \underline{0.8073}   & \underline{0.8950} & \underline{0.9171} & \underline{0.9060}   & 0.7795 & 0.7649 & 0.7722 \\
             & \textbf{TrendAttack-SL} & \textbf{0.8309} & \textbf{0.8527} & \textbf{0.8418}   & \textbf{0.8410} & \textbf{0.8430} & \textbf{0.8420}   & \textbf{0.9524} & \textbf{0.9535} & \textbf{0.9529}   & \textbf{0.9078} & \textbf{0.9134} & \textbf{0.9106} \\
\midrule
\textbf{CEU} & GraphSAGE    & 0.5356 & 0.5484 & 0.5420   & 0.5275 & 0.5216 & 0.5246   & 0.6503 & 0.6457 & 0.6480   & 0.6914 & 0.6853 & 0.6884 \\
             & NCN        & 0.7403 & 0.7405 & 0.7404   & 0.6750 & 0.6872 & 0.6811   & 0.6661 & 0.6718 & 0.6690   & 0.7283 & 0.7273 & 0.7278 \\
             & MIA-GNN    & 0.7458 & 0.7810 & 0.7634   & 0.7718 & 0.8248 & 0.7983   & 0.6626 & 0.6561 & 0.6593   & 0.6004 & 0.5811 & 0.5908 \\
             & StealLink  & 0.7901 & \underline{0.8486} & 0.8193   & 0.7643 & \textbf{0.8450} & \underline{0.8046}   & 0.8467 & 0.9088 & 0.8777   & \underline{0.8416} & \underline{0.9021} & \underline{0.8719} \\
             & GroupAttack& 0.7941 & 0.7976 & 0.7958   & 0.7557 & 0.7458 & 0.7508   & 0.6388 & 0.6430 & 0.6409   & 0.7845 & 0.7817 & 0.7831 \\
             & \textbf{TrendAttack-MIA}   & \underline{0.8194} & 0.8333 & \underline{0.8263}   & \underline{0.7933} & 0.8041 & 0.7987   & \underline{0.8982} & \underline{0.9184} & \underline{0.9083}   & 0.7676 & 0.7576 & 0.7626 \\
             & \textbf{TrendAttack-SL} & \textbf{0.8467} & \textbf{0.8612} & \textbf{0.8539}   & \textbf{0.8514} & \underline{0.8400} & \textbf{0.8457}   & \textbf{0.9550} & \textbf{0.9579} & \textbf{0.9565}   & \textbf{0.9037} & \textbf{0.9088} & \textbf{0.9062} \\
\midrule
\textbf{GA}  & GraphSAGE    & 0.5356 & 0.5484 & 0.5420   & 0.5275 & 0.5216 & 0.5246   & 0.6503 & 0.6457 & 0.6480   & 0.6914 & 0.6853 & 0.6884 \\
             & NCN        & 0.7403 & 0.7405 & 0.7404   & 0.6750 & 0.6872 & 0.6811   & 0.6661 & 0.6718 & 0.6690   & 0.7283 & 0.7273 & 0.7278 \\
             & MIA-GNN    & 0.7676 & 0.8068 & 0.7872   & 0.7798 & 0.8353 & 0.8076   & 0.7242 & 0.8039 & 0.7641   & 0.6200 & 0.6057 & 0.6129 \\
             & StealLink  & 0.7862 & 0.8301 & 0.8082   & 0.7479 & \underline{0.8431} & 0.7955   & 0.8203 & 0.8898 & 0.8550   & \underline{0.8342} & \underline{0.8947} & \underline{0.8644} \\
             & GroupAttack& 0.7945 & 0.8042 & 0.7993   & 0.7662 & 0.7563 & 0.7613   & 0.6458 & 0.6493 & 0.6475   & 0.7746 & 0.7760 & 0.7753 \\
             & \textbf{TrendAttack-MIA}   & \underline{0.8193} & \textbf{0.8397} & \underline{0.8295}   & \underline{0.8080} & 0.8249 & \underline{0.8165}   & \underline{0.8932} & \underline{0.9158} & \underline{0.9045}   & 0.7255 & 0.7099 & 0.7177 \\
             & \textbf{TrendAttack-SL} & \textbf{0.8270} & \underline{0.8382} & \textbf{0.8326}   & \textbf{0.8628} & \textbf{0.8614} & \textbf{0.8621}   & \textbf{0.9531} & \textbf{0.9537} & \textbf{0.9534}   & \textbf{0.9041} & \textbf{0.9119} & \textbf{0.9080} \\
\bottomrule
\end{tabular}
}
\vskip -1em
\end{table*}

In this study, we present a comprehensive comparison of all our baselines mentioned in Section~\ref{sec:exp_settings}, and the results are shown in Table~\ref{tab:main}. Specifically, to demonstrate that our method better captures unlearned edges, we evaluate AUC across three groups: \emph{Unlearned} ($\Qcal^+_{\mathrm{un}} \cup \Qcal^-$), \emph{Original} ($\Qcal^+_{\mathrm{mem}} \cup \Qcal^-$), and \emph{All} (the entire $\Qcal$). A small gap between Unlearned and Original AUCs indicates a better balance in the model's predictions. The full table with standard deviation can be found in \AppendixName~\ref{sec:more_experiments}. 

We consider two variants of our method, TrendAttack-MIA and TrendAttack-SL, which adopt MIA-GNN~\cite{olatunji2021membership} and StealLink~\cite{he2021stealing} as their respective backbone models. From the table, we observe: \textbf{(i)} Compared with their MIA prototypes, both variants of TrendAttack significantly improve the gap between Unlearned and Original AUCs, as well as the overall AUC. This demonstrates the effectiveness of our proposed trend-based attack and unlearning-aware training design; \textbf{(ii)} Overall, our proposed attack, especially TrendAttack-SL, achieves the best performance on most datasets and unlearning methods. The only failure case is on Citeseer + CEU + Original, where the gap to StealLink is small. This does not undermine our contribution, as our primary focus is on the unlearned sets, and this case is a special exception. TrendAttack-MIA achieves the second-best results across many settings, while its relatively lower performance is mainly due to the weak backbone model (MIA-GNN), not the attack framework itself; and \textbf{(iii)} Among all baselines, attack-based methods consistently outperform link prediction methods, showing that link prediction alone cannot effectively solve the unlearning inversion problem. Among the attack baselines, StealLink is the strongest, while MIA-GNN performs poorly as it only uses output probabilities and ignores features.

\subsection{Ablation Studies}

\begin{figure*}[!t]
  \centering
  \begin{subfigure}[b]{0.29\textwidth}
    \centering
    \includegraphics[width=\linewidth]{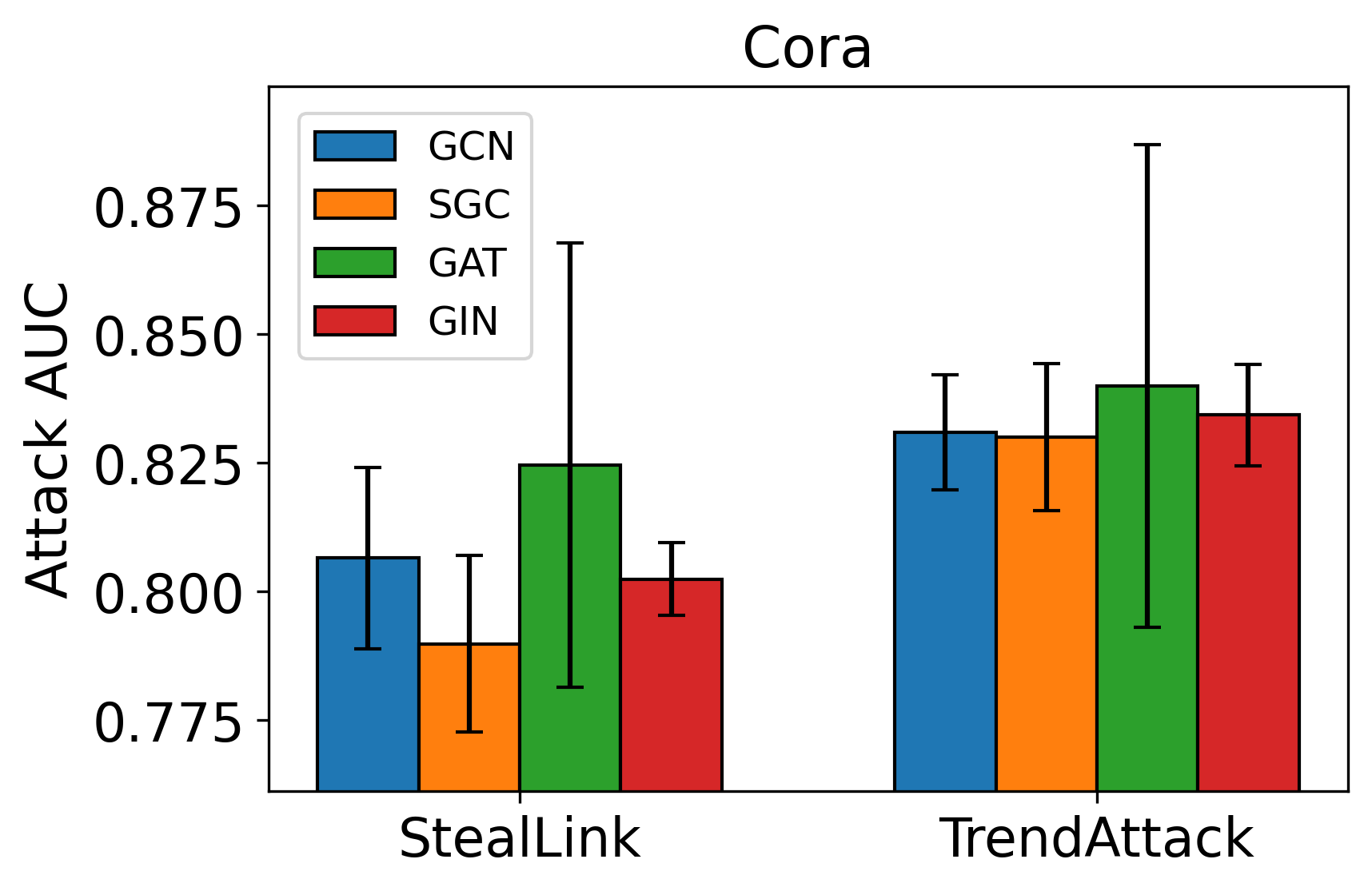}
    \caption{Cora results.}
    \label{fig:ablation_cora}
  \end{subfigure}\hfill
  \begin{subfigure}[b]{0.29\textwidth}
    \centering
    \includegraphics[width=\linewidth]{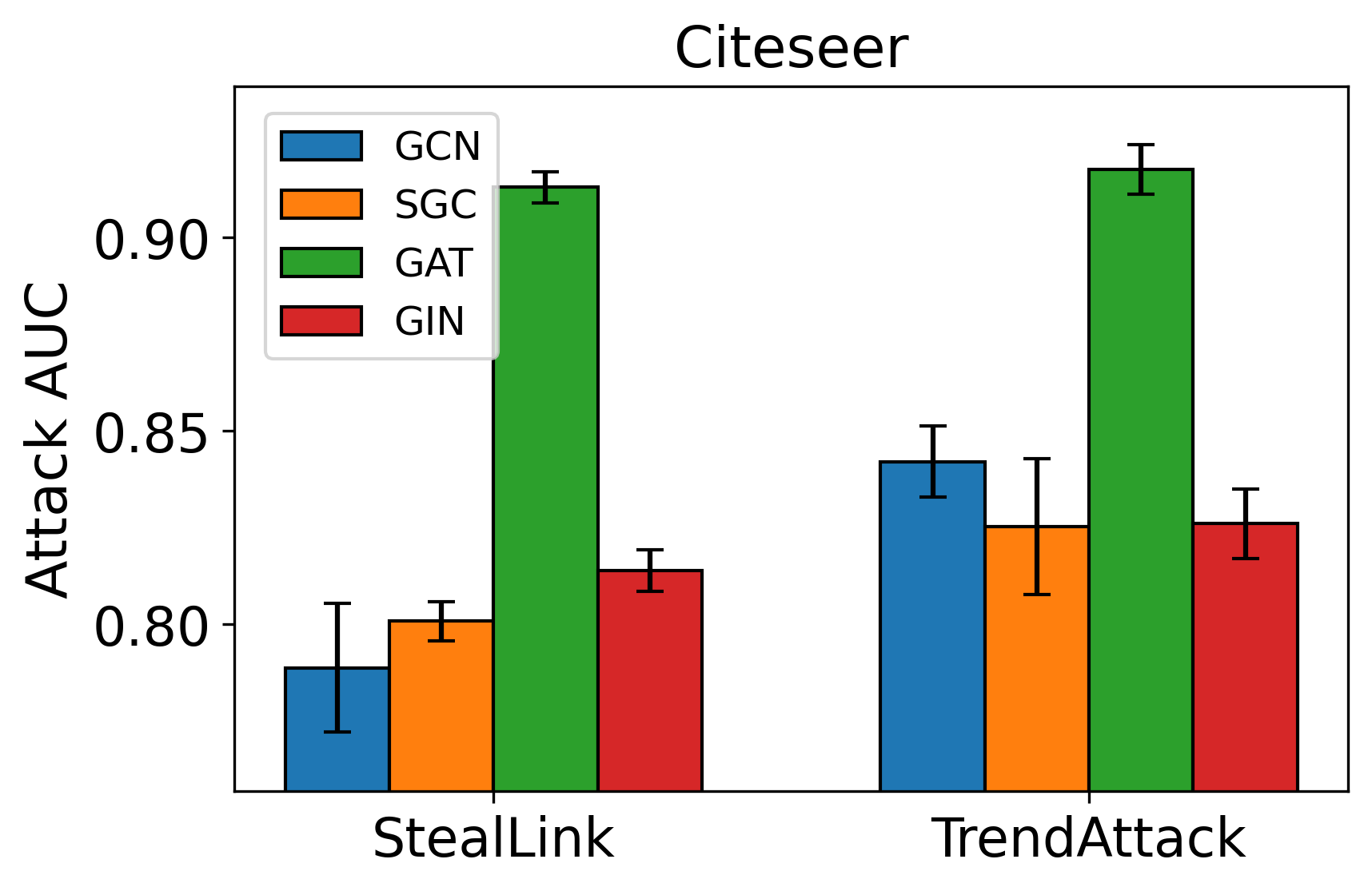}
    \caption{Citeseer results.}
    \label{fig:ablation_citeseer}
  \end{subfigure}\hfill
  \begin{subfigure}[b]{0.29\textwidth}
    \centering
    \includegraphics[width=\linewidth]{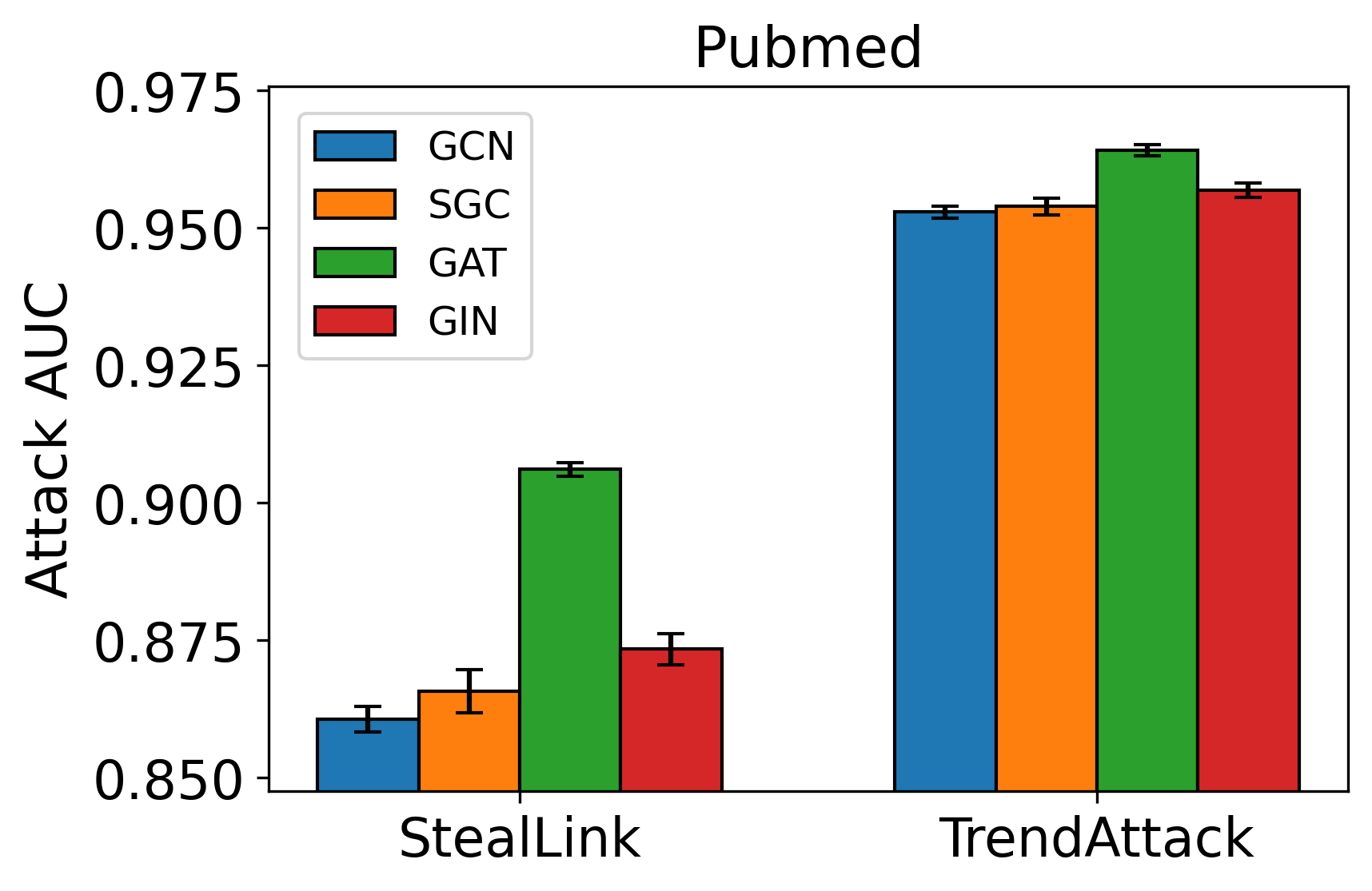}
    \caption{Pubmed results.}
    \label{fig:ablation_pubmed}
  \end{subfigure}
  \vskip -1em
  \caption{\textbf{Ablation study on the impact of victim models}.}
  \label{fig:ablation_model}
\end{figure*}

\textbf{Impact of Victim Models.} From Table~\ref{tab:main}, we observe that the proposed attack remains stable against unlearning methods. In this study, we further investigate whether TrendAttack's performance maintains stability with respect to changes in victim models. Specifically, we fix the unlearning method to GIF and use TrendAttack-SL as our model variant. We compare it against the best-performing baseline, StealLink, with results shown in Figure~\ref{fig:ablation_model}. From the figure, we can find that our proposed model consistently outperforms the baseline, demonstrating stability across different victim models. An interesting observation is that performance significantly improves on GAT compared to other baselines, suggesting that GAT may be more vulnerable to model attacks.

\begin{figure*}[!ht]
  \vskip -1em
  \centering
  \begin{subfigure}[b]{0.29\textwidth}
    \centering
    \includegraphics[width=\linewidth]{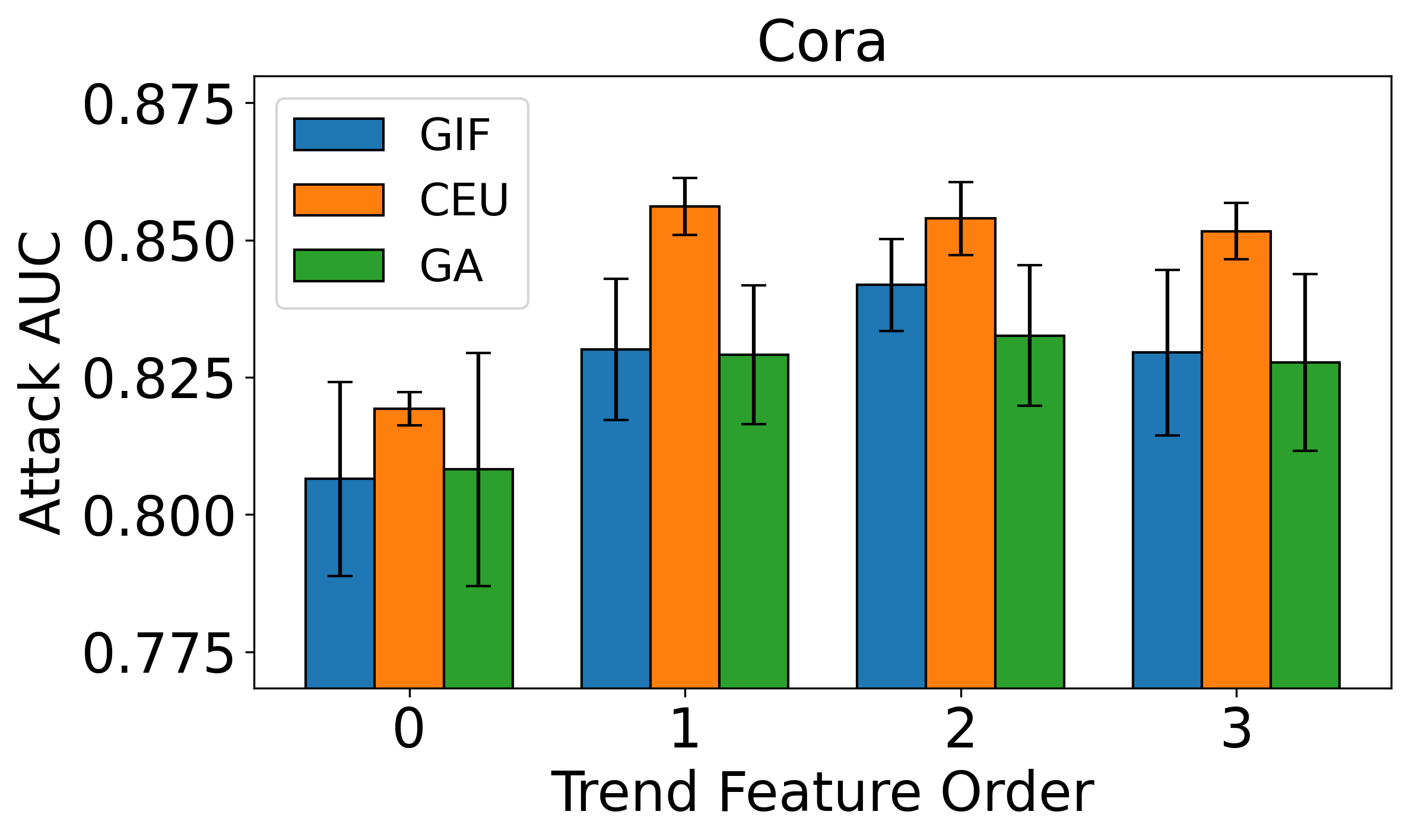}
    \caption{Cora results.}
    \label{fig:trend_order_cora}
  \end{subfigure}\hfill
  \begin{subfigure}[b]{0.29\textwidth}
    \centering
    \includegraphics[width=\linewidth]{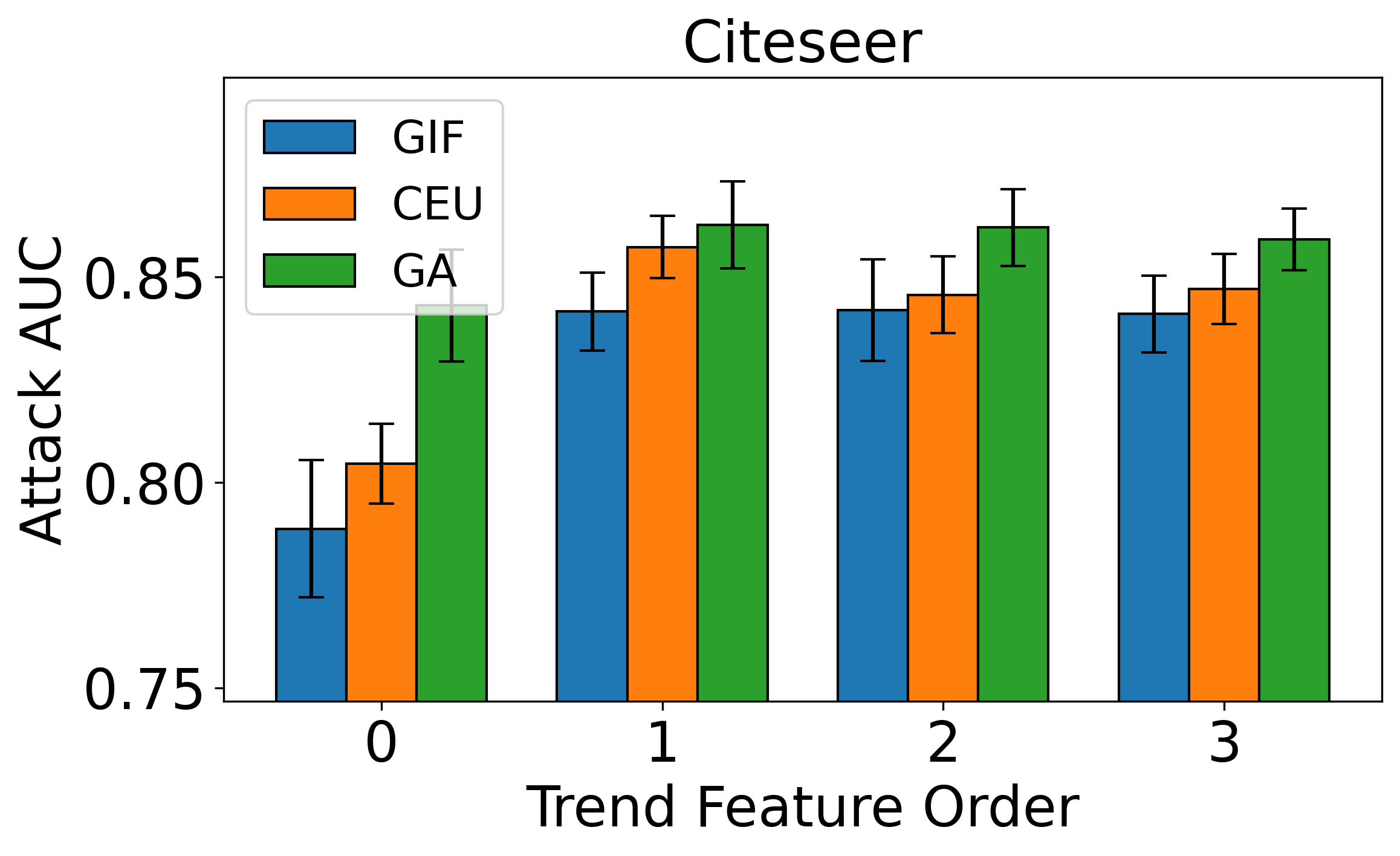}
    \caption{Citeseer results.}
    \label{fig:trend_order_citeseer}
  \end{subfigure}\hfill
  \begin{subfigure}[b]{0.29\textwidth}
    \centering
    \includegraphics[width=\linewidth]{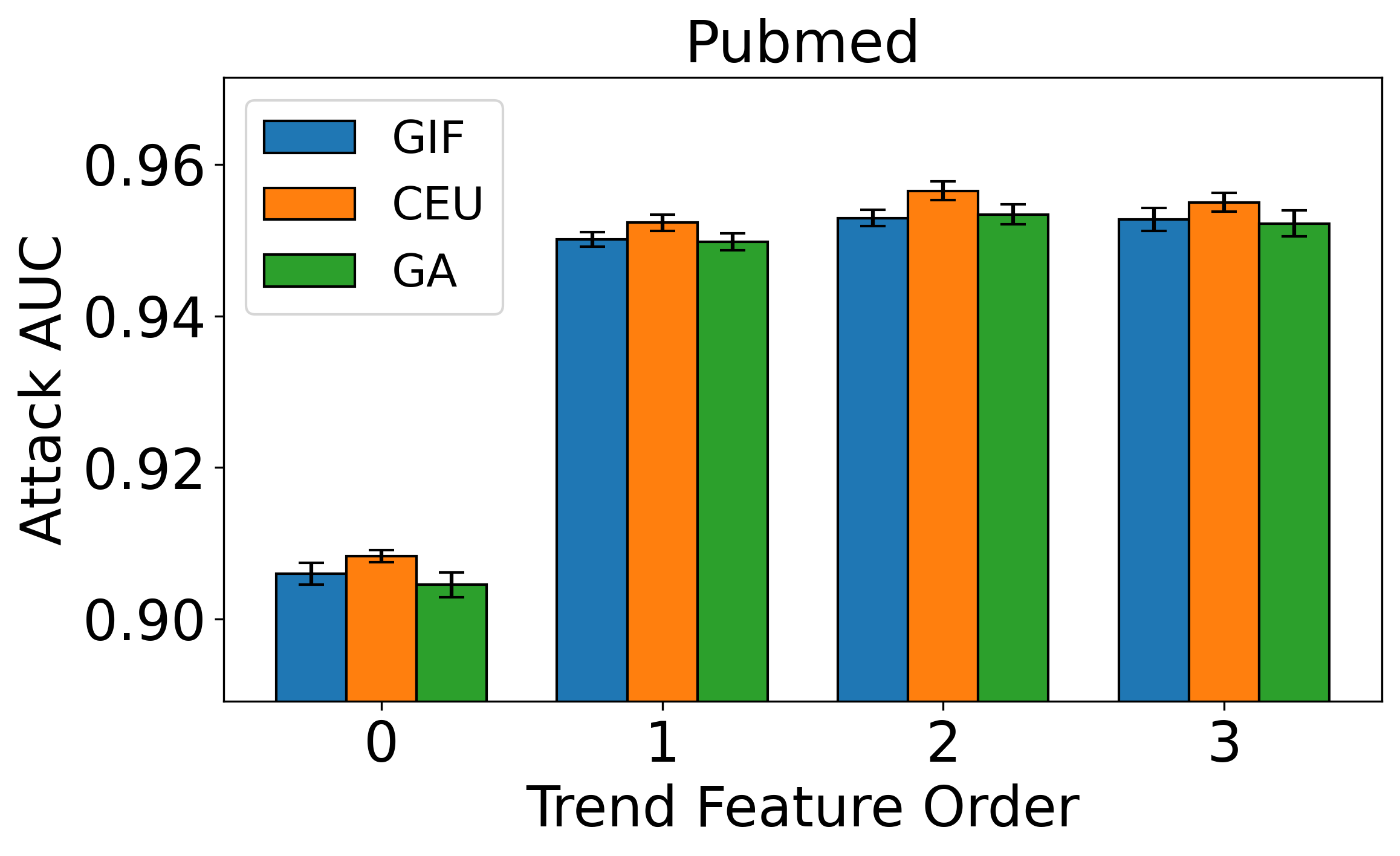}
    \caption{Pubmed results.}
    \label{fig:trend_order_pubmed}
  \end{subfigure}
  \vskip -1em
  \caption{\textbf{Ablation study on the impact of trend feature order}. Overall attack AUC as a function of the trend feature order (0–3) for three unlearn methods across three datasets.}
  \label{fig:ablation_trend_order}
  \vskip -1em
\end{figure*}

\noindent{\bf Impact of Trend Feature Orders.} 
We also study the influence of trend feature orders on the performance of the proposed TrendAttack. Following the experimental setup described in Section~\ref{sec:comp_exp}, we employ the most effective variant, TrendAttack-SL, for this ablation study. Specifically, we vary the trend feature order $k$ (see Algorithm~\ref{alg:attack}) to assess the trade-off between incorporating additional neighborhood information and achieving high attack performance. The results are presented in Figure~\ref{fig:ablation_trend_order}, from which we make the following observations:
\textbf{(i)} Incorporating trend features of orders 1, 2, or 3 significantly improves attack performance compared to the 0-th order (which corresponds to a degenerate form of TrendAttack that reduces to a simple StealLink attack). This highlights the effectiveness of our proposed trend feature design; and \textbf{(ii)} The attack performance remains relatively stable across orders 1 to 3, indicating that the method is robust to the choice of trend order. Notably, lower-order features (e.g., order 1) still have strong performance while requiring less auxiliary neighborhood information, making them more practical in real-world inversion attack scenarios.

\noindent{\bf Impact of Edge Unlearning Ratio.} 
Due to space limitations, more experiments on the impact of edge unlearning ratio can be found in Figure~\ref{fig:ablation_ratio} in \AppendixName~\ref{sec:more_experiments}.

\begin{figure*}[!ht]
    \centering
    \begin{subfigure}{0.29\textwidth}
        \centering
        \includegraphics[width=\linewidth]{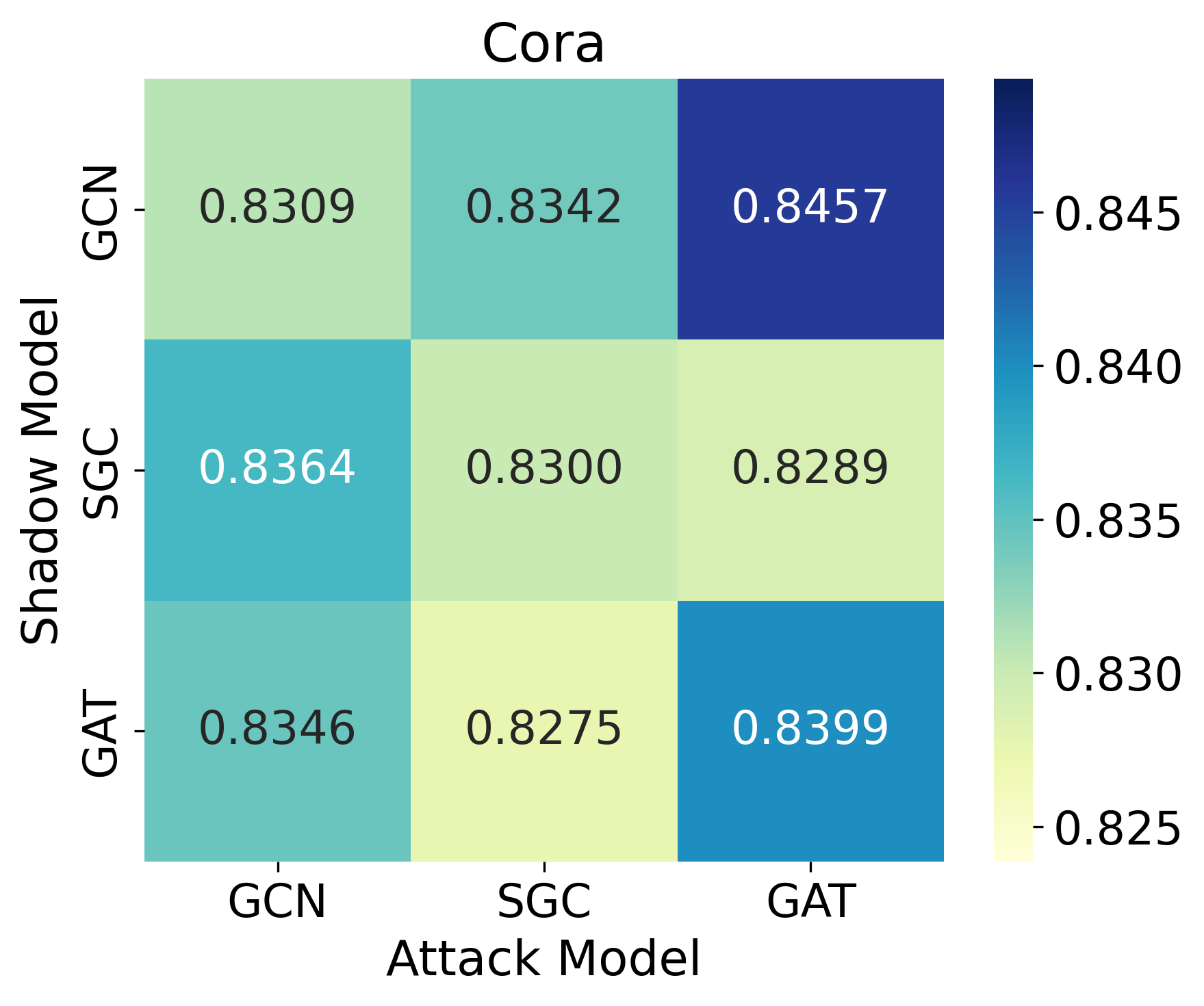}
        \caption{Cora results.}
    \end{subfigure}
    \hfill
    \begin{subfigure}{0.29\textwidth}
        \centering
        \includegraphics[width=\linewidth]{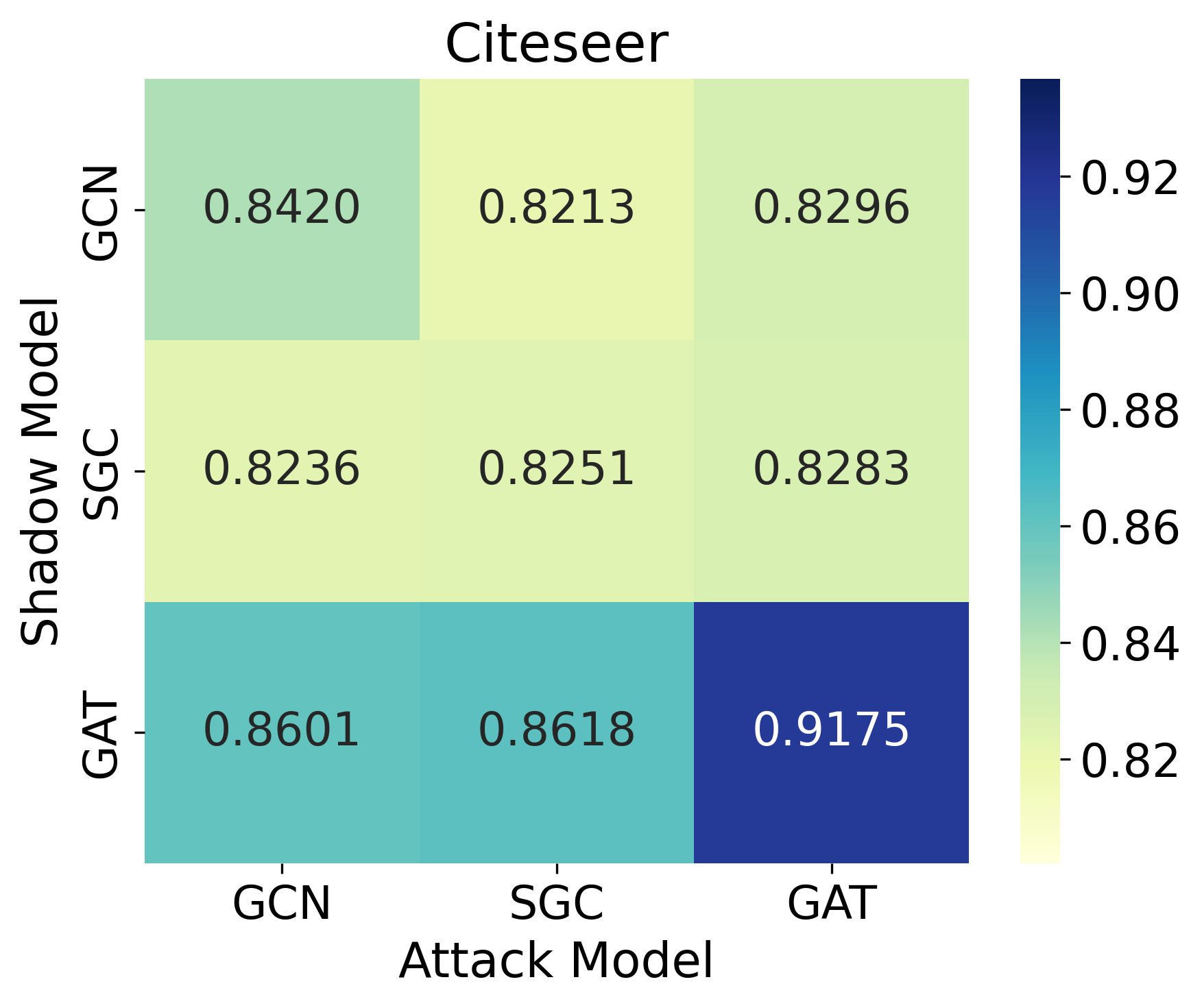}
        \caption{Citeseer results.}
    \end{subfigure}
    \hfill
    \begin{subfigure}{0.29\textwidth}
        \centering
        \includegraphics[width=\linewidth]{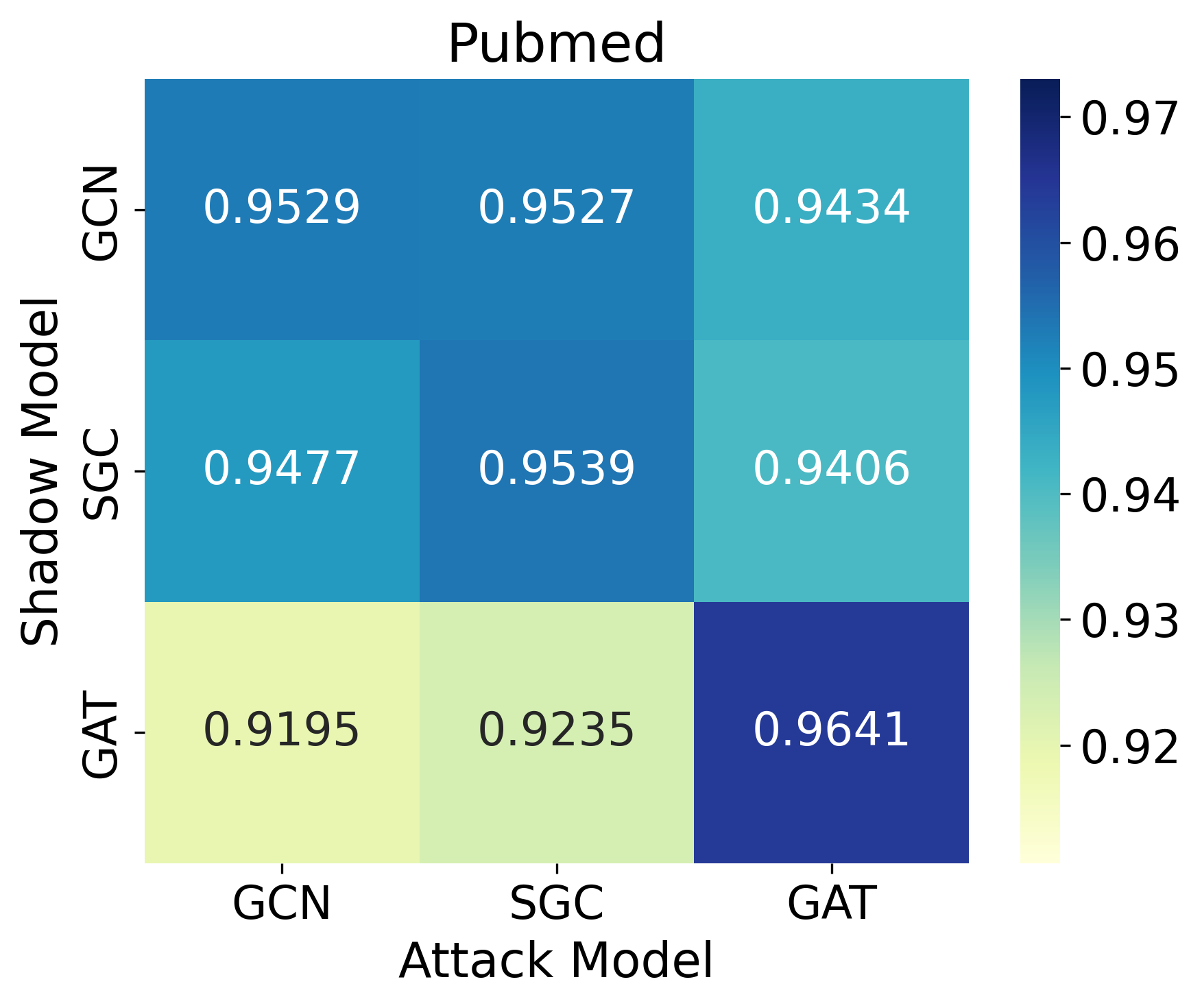}
        \caption{Pubmed results.}
    \end{subfigure}
    \vskip -0.1in
    \caption{Transferability of attack models across different GNN architectures on Cora, Citeseer, and Pubmed. Each heatmap shows attack AUC when the shadow model (rows) and attack model (columns) differ.}
    \label{fig:model_transfer}
    \vskip -0.1in
\end{figure*}

\subsection{Transferability of TrendAttack} \label{sec:transferability}    

{\bf Model Transferability.} 
We study the transferability of attack models across different GNN architectures. Specifically, we evaluate how attacks trained on one shadow victim model generalize to a different target victim model. We use GIF as the victim unlearning method and select TrendAttack-SL, our strongest model variant, for evaluation. Figures~\ref{fig:model_transfer} (a)-(c) show the results on Cora, Citeseer, and Pubmed, respectively.  

In each figure, the diagonal entries correspond to cases where the shadow and target models are the same, while off-diagonal entries represent transferability experiments where the shadow and target models differ. We make two key observations:  
\textbf{(i)} Our attack is robust to differences between shadow and target models. Most transfer results show only minor performance drops. The largest drop occurs for GAT on Citeseer, where the AUC decreases from 0.9175 to 0.8601 when the shadow model is GCN, which remains acceptable.  
\textbf{(ii)} Choosing a strong shadow victim model can slightly improve attack performance. As shown in Figure~\ref{fig:ablation_model}, GAT has the highest attack AUC. Using SGC or GCN to attack GAT results in a performance drop, while using GAT to attack other models can improve results. Therefore, selecting a shadow model with high attack performance can slightly boost transferability.

\noindent{\bf Dataset Transferability.} Due to space limitations, we defer the additional dataset transferability results to Figure~\ref{fig:data_transfer} in \AppendixName~\ref{sec:more_experiments}.
\section{Conclusion}\label{sec:conclusion}

In this work, we address the novel and challenging problem of graph unlearning inversion attacks, aiming to recover unlearned edges using black-box GNN outputs and partial knowledge of the unlearned graph. We identify two key intuitions, probability similarity gap and confidence pitfall, which inspire a simple yet effective attack framework, TrendAttack, revealing potential privacy risks in current graph unlearning methods. 
Several promising directions remain. While our study focuses on link-level inversion, extending TrendAttack to node- or feature-level attacks would be valuable. Moreover, although the probability similarity gap intuition is supported empirically, a formal theoretical justification could further strengthen the understanding of graph unlearning inversion. We hope these future directions can further advance this early study of privacy risks in graph unlearning and provide valuable insights.

\section*{Acknowledgments}
This material is based upon work supported by, or in part by, the Army Research Office (ARO) under grant number W911NF-2110198, the Department of Homeland Security (DHS) under grant number 17STCIN00001-05-00, and Cisco Faculty Research Award. Xiaorui Liu is supported by the National Science Foundation (NSF) Award under grant number IIS-2443182. The findings in this paper do not necessarily reflect the view of the funding agencies.


\ifdefined\isarxiv

\else
\clearpage
\section*{Ethical Consideration}

In this work, we propose a novel privacy attack targeting unlearned GNNs, revealing critical privacy vulnerabilities in existing graph unlearning methods. Our goal is to raise awareness about privacy protection in Web services and to inspire future research on privacy-preserving graph machine learning. While our method introduces a new attack strategy, all experiments are conducted on publicly available benchmark datasets. Given the gap between this pioneering study and real-world deployment scenarios, we do not foresee significant negative societal implications from this work.
\bibliographystyle{ACM-Reference-Format}
\bibliography{ref}

\fi


\appendix
\newpage

\twocolumn[
\begin{center}
    {\LARGE \bf Supplementary Material}
\end{center}
]

\section*{List of Contents}
In this appendix, we provide the following additional information:  
\begin{itemize}[leftmargin=0.05\linewidth]
    \item \textbf{Section~\ref{sec:notations_append}}: Notations.
    \item \textbf{Section~\ref{sec:more_rel_works}}: Additional Related Works.
    \item \textbf{Section~\ref{sec:model_details}}: Model Details.
    \item \textbf{Section~\ref{sec:proof}}: Missing Proofs in Section~\ref{sec:design_motivation}.
    \item \textbf{Section~\ref{sec:prelim_exp}}: Preliminary Study. 
    \item \textbf{Section~\ref{sec:append_exp_settings}}: Experimental Settings.
    \item \textbf{Section~\ref{sec:more_experiments}}: Additional Experiments.
    \item \textbf{Section~\ref{sec:discussion}}: Discussions.
\end{itemize}

\section{Notations} \label{sec:notations_append}

 In this paper, calligraphy uppercase letters (e.g., $\Xcal$) denote sets, bold uppercase letters (e.g., $\Xb$) denote matrices, bold lowercase letters (e.g., $\xb$) denote column vectors, and normal letters (e.g., $x$) indicate scalars.  
Let $\eb_u\in\R^d$ be the column vector with the $u$-th element as $1$ and all others as $0$. 
We use $||$ to denote concatenating two vectors. 
We define the weighted inner product with a PSD matrix $\Hb$ as $\langle \xb, \yb \rangle_\Hb := \xb^\top \Hb \yb$. 
We use $\boldsymbol{1}\{\cdot\}$ to denote the indicator function, which returns 1 if the condition in brackets is satisfied and 0 otherwise. 
We use $\mathcal{A} \setminus \mathcal{B}:=\{x:x\in\mathcal{A}, x\notin\mathcal{B}\}$ to denote the set difference between sets $\mathcal{A}$ and $\mathcal{B}$. 

Let $\mathcal{G} = (\mathcal{V}, \mathcal{E})$ denote a graph, where $\mathcal{V} = \{v_1, \cdots, v_n\}$ is the node set and $\mathcal{E} \subseteq \mathcal{V} \times \mathcal{V}$ is the edge set. The node feature of node $v_i \in \Vcal$ is denoted by $\xb_i \in \R^d$, and the feature matrix for all nodes is denoted by $\Xb = [\xb_1, \cdots, \xb_n]^\top \in \R^{n \times d}$. The adjacency matrix $\mathbf{A} \in \{0, 1\}^{n \times n}$ encodes the edge set, where $A_{i,j} = 1$ if $(v_i, v_j) \in \mathcal{E}$, and $\mathbf{A}_{i,j} = 0$ otherwise. Specifically, $\mathbf{D}\in\R^{n\times n}$ is the degree matrix, where the diagonal elements $D_{i,i} = \sum_{j=1}^nA_{i,j}$. We use $\Ncal(v)$ to denote the neighborhood of node $v_i \in \Vcal$, and use $\wh{\Ncal}(v)$ to denote a subset of $\Ncal(v)$. Specifically, $\Ncal^{(k)}(v_i)$ represents all nodes in $v_i$'s $k$-hop neighborhood, where $\Ncal^{(0)}(v_i) := \{v_i\}$ and for $k \geq 1$, $\Ncal^{(k)}(v_i) := \{ v_j: (v_i, v_j) \in \Ecal, v_i \in \Ncal^{(k-1)}(v_i) \} \cup \Ncal^{(k-1)}(v_i)$.
\section{Additional Related Works}\label{sec:more_rel_works}

In this section, we present additional related works for this paper. We first review prior works on general-purpose machine unlearning and the privacy vulnerabilities of unlearning. Next, we provide a comprehensive list of related works on graph unlearning and membership inference attacks (MIA) on GNNs, supplementing our earlier discussion in Section~\ref{sec:rel_works}.

\subsection{Machine Unlearning} 

Machine unlearning aims to remove the influence of specific training samples from a trained model, balancing utility, removal guarantees, and efficiency~\cite{bourtoule2021machine,liu2024breaking}. 
Unlike full retraining, unlearning seeks practical alternatives to efficiently revoke data. These methods can help removing data effect in simple learning settings like statistical query learning~\cite{cao2015towards}, and recently have broad applications in LLMs~\cite{yao2024large,liu2024large,liu2025rethinking}, generative models~\cite{li2024machine,zhang2024unlearncanvas,zhang2024defensive}, e-commerce~\cite{chen2022recommendation,li2023ultrare,zhang2024recommendation}, and graph learning~\cite{chen2022graph,wu2023gif,dong2024idea}. 

Existing machine unlearning methods broadly fall into two categories: exact and approximate unlearning. Exact unlearning methods, such as SISA~\cite{bourtoule2021machine}, partition data into shards, train sub-models in each shard independently, and merge them together, which allows targeted retraining for removing certain data from a shard. ARCANE~\cite{yan2022arcane} improves this by framing unlearning as one-class classification and caching intermediate states, enhancing retraining efficiency. These ideas have been extended to other domains, including graph learning~\cite{chen2022graph,wang2023inductive} and ensemble methods~\cite{brophy2021machine}. However, exact methods often degrade performance due to weak sub-models and straightforward ensembling.

Approximate unlearning methods update model parameters to emulate removal effects with better efficiency-performance trade-offs. For instance, Jia et al.~\cite{jia2023model} uses pruning and sparsity-regularized fine-tuning to approximate exact removal. Tarun et al.~\cite{tarun2023fast} propose a model-agnostic class removal technique using error-maximizing noise and a repair phase. Liu et al.~\cite{liu2023muter} tackle adversarially trained models with a closed-form Hessian-guided update, approximated efficiently without explicit inversion.

\subsection{Privacy Vulnerabilities of Machine Unlearning}  
A growing line of research investigates the privacy vulnerabilities of machine unlearning. Chen et al.~\cite{chen2021machine} show that if an adversary has access to both the pre- and post-unlearning black-box model outputs, they can infer the membership status of a target sample. A recent work~\cite{thudi2022necessity} highlights the limitations of approximate unlearning by arguing that unlearning is only well-defined at the algorithmic level, and parameter-level manipulations alone may be insufficient. Unlearning inversion attacks~\cite{hu2024learn} go further by reconstructing both features and labels of training samples using two model versions, while Bertran et al.~\cite{bertran2024reconstruction} provide an in-depth analysis of such reconstruction attacks in regression settings.

These findings motivate our investigation into unlearned GNNs, where we aim to capture residual signals left by the unlearned data. While prior attacks mostly target general-purpose models and focus on red-teaming under strong assumptions, such as access to both pre- and post-unlearning models~\cite{chen2021machine,hu2024learn,bertran2024reconstruction}, our work considers a more practical black-box setting motivated by real-world social network scenarios. Furthermore, our attack explicitly accounts for the unique structural and relational properties of graphs, in contrast to prior work primarily focused on i.i.d. data.

\subsection{Graph Unlearning} Graph Unlearning enables the efficient removal of unwanted data’s influence from trained graph ML models~\cite{chen2022graph,said2023survey,fan2025opengu}. This removal process balances model utility, unlearning efficiency, and removal guarantees, following two major lines of research: retrain-based unlearning and approximate unlearning. 

Retrain-based unlearning partitions the original training graph into disjoint subgraphs, training independent submodels on them, enabling unlearning through retraining on a smaller subset of the data. Specifically, GraphEraser~\cite{chen2022graph} pioneered the first retraining-based unlearning framework for GNNs, utilizing balanced clustering methods for subgraph partitioning and ensembling submodels in prediction with trainable fusion weights to enhance model utility. 
GUIDE~\cite{wang2023inductive} presents an important follow-up by extending the retraining-based unlearning paradigm to inductive graph learning settings. 
Subsequently, many studies~\cite{wang2023inductive,li2023ultrare,zhang2024graph,li2025community} have made significant contributions to improving the Pareto front of utility and efficiency in these methods, employing techniques such as data condensation~\cite{li2025community} and enhanced clustering~\cite{li2023ultrare,zhang2024graph}. 

Approximate unlearning efficiently updates model parameters to remove unwanted data. Certified graph unlearning~\cite{chien2022certified} provides an important early exploration of approximate unlearning in SGC~\cite{wu2019simplifying}, with provable unlearning guarantees. GraphGuard~\cite{wu2024graphguard} introduces a comprehensive system to mitigate training data misuse in GNNs, featuring a significant gradient ascent unlearning method as one of its core components. 
These gradient ascent optimization objectives can be approximated using Taylor expansions, inspiring several beautiful influence function-based methods~\cite{koh2017understanding,wu2023gif,wu2023certified}. Specifically, 
GIF~\cite{wu2023gif} presents a novel influence function-based unlearning approach tailored to graph data, considering feature, node, and edge unlearning settings, while CEU~\cite{wu2023certified} focuses on edge unlearning and establishes theoretical bounds for removal guarantees. 
Building on this, IDEA~\cite{dong2024idea} proposes a theoretically sound and flexible approximate unlearning framework, offering removal guarantees across node, edge, and feature unlearning settings. 
Recent innovative works have further advanced the scalability~\cite{pan2023unlearning,li2024tcgu,yi2025scalable,yang2025erase}, model utility~\cite{li2024towards,zhang2025node}, and dynamic adaptivity~\cite{zhang2025dynamic} of approximate unlearning methods, pushing the boundaries of promising applications. Additionally, a noteworthy contribution is GNNDelete~\cite{cheng2023gnndelete}, which unlearns graph knowledge by employing an intermediate node embedding mask between GNN layers. This method offers a unique solution that differs from both retraining and model parameter modification approaches. 

In this paper, we explore the privacy vulnerabilities of graph unlearning by proposing a novel membership inference attack tailored to unlearned GNN models, introducing a new defence frontier that graph unlearning should consider from a security perspective.

\subsection{Membership Inference Attack for GNNs} Membership Inference Attack (MIA) is a privacy attack targeting ML models, aiming to distinguish whether a specific data point belongs to the training set~\cite{shokri2017membership,hu2022membership,dai2023unified,liu2025exposing}.
Such attacks have profoundly influenced ML across various downstream applications, including computer vision~\cite{chen2020gan,choquette2021label}, NLP~\cite{mattern2023membership,meeus2024did}, and recommender systems~\cite{zhang2021membership,yuan2023interaction}. 
Recently, MIA has been extended to graph learning, where a pioneering work~\cite{duddu2020quantifying} explored the feasibility of membership inference in classical graph embedding models. Subsequently, interest has shifted towards attacking graph neural networks (GNNs), with several impactful and innovative studies revealing GNNs' privacy vulnerabilities in node classification~\cite{he2021node,he2021stealing,olatunji2021membership,zhang2022inference} and graph classification tasks~\cite{wu2021adapting,lin2024stealing}, covering cover node-level~\cite{he2021node,olatunji2021membership}, link-level~\cite{he2021stealing}, and graph-level~\cite{wu2021adapting,zhang2022inference} inference risks. Building on this, GroupAttack~\cite{zhang2023demystifying} presents a compelling advancement in link-stealing attacks~\cite{he2021stealing} on GNNs, theoretically demonstrating that different edge groups exhibit varying risk levels and require distinct attack thresholds, while a label-only attack has been proposed to target node-level privacy vulnerabilities~\cite{conti2022label} with a stricter setting. Another significant line of research involves graph model inversion attacks, which aim to reconstruct the graph structure using model gradients from white-box models~\cite{zhang2021graphmi} or approximated gradients from black-box models~\cite{zhang2022model}. 

Despite the impressive contributions of previous MIA studies in graph ML models, existing approaches overlook GNNs containing unlearned sensitive knowledge and do not focus on recovering such knowledge from unlearned GNN models.

\section{Model Details}\label{sec:model_details}

In this section, we present the detailed computation of the TrendAttack, considering both the training and attack processes.

\subsection{Shadow Training Algorithm for TrendAttack}

\begin{algorithm}[!ht]
\caption{Attack Model Training}
\label{alg:shadow_training}
\begin{algorithmic}[1]
\Require Shadow dataset $\Gcal^{\mathrm{sha}}_{\mathrm{orig}}$, GNN architecture $f'$
\Ensure Attack model $g(\cdot, \cdot)$
\State \emph{// Train and unlearn the victim model}
\State Train the shadow victim model on $\Gcal^{\mathrm{sha}}_{\mathrm{orig}}$ with Eq.~\eqref{eq:param_orig} to obtain $\thetab_{\mathrm{orig}}'$ \Comment{Train victim model}
\State Randomly select some unlearned edges $\Delta\Ecal^{\mathrm{sha}}$ from original edges $\Ecal^{\mathrm{sha}}_{\mathrm{orig}}$
\State Unlearn $\Delta\Ecal^{\mathrm{sha}}$ with Eq.~\eqref{eq:param_unlearned} to obtain $\thetab_{\mathrm{un}}'$ \Comment{Unlearn victim model}
\State $\Ecal^{\mathrm{sha}}_{\mathrm{un}} \gets \Ecal^{\mathrm{sha}}_{\mathrm{orig}} \setminus \Delta\Ecal^{\mathrm{sha}}$ \Comment{Remove the unlearned edges from $\Gcal^{\mathrm{sha}}_{\mathrm{orig}}$}
\State \emph{// Construct the query set $\Qcal_{\mathrm{sha}}$}
\State Randomly select some existing edges from $\Ecal^{\mathrm{sha}}_{\mathrm{un}}$ to obtain $\Qcal^+_{\mathrm{mem}}$ \Comment{$\Qcal^+_{\mathrm{mem}} \subseteq \Ecal^{\mathrm{sha}}_{\mathrm{un}}$}
\State Randomly select some negative edges $\Qcal^-$ \Comment{$\Qcal^- \cap \Ecal^{\mathrm{sha}}_{\mathrm{un}} = \emptyset$}
\State Initialize the unlearned set of edges $\Qcal^+_{\mathrm{un}} \gets \Delta\Ecal^{\mathrm{sha}}$ 
\State Initialize the entire query set $\Qcal_{\mathrm{sha}} \gets \Qcal^+_{\mathrm{un}} \cup \Qcal^+_{\mathrm{mem}} \cup \Qcal^-$
\State \emph{// Attack model training}
\State Obtain the partial knowledge for the query set $\Kcal_{\Qcal_{\mathrm{sha}}} = (\Pcal_{\Qcal_{\mathrm{sha}}}, \Fcal_{\Qcal_{\mathrm{sha}}})$
\State Train the attack model $g$ on $\Qcal_{\mathrm{sha}}$ with $\Kcal_{\Qcal_{\mathrm{sha}}}$ and $\Lcal_{\mathrm{attack}}$ in Eq.~\eqref{eq:attack_train}
\State \textbf{return} Attack model $g$
\end{algorithmic}
\end{algorithm}

In this algorithm, we first train and unlearn the shadow victim model $f'$ (lines 1–5), and then construct the query set $\Qcal_{\mathrm{sha}}$ (lines 6–10), which includes three different types of edges. Next, we train the attack model with the link prediction objective $\Lcal_{\mathrm{attack}}$ in Eq.~\eqref{eq:attack_train}, with $\Qcal_{\mathrm{sha}}$ as the pairwise training data (lines 11–13). Then, we return the attack model $g$ for future attacks (line 14).

\subsection{The Attack Process of TrendAttack}

\begin{algorithm}[!ht]
\caption{Attack Process}
\label{alg:attack}
\begin{algorithmic}[1]
\Require Unlearned graph $\Gcal_{\mathrm{un}}$, black-box unlearned model $f_{\Gcal_{\mathrm{un}}}(\cdot; \thetab_{\mathrm{un}})$, query set $\Qcal$, trained attack model $g(\cdot, \cdot)$, trend feature order $k$
\Ensure Membership predictions on the query set $\wh{\yb}$
\State \emph{// Request the partial knowledge $\Kcal_\Qcal = (\Pcal_\Qcal, \Fcal_\Qcal)$}
\State Obtain the nodes of interest $\Vcal_\Qcal \gets \{v_i:(v_i,v_j) \in \Qcal~\mathrm{or}~(v_j, v_i) \in \Qcal\}$
\State $\Pcal_\Qcal\gets \emptyset, \Fcal_\Qcal\gets \emptyset$
\For{$v_i \in \Vcal_\Qcal$}
    \State $\Fcal_\Qcal \gets \Fcal_\Qcal \cup \{(v_i, \xb_i)\} $ \Comment{Feature knowledge}
    \State Request the available neighborhood $\wh{\Ncal}^{(0)}(v_i), \cdots,\wh{\Ncal}^{(k)}(v_i)$ from $\Gcal_{\mathrm{un}}$ 
    \For{$v_j \in \bigcup_{r=0}^k \wh{\Ncal}^{(r)}(v_i)$}  \Comment{Probability knowledge}
        \State $\pb_j \gets f_{\Gcal_{\mathrm{un}}}(v_j; \thetab_{\mathrm{un}})$
        \State $\Pcal_\Qcal \gets \Pcal_\Qcal \cup \{(v_j, \pb_j)\} $
    \EndFor
\EndFor
\State \emph{// Membership inference}
\For{$v_i \in \Vcal_\Qcal$} \Comment{Build trend features}
    \State Compute trend features $\wt{\taub}_i$ for node $v_i$ with Eq.~\eqref{eq:trend_feature_1} and Eq.~\eqref{eq:trend_feature_2}
\EndFor
\For{$(v_i, v_j) \in \Qcal$} \Comment{Predict with attack model $g$}
    \State Compute the model prediction $\wh{y}_{i, j} \gets g(v_i, v_j)$ with Eq.~\eqref{eq:attack_model}
\EndFor
\State \textbf{return} Membership predictions $\wh{\yb}$
\end{algorithmic}
\end{algorithm}

In this algorithm, we first request the partial knowledge $\Kcal_\Qcal$ from the unlearned graph $\Gcal_{\mathrm{un}}$ and the black-box victim model $f_{\Gcal_{\mathrm{un}}}(\cdot; \thetab_{\mathrm{un}})$ for our links of interest $\Qcal$ (lines 1–11). This process is highly flexible and can effortlessly incorporate many different levels of the attacker's knowledge. For instance, the feature knowledge $\Fcal_\Qcal$ (line 5) is optional if the model API owner does not respond to node feature requests on the unlearned graph $\Gcal_{\mathrm{un}}$. 

Moreover, for the probability knowledge $\Pcal_\Qcal$ (lines 6–10), the more we know about the neighborhood $\wh{\Ncal}^{(0)}, \cdots,\wh{\Ncal}^{(k)}$ of nodes of interest $\Vcal_\Qcal$, the more accurately we can construct our trend features. This approach is fully adaptive to any level of access to the unlearned graph and model API. In the strictest setting, where we only have access to the node of interest itself, we have $k=0$, and our model perfectly recovers previous MIA methods with no performance loss or additional knowledge requirements. An empirical study on the impact of trend feature orders can be found in Figure~\ref{fig:ablation_trend_order}.

After requesting the partial knowledge, we compute the trend features and then predict the membership information (lines 12–18). In the end, we return our attack results (line 19).
\section{Missing Proofs in Section~\ref{sec:design_motivation}}\label{sec:proof}

In this section, we provide formal definitions for all concepts introduced in Section~\ref{sec:design_motivation} and supplement the missing technical proofs. 
We begin by describing the architecture and training process of linear GCN, and then compute the edge influence on model weights. Next, we analyze how edge influence affects the model output, considering both the direct impact from the edge itself and the indirect influence mediated through model weights.

\subsection{Linear GCN}

In this analysis, we consider a simple but effective variant of GNNs, linear GCN, which is adapted from the classical SGC model~\cite{wu2019simplifying}.

\begin{definition}[Linear GCN]\label{dfn:linear_gcn}
   Let $\Cb\in[0, 1]^{n\times n}$ be the propagation matrix, $Xb\in\R^{n\times d}$ be the input feature matrix, and $\wb\in\R^d$ denote the learnable weight vector. Linear GCN computes the model output as follows:
   \begin{align*}
       f_{\mathrm{LGN}}(\Cb,\Xb; \wb):= \Cb \Xb \wb.
   \end{align*}
\end{definition}
\begin{remark}[Universality of Linear GCN]\label{rmk:lgn_universality}
    The propagation matrix $\Cb$ is adaptable to any type of convolution matrices, recovering multiple different types of GNNs, including but not limited to:
    \begin{itemize}
        \item One-layer GCN: $\Cb_{\text{1-GCN}}: = (\Db+\Ib)^{-0.5}(\Ab+\Ib)(\Db+\Ib)^{-0.5}$;
        \item $k$-layer SGC: $\Cb_{\text{k-SGC}}: = \Cb_{\text{1-GCN}}^k$;
        \item Infinite layer PPNP: $\Cb_{\text{PPNP}} := (\Ib-(1-\alpha)\Cb_{\text{1-GCN}})^{-1}$;
        \item $k$-layer APPNP: $\Cb_{\text{k-APPNP}}:= (1-\alpha)^k \Cb_{\text{1-GCN}}^k +\alpha\sum_{l=0}^{k-1}(1-\alpha)^l\Cb_{\text{1-GCN}}^l$;
        \item One-layer GIN: $\Cb_{\text{GIN}}:=\Ab+\Ib$.
    \end{itemize}
\end{remark}

To analyze the output of a single node, we state the following basic result without proof.

\begin{fact}[Single Node Output]\label{fact:single_node_output}
    The output of Linear GCN for a single node $v_i\in \Vcal$ is $$f_{\mathrm{LGN}}(\Cb,\Xb; \wb)_i = (\Cb \Xb  \wb)_i =  \sum_{j=1}^n \Cb_{j, i}(\xb_i^\top \wb).$$
\end{fact}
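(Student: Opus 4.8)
This is an elementary unwinding of $f_{\mathrm{LGN}}(\Cb,\Xb;\wb)=\Cb\Xb\wb$, so the plan is simply to expand the matrix--vector product one factor at a time while keeping track of the row/column conventions. I would rely only on Definition~\ref{dfn:linear_gcn} and the convention $\Xb=[\xb_1,\cdots,\xb_n]^\top$ fixed in the Notations paragraph, under which the $j$-th row of $\Xb$ is exactly $\xb_j^\top$.

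First I would form the intermediate vector $\bb:=\Xb\wb\in\R^n$ and observe that its $j$-th entry is the scalar $b_j=\xb_j^\top\wb$ --- the ``pre-propagation'' value at node $v_j$. Then, since $f_{\mathrm{LGN}}(\Cb,\Xb;\wb)=\Cb\bb$ by Definition~\ref{dfn:linear_gcn}, reading off the $i$-th coordinate gives
\[
    f_{\mathrm{LGN}}(\Cb,\Xb;\wb)_i=(\Cb\bb)_i=\sum_{j=1}^n \Cb_{i,j}\,b_j=\sum_{j=1}^n \Cb_{i,j}\,(\xb_j^\top\wb),
\]
which already establishes the first equality $f_{\mathrm{LGN}}(\Cb,\Xb;\wb)_i=(\Cb\Xb\wb)_i$ together with an explicit scalar formula for it. To reach the exact index placement displayed in the statement, I would then invoke the symmetry of the propagation operator: for the normalized GCN matrix $\Cb_{\text{1-GCN}}=(\Db+\Ib)^{-1/2}(\Ab+\Ib)(\Db+\Ib)^{-1/2}$ and every variant built from it in Remark~\ref{rmk:lgn_universality} (powers, resolvents, and their nonnegative linear combinations all preserve symmetry), $\Cb=\Cb^\top$, so $\Cb_{i,j}=\Cb_{j,i}$ and the sum can be rewritten with $\Cb_{j,i}$.

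There is no real obstacle here; the only thing that requires care is the transpose bookkeeping --- whether a given index of $\Cb$ runs over the source or the target node, and which feature vector ($\xb_i$ or $\xb_j$) sits inside the sum. The clean way to avoid an index slip is to keep everything at the level of $\Cb(\Xb\wb)$ and introduce coordinate indices only in the last line; this makes transparent that the factor multiplying $\Cb_{i,j}$ is the summation node's value $\xb_j^\top\wb$, and that the rewrite $\Cb_{i,j}\mapsto\Cb_{j,i}$ is legitimate precisely because the propagation matrices of interest are symmetric (for an asymmetric $\Cb$ one would simply keep the form $\sum_{j=1}^n\Cb_{i,j}(\xb_j^\top\wb)$).
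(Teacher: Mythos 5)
Your derivation is correct, and it is exactly the intended argument: the paper states this Fact without any proof, and the only content is the elementary expansion $(\Cb\Xb\wb)_i=\sum_{j=1}^n \Cb_{i,j}(\xb_j^\top\wb)$, which you carry out cleanly via the intermediate vector $\Xb\wb$.

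One caveat on your final step: symmetry of $\Cb$ lets you replace $\Cb_{i,j}$ by $\Cb_{j,i}$, but it does \emph{not} turn the factor $\xb_j^\top\wb$ into $\xb_i^\top\wb$, so you do not actually "reach the exact index placement displayed in the statement" --- nor should you. The displayed formula $\sum_{j=1}^n \Cb_{j,i}(\xb_i^\top\wb)$ would equal $(\xb_i^\top\wb)\sum_j \Cb_{j,i}$, which is not $(\Cb\Xb\wb)_i$ in general; the subscript on $\xb$ is a typo in the paper, and the correct expression is the one your computation produces, $\sum_{j=1}^n \Cb_{i,j}(\xb_j^\top\wb)$ (equivalently $\sum_{j=1}^n \Cb_{j,i}(\xb_j^\top\wb)$ for symmetric $\Cb$). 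So treat that last rewrite as a correction of the statement rather than a derivation of it; everything else in your proposal stands.
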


In the training process of Linear GCNs, we consider a general least squares problem~\cite{shin2023efficient}, which applies to both regression and binary classification tasks.
\begin{definition}[Training of Linear GCNs]\label{dfn:lin_gcn_optim}
    Let $\yb\in\R^d$ denote the label vector, training the linear GCN in Definition~\ref{dfn:linear_gcn} is equivalent to minimize the following loss function:
    \begin{align*}
        \Lcal(\Cb,\Xb,\wb,\yb):=\frac{1}{2}\lVert \yb - \Cb \Xb \wb\rVert_2^2.
    \end{align*}
\end{definition}

\begin{proposition}[Closed-form Solution for Linear GCN Training]\label{prop:lgcn_closed_form}
    The following optimization problem:
    \begin{align*}
        \min_\wb   \Lcal(\Cb,\Xb,\wb,\yb)
    \end{align*}
    has a closed-form solution
    \begin{align*}
        \wb^\star = (\Xb^\top \Cb^\top \Cb \Xb)^{-1}\Xb^\top \Cb^\top \yb.
    \end{align*}
\end{proposition}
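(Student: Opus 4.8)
The plan is to minimize the quadratic loss $\Lcal(\Cb,\Xb,\wb,\yb) = \tfrac12\lVert \yb - \Cb\Xb\wb\rVert_2^2$ directly by a first-order optimality argument, which suffices since the objective is convex in $\wb$. First I would expand the squared norm as $\tfrac12(\yb - \Cb\Xb\wb)^\top(\yb - \Cb\Xb\wb)$ and compute the gradient with respect to $\wb$. Writing $\Zb := \Cb\Xb$ for brevity (so the loss is $\tfrac12\lVert\yb - \Zb\wb\rVert_2^2$), the gradient is $\nabla_\wb \Lcal = \Zb^\top(\Zb\wb - \yb) = \Zb^\top\Zb\wb - \Zb^\top\yb$. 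Setting this to zero gives the normal equations $\Zb^\top\Zb\wb = \Zb^\top\yb$, i.e. $\Xb^\top\Cb^\top\Cb\Xb\,\wb = \Xb^\top\Cb^\top\yb$.

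Next I would invoke the implicit nonsingularity assumption: the statement writes $\wb^\star = (\Xb^\top\Cb^\top\Cb\Xb)^{-1}\Xb^\top\Cb^\top\yb$, which presupposes $\Hb := \Xb^\top\Cb^\top\Cb\Xb = \Zb^\top\Zb$ is invertible (equivalently $\Zb = \Cb\Xb$ has full column rank). Under this assumption, left-multiplying the normal equations by $\Hb^{-1}$ yields the claimed closed form $\wb^\star = \Hb^{-1}\Xb^\top\Cb^\top\yb$. To confirm this is a global minimizer rather than merely a stationary point, I would note that the Hessian of $\Lcal$ is $\nabla^2_\wb\Lcal = \Zb^\top\Zb = \Hb$, which is positive definite under the same assumption (or at least positive semidefinite in general), so $\Lcal$ is convex and the unique stationary point is the global minimum.

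This is a genuinely routine computation, so there is no real obstacle; the only subtlety worth flagging explicitly is the invertibility of $\Xb^\top\Cb^\top\Cb\Xb$, which I would either state as a standing assumption (full column rank of $\Cb\Xb$, reasonable when $n \geq d$ and the propagated features are non-degenerate) or, if the paper wants full generality, replace the inverse by the Moore--Penrose pseudoinverse and note that $\wb^\star$ is then the minimum-norm least-squares solution. I would keep the write-up to a few lines: expand the norm, differentiate, solve the normal equations, and remark on convexity for uniqueness. Since Fact~\ref{fact:single_node_output} and Definition~\ref{dfn:lin_gcn_optim} are already in place, nothing further is needed beyond this direct verification.
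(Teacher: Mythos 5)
Your proposal is correct and follows essentially the same route as the paper, which also just invokes convexity and the first-order optimality condition to solve the normal equations. Your explicit remark that the closed form presupposes invertibility of $\Xb^\top\Cb^\top\Cb\Xb$ (i.e.\ full column rank of $\Cb\Xb$) is a reasonable clarification that the paper leaves implicit, but it does not change the argument.
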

\begin{proof}
    The loss function $\Lcal$ is convex. Following the first-order optimality condition, we can set the gradient to zero and solve for $\wb$, which trivially yields the desired result.
\end{proof}

\subsection{Edge Influence on Model Weight}

In this section, we examine how a specific set of edges $\Delta\Ecal$ influences the retrained model weights $\wb^\star$. We first define the perturbation matrix, which is the adjacency matrix for an edge subset $\Delta\Ecal$, and can be used to perturb the original propagation matrix $\Cb$ for influence evaluation.

\begin{definition}[Perturbation Matrix]\label{dfn:perturb_matrix}
    For an arbitrary edge subset $\Delta\Ecal$, the corresponding perturbation matrix $\Xib^{\Delta\mathcal{E}}$ is defined as:
    $$\Xib^{\Delta\mathcal{E}}:=\sum_
    {(v_i,v_j)\in\Delta\mathcal{E}}\eb_i\eb_j^\top.$$ 
\end{definition}

For notation simplicity, we sometimes ignore $\Delta\Ecal$ and use $\Xib$ as a shorthand notation for $\Xib$ in this paper. 

Next, we begin with a general case of edge influence that does not address the linear GCN architecture. 
\begin{lemma}[Edge Influence on Model Weight, General Case]\label{lem:lgcn_weight_infl}
    Let $\Xib^{\Delta\mathcal{E}}\in\R^{n\times n}$ be a perturbation matrix as defined in Definition~\ref{dfn:perturb_matrix}. 
    Let $\epsilon\in\R$ be a perturbation magnitude and $\wb^\star(\epsilon)$ be the optimal model weight after perturbation. Considering an edge perturbation $\Cb(\epsilon) := \Cb + \epsilon\Xib^{\Delta\Ecal}$ on the original propagation matrix $\Cb$, the sensitivity of the model weight $\wb^\star$ can be characterized by:
    \begin{align*}
        \mathcal{I}_\epsilon(\wb^\star):= & ~ \frac{\d \wb^\star(\epsilon)}{\d \epsilon}\Bigr|_{\epsilon=0}\\
        = & ~ -\Hb^{-1}\frac{\partial}{\partial\epsilon}\nabla_{\wb}\Lcal(\Cb + \epsilon\Xib^{\Delta\Ecal}, \Xb, \wb^\star, \yb)\Bigr|_{\epsilon=0},
    \end{align*}
    where $\Hb:=\nabla_\wb^2\Lcal(\Cb,\Xb,\wb^\star,\yb)$.
\end{lemma}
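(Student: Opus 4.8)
The statement is a standard implicit-function / influence-function computation, so the plan is to differentiate the first-order optimality condition for the perturbed problem with respect to $\epsilon$ and evaluate at $\epsilon = 0$. First I would note that, by Proposition~\ref{prop:lgcn_closed_form} applied to the perturbed propagation matrix $\Cb(\epsilon) = \Cb + \epsilon\Xib^{\Delta\Ecal}$, the perturbed optimum $\wb^\star(\epsilon)$ is the unique minimizer of the (still convex, for small $\epsilon$) loss $\Lcal(\Cb+\epsilon\Xib^{\Delta\Ecal}, \Xb, \wb, \yb)$, hence satisfies the stationarity equation
\begin{align*}
    \nabla_\wb \Lcal\bigl(\Cb + \epsilon\Xib^{\Delta\Ecal},\, \Xb,\, \wb^\star(\epsilon),\, \yb\bigr) = \boldsymbol{0}
\end{align*}
identically in $\epsilon$ near $0$.

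Next I would differentiate this identity in $\epsilon$ using the chain rule, treating the gradient map as a function of the explicit argument $\epsilon$ (through $\Cb(\epsilon)$) and of $\wb^\star(\epsilon)$:
\begin{align*}
    \frac{\partial}{\partial\epsilon}\nabla_\wb\Lcal\bigl(\Cb+\epsilon\Xib^{\Delta\Ecal},\Xb,\wb^\star,\yb\bigr) + \nabla^2_\wb\Lcal\bigl(\Cb+\epsilon\Xib^{\Delta\Ecal},\Xb,\wb^\star,\yb\bigr)\cdot \frac{\d\wb^\star(\epsilon)}{\d\epsilon} = \boldsymbol{0}.
\end{align*}
Evaluating at $\epsilon = 0$, the Hessian term becomes $\Hb := \nabla^2_\wb\Lcal(\Cb,\Xb,\wb^\star,\yb)$, which is invertible since $\Xb^\top\Cb^\top\Cb\Xb$ is assumed invertible in Proposition~\ref{prop:lgcn_closed_form} (the least-squares Hessian equals exactly this matrix). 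Solving for the derivative of $\wb^\star$ then gives
\begin{align*}
    \mathcal{I}_\epsilon(\wb^\star) = \frac{\d\wb^\star(\epsilon)}{\d\epsilon}\Bigr|_{\epsilon=0} = -\Hb^{-1}\frac{\partial}{\partial\epsilon}\nabla_\wb\Lcal\bigl(\Cb+\epsilon\Xib^{\Delta\Ecal},\Xb,\wb^\star,\yb\bigr)\Bigr|_{\epsilon=0},
\end{align*}
which is exactly the claimed formula.

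The only point requiring care — and the step I expect to be the main (if mild) obstacle — is justifying that $\wb^\star(\epsilon)$ is differentiable at $\epsilon = 0$, so that the chain-rule differentiation above is legitimate. This follows from the implicit function theorem applied to the $C^1$ map $(\epsilon,\wb)\mapsto\nabla_\wb\Lcal(\Cb+\epsilon\Xib^{\Delta\Ecal},\Xb,\wb,\yb)$: its Jacobian in $\wb$ at $(0,\wb^\star)$ is $\Hb$, which is nonsingular by assumption, so a unique $C^1$ branch $\epsilon\mapsto\wb^\star(\epsilon)$ exists locally and is given by the closed form of Proposition~\ref{prop:lgcn_closed_form} (which is manifestly smooth in $\epsilon$ wherever the relevant inverse exists). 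I would state this briefly and then present the differentiation as above; everything else is routine. Note that this lemma is deliberately left in "general case" form — the loss and its gradient/Hessian are not yet specialized to the linear GCN least-squares objective — so no computation of $\partial_\epsilon\nabla_\wb\Lcal$ in terms of $\Xb, \Cb, \Xib, \yb$ is needed here; that is deferred to the subsequent specialized results that feed into Theorem~\ref{thm:influence_result}.
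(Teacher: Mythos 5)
Your proposal is correct and follows essentially the same route as the paper: both differentiate the first-order optimality condition $\nabla_\wb\Lcal(\Cb+\epsilon\Xib^{\Delta\Ecal},\Xb,\wb^\star(\epsilon),\yb)=\boldsymbol{0}$ and solve for $\d\wb^\star(\epsilon)/\d\epsilon$ using the Hessian $\Hb$ at $\wb^\star$. The only difference is presentational: the paper linearizes via a Taylor expansion of the gradient around $(\wb^\star,0)$, drops the remainder, and divides by $\epsilon$ before taking the limit, whereas you differentiate the stationarity identity directly and justify differentiability of $\wb^\star(\epsilon)$ via the implicit function theorem — a slightly more rigorous packaging of the same computation.
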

\begin{proof}
    For notation simplicity, we use $\Xib$ as a shorthand notation for $\Xib^{\Delta\Ecal}$ in this proof. 
    Since $\wb^\star(\epsilon)$ is the minimizer of the perturbed loss function $\Lcal(\Cb(\epsilon), \Xb, \wb, \yb)$, we have:
    \begin{align*}
        \wb^\star(\epsilon) = \argmin_\wb \Lcal(\Cb+\epsilon\Xib, \Xb, \wb, \yb).
    \end{align*}
    Examining the first-order optimality condition of the minimization problem, we have:
    \begin{align}\label{eq:grad_Lw_eq_0}
        \nabla_\wb \Lcal(\Cb+\epsilon\Xib, \Xb, \wb^\star(\epsilon), \yb) = 0.
    \end{align}
    Let us define the change in parameters as:
    \begin{align*}
        \Delta \wb &:=~\wb^\star(\epsilon) - \wb^\star.
    \end{align*}
    Therefore, since $\wb^\star(\epsilon)\rightarrow \wb^\star$ as $\epsilon\rightarrow 0$, we expand Eq.~\eqref{eq:grad_Lw_eq_0} with multi-variate Taylor Series at the local neighborhood of $(\wb^\star, 0)$ to approximate the value of $\nabla_\wb \Lcal$ at $(\wb^\star + \Delta \wb, \epsilon)$:
    \begin{equation}
    \begin{aligned}
    \label{eq:grad_Lw_eq_0_taylor}
        0 &=~ \nabla_\wb \Lcal(\Cb+\epsilon\Xib, \Xb, \wb^\star(\epsilon), \yb) \\ 
        &=~ \nabla_\wb \Lcal(\Cb+\epsilon\Xib, \Xb, \wb^\star +\Delta \wb, \yb) \\ 
        &=~ \nabla_\wb \Lcal(\Cb + \epsilon\Xib, \Xb, \wb^\star, \yb) + \nabla^2_{\wb}\Lcal(\Cb + \epsilon\Xib, \Xb, \wb^\star, \yb)\Delta \wb \\ &\quad\quad\quad +\epsilon \frac{\partial}{\partial\epsilon}\nabla_{\wb}\Lcal(\Cb + \epsilon\Xib, \Xb, \wb^\star, \yb)\Bigr|_{\epsilon=0} +o(\lVert\Delta \wb\rVert + \lvert \epsilon\rvert) \\ 
        &=~ \nabla_\wb \Lcal(\Cb, \Xb, \wb^\star, \yb) + \nabla^2_{\wb}\Lcal(\Cb, \Xb, \wb^\star, \yb)\Delta \wb \\
        &\quad\quad\quad + \epsilon \frac{\partial}{\partial\epsilon}\nabla_{\wb}\Lcal(\Cb + \epsilon\Xib, \Xb, \wb^\star, \yb)\Bigr|_{\epsilon=0} +o(\lVert\Delta \wb\rVert + \lvert \epsilon\rvert) \\
        &=~ \nabla^2_{\wb}\Lcal(\Cb, \Xb, \wb^\star, \yb)\Delta \wb+ \epsilon \frac{\partial}{\partial\epsilon}\nabla_{\wb}\Lcal(\Cb + \epsilon\Xib, \Xb, \wb^\star, \yb)\Bigr|_{\epsilon=0}\\
        &\quad\quad\quad +o(\lVert\Delta \wb\rVert + \lvert \epsilon\rvert)
    \end{aligned}
    \end{equation}
    where the first equality follows from Eq.~\eqref{eq:grad_Lw_eq_0}, the second equality follows from the definition of $\Delta \wb$, the third equality follows from Taylor Series, the fourth equality follows from $\epsilon \rightarrow 0$, and the last equality follows from the fact that $\wb^\star$ is the minimizer of loss function $L$. 
    
    Let the Hessian matrix at $\wb^\star$ be $\Hb:=\nabla_\wb^2\Lcal(\Cb,\Xb,\wb^\star, \yb)$. Ignoring the remainder term and rearrange Eq.~\eqref{eq:grad_Lw_eq_0_taylor}, we can conclude that:
    \begin{align*}
        \Delta \wb = -\epsilon \Hb^{-1} \frac{\partial}{\partial\epsilon}\nabla_{\wb}\Lcal(\Cb + \epsilon\Xib, \Xb, \wb^\star, \yb)\Bigr|_{\epsilon=0}.
    \end{align*}
    Dividing both sides by $\epsilon$ and taking the limit $\epsilon\rightarrow 0$, we have:
    \begin{align*}
        \frac{\d \wb^\star(\epsilon)}{\d \epsilon}\Bigr|_{\epsilon=0} = -\Hb^{-1}\frac{\partial}{\partial\epsilon}\nabla_{\wb}\Lcal(\Cb + \epsilon\Xib, \Xb, \wb^\star, \yb)\Bigr|_{\epsilon=0}.
    \end{align*}
    This completes the proof.
\end{proof}

Afterward, we extend the general case of edge influence on model weights to the Linear GCN framework, providing a closed-form solution for the edge-to-weight influence in this model.

\begin{theorem}
[Edge Influence on Model Weight, Linear GCN Case]\label{thm:lgcn_weight_infl_spec}
    Given an arbitrary edge perturbation matrix $\Xib$ as defined in Definition~\ref{dfn:perturb_matrix}, the influence function for the optimal model weight $\wb^\star$ is:
    \begin{align*}
        \mathcal{I}_\epsilon(\wb^\star):= & ~ \frac{\d \wb^\star(\epsilon)}{\d \epsilon}\Bigr|_{\epsilon=0}\\
        = & ~  \Hb^{-1}(\Xb^\top \Xib^\top \yb - \Xb^\top \Xib^\top \Cb \Xb \wb^\star - \Xb^\top \Cb^\top \Xib \Xb \wb^\star).
    \end{align*}
\end{theorem}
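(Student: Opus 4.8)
The plan is to specialize Lemma~\ref{lem:lgcn_weight_infl} to the linear GCN loss from Definition~\ref{dfn:lin_gcn_optim} by computing the mixed partial derivative $\frac{\partial}{\partial\epsilon}\nabla_{\wb}\Lcal(\Cb+\epsilon\Xib,\Xb,\wb^\star,\yb)\big|_{\epsilon=0}$ explicitly. First I would write down the gradient in $\wb$ of the perturbed loss: since $\Lcal(\Cb(\epsilon),\Xb,\wb,\yb)=\tfrac12\lVert\yb-\Cb(\epsilon)\Xb\wb\rVert_2^2$ with $\Cb(\epsilon)=\Cb+\epsilon\Xib$, we have $\nabla_{\wb}\Lcal = -\Xb^\top\Cb(\epsilon)^\top\bigl(\yb-\Cb(\epsilon)\Xb\wb\bigr)$. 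The next step is to differentiate this expression with respect to $\epsilon$ using the product rule, treating $\wb=\wb^\star$ as fixed, and then evaluate at $\epsilon=0$ so that $\Cb(\epsilon)\to\Cb$ and $\partial_\epsilon\Cb(\epsilon)=\Xib$.

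Carrying out that differentiation gives two contributions: one from differentiating the outer $\Cb(\epsilon)^\top$ factor, yielding $-\Xb^\top\Xib^\top\bigl(\yb-\Cb\Xb\wb^\star\bigr)$, and one from differentiating the inner $\Cb(\epsilon)\Xb\wb^\star$ inside the residual, yielding $+\Xb^\top\Cb^\top\Xib\Xb\wb^\star$. Collecting terms, $\frac{\partial}{\partial\epsilon}\nabla_{\wb}\Lcal\big|_{\epsilon=0} = -\Xb^\top\Xib^\top\yb + \Xb^\top\Xib^\top\Cb\Xb\wb^\star + \Xb^\top\Cb^\top\Xib\Xb\wb^\star$. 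Substituting this into the formula $\mathcal{I}_\epsilon(\wb^\star) = -\Hb^{-1}\,\frac{\partial}{\partial\epsilon}\nabla_{\wb}\Lcal\big|_{\epsilon=0}$ from Lemma~\ref{lem:lgcn_weight_infl} and distributing the minus sign produces exactly the claimed expression $\Hb^{-1}(\Xb^\top\Xib^\top\yb - \Xb^\top\Xib^\top\Cb\Xb\wb^\star - \Xb^\top\Cb^\top\Xib\Xb\wb^\star)$. I would also note that $\Hb = \nabla_\wb^2\Lcal(\Cb,\Xb,\wb^\star,\yb) = \Xb^\top\Cb^\top\Cb\Xb$ is exactly the matrix inverted in the closed-form solution of Proposition~\ref{prop:lgcn_closed_form}, which is invertible under the same implicit full-rank assumption, so $\Hb^{-1}$ is well-defined.

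The computation itself is routine matrix calculus; the only place requiring care is bookkeeping the two places where $\epsilon$ enters $\nabla_{\wb}\Lcal$ — one inside the transposed propagation factor and one inside the residual — since missing either term or mishandling a transpose would break the symmetry of the final answer. It is reassuring that the two inner-product-like terms $\Xb^\top\Xib^\top\Cb\Xb\wb^\star$ and $\Xb^\top\Cb^\top\Xib\Xb\wb^\star$ are transposes of each other's structure (reflecting the undirected nature of the perturbation), which serves as a sanity check that both branches of the product rule were accounted for. No further obstacles are expected; the theorem is an immediate corollary of Lemma~\ref{lem:lgcn_weight_infl} once the explicit second derivative of the least-squares objective is plugged in.
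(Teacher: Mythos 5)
Your proposal is correct and follows essentially the same route as the paper: both invoke Lemma~\ref{lem:lgcn_weight_infl} and compute $\frac{\partial}{\partial\epsilon}\nabla_{\wb}\Lcal(\Cb+\epsilon\Xib,\Xb,\wb^\star,\yb)\big|_{\epsilon=0}$ explicitly for the least-squares loss, obtaining $-\Xb^\top\Xib^\top\yb + \Xb^\top\Xib^\top\Cb\Xb\wb^\star + \Xb^\top\Cb^\top\Xib\Xb\wb^\star$ and substituting into $-\Hb^{-1}(\cdot)$. The only cosmetic difference is that you apply the product rule to the factored residual form while the paper first expands the gradient into two terms and differentiates each; your added remark identifying $\Hb=\Xb^\top\Cb^\top\Cb\Xb$ with the matrix in Proposition~\ref{prop:lgcn_closed_form} is a harmless bonus.
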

\begin{proof}
We start from the general result in Lemma~\ref{lem:lgcn_weight_infl} which states that
\begin{align*}
\mathcal{I}_\epsilon(\wb^\star) := \frac{\mathrm{d}\wb^\star(\epsilon)}{\mathrm{d}\epsilon}\Biggr|_{\epsilon=0} = -\Hb^{-1}\frac{\partial}{\partial\epsilon}\nabla_{\wb}\Lcal(\Cb+\epsilon\Xib, \Xb, \wb^\star, \yb)\Biggr|_{\epsilon=0},
\end{align*}
where $\Hb = \nabla^2_{\wb} \Lcal(\Cb, \Xb, \wb^\star, \yb)$ is the Hessian of the loss with respect to $\wb$ evaluated at $\wb^\star$.

Recalling the loss function in Definition~\ref{dfn:lin_gcn_optim}, when we introduce the perturbation $\Cb(\epsilon) = \Cb+\epsilon\Xib$, the loss becomes
\begin{align*}
\Lcal(\Cb+\epsilon\Xib, \Xb, \wb, \yb) = \frac{1}{2}\|\yb - (\Cb+\epsilon\Xib)\Xb \wb\|_2^2.
\end{align*}
Thus, the gradient with respect to $\wb$ is
\begin{align*}
\nabla_\wb \Lcal(\Cb+\epsilon\Xib, \Xb, \wb, \yb) = -\Xb^\top (\Cb+\epsilon\Xib)^\to\Pb\bigl[\yb - (\Cb+\epsilon\Xib)\Xb \wb\bigr].
\end{align*}
Evaluating at $\wb=\wb^\star$ yields
\begin{align*}
\nabla_\wb \Lcal(\Cb+\epsilon\Xib, \Xb, \wb^\star, \yb) = -\Xb^\top (\Cb+\epsilon\Xib)^\to\Pb\bigl[\yb - (\Cb+\epsilon\Xib)\Xb\wb^\star\bigr].
\end{align*}

We now differentiate this expression with respect to $\epsilon$ and evaluate at $\epsilon=0$. Writing the gradient as the sum of two terms, we have:
\begin{align*}
& ~ \nabla_\wb \Lcal(\Cb+\epsilon\Xib, \Xb, \wb^\star, \yb) \\
= &~ -\underbrace{\Xb^\top (\Cb+\epsilon\Xib)^\top  \yb}_{:= \Tb_1} + \underbrace{\Xb^\top (\Cb+\epsilon\Xib)^\top (\Cb+\epsilon\Xib)\Xb\wb^\star}_{:= \Tb_2} \\ 
=&~ -\Tb_1 + \Tb_2
.
\end{align*}

Now we differentiate each term with respect to $\epsilon$. Specifically, for the first term $\Tb_1$, we have the following result:
\begin{align*}
    \frac{\mathrm{d}\Tb_1}{\mathrm{d}\epsilon} =&~
    \frac{\mathrm{d}}{\mathrm{d}\epsilon}\Bigl[-\Xb^\top (\Cb+\epsilon\Xib)^\top y\Bigr]\\
    =&~ -\Xb^\top \Xib^\top y.
\end{align*}

For the second term, we can conclude by basic matrix algebra that:
\begin{align*}
    \frac{\mathrm{d}\Tb_2}{\mathrm{d}\epsilon} =&~ \frac{\mathrm{d}}{\mathrm{d}\epsilon}\Bigl[ \Xb^\top (\Cb+\epsilon\Xib)^\top (\Cb+\epsilon\Xib)\Xb\wb^\star \Bigr] \\ 
    =& ~\Xb^\top (\Cb^\top\Xib + \Xib^\top \Cb + 2\epsilon \Xib^\top \Xib )\Xb\wb^\star
\end{align*}

Combining both terms, we obtain
\begin{align*}
\frac{\partial}{\partial\epsilon}\nabla_\wb \Lcal(\Cb+\epsilon\Xib, \Xb, \wb^\star, \yb)\Biggr|_{\epsilon=0} = -\Xb^\top \Xib^\top \yb + \Xb^\top (\Cb^\top\Xib + \Xib^\top \Cb)\Xb\wb^\star.
\end{align*}
Substituting back into the expression for $\mathcal{I}_\epsilon(\wb^\star)$ gives:
\begin{align*}
\mathcal{I}_\epsilon(\wb^\star) 
&= -\Hb^{-1}\Biggl(-\Xb^\top \Xib^\top \yb + \Xb^\top (\Cb^\top\Xib + \Xib^\top \Cb)\Xb\wb^\star\Biggr)\\ 
&= \Hb^{-1}\Bigl(\Xb^\top \Xib^\top \yb - \Xb^\top \Cb^\top\Xib \Xb\wb^\star - \Xb^\top \Xib^\top \Cb \Xb\wb^\star\Bigr).
\end{align*}
This finishes the proof.

\end{proof}

\subsection{Edge Influence on Model Output}

In this section, we compute the influence function for edges on the final model output, considering both direct edge influence and the effect of model parameters on the output. First, we calculate the influence on all model outputs.

\begin{lemma}[Edge Influence on Model Output, Linear GCN Case]\label{thm:lgcn_out_infl_spec}
    Let $\Xib^{\Delta\mathcal{E}}\in\R^{n\times n}$ be a perturbation matrix as defined in Definition~\ref{dfn:perturb_matrix}. Let $\epsilon\in\R$ be a perturbation magnitude and $\wb^\star(\epsilon)$ be the optimal model weight after perturbation. Considering an edge perturbation $\Cb(\epsilon) := \Cb + \epsilon\Xib^{u, v}$ on the original propagation matrix $\Cb$, the sensitivity of the Linear GCN model output $f_{\mathrm{LGN}}(\Cb + \epsilon\Xib, \Xb; \wb^\star(\epsilon))$ as defined in Definition~\ref{dfn:linear_gcn} can be characterized by:
    \begin{align*}
        &~\mathcal{I}_\epsilon(f_{\mathrm{LGN}})\\ :=&~\frac{\d f_{\mathrm{LGN}}(\Cb + \epsilon\Xib, \Xb; \wb^\star(\epsilon))}{\d \epsilon}\Bigr|_{\epsilon=0}\\ 
        =&~ \Xib \Xb \wb^\star + \Cb \Xb \Hb^{-1}\Bigl(\Xb^\top \Xib^\top \yb - \Xb^\top \Cb^\top\Xib \Xb\wb^\star - \Xb^\top \Xib^\top \Cb \Xb\wb^\star\Bigr), 
    \end{align*}
    where $\Hb = \nabla^2_{\wb} \Lcal(\Cb, \Xb, \wb^\star, \yb)$ is the Hessian of the loss with respect to $\wb$ evaluated at $\wb^\star$.
\end{lemma}
\begin{proof}
    The edge perturbation $\epsilon\Xib$ influences $f_{\mathrm{LGN}}$ from two different aspects: propagation matrix and optimal model weights. Thus, the influence function can be derived by chain rule as follows:
    \begin{align*}
        &~\mathcal{I}_\epsilon(f_{\mathrm{LGN}}) \\
        =&~ \frac{\d f_{\mathrm{LGN}}(\Cb + \epsilon\Xib, \Xb; \wb^\star(\epsilon))}{\d \epsilon}\Bigr|_{\epsilon=0} \\ 
        =&~ (\frac{\partial f_{\mathrm{LGN}}(\Cb + \epsilon\Xib, \Xb; \wb^\star(\epsilon))}{\partial (\Cb+\epsilon\Xib)}\cdot \frac{\partial(\Cb+\epsilon\Xib)}{\partial\epsilon})\Bigr|_{\epsilon=0} \\
        &\quad\quad\quad+ (\frac{\partial f_{\mathrm{LGN}}(\Cb + \epsilon\Xib, \Xb; \wb^\star(\epsilon))}{\partial \wb^\star(\epsilon)} \cdot \frac{\partial \wb^\star(\epsilon)}{\partial\epsilon})\Bigr|_{\epsilon=0} \\ 
        =&~ \Xib \Xb \wb^\star + (\frac{\partial f_{\mathrm{LGN}}(\Cb + \epsilon\Xib, \Xb; \wb^\star(\epsilon))}{\partial \wb^\star(\epsilon)} \cdot \frac{\partial \wb^\star(\epsilon)}{\partial\epsilon})\Bigr|_{\epsilon=0} \\ 
        =&~ \Xib \Xb \wb^\star + \Cb \Xb (\frac{\partial \wb^\star(\epsilon)}{\partial\epsilon})\Bigr|_{\epsilon=0} \\
        =&~ \Xib \Xb \wb^\star + \Cb \Xb \Hb^{-1}\Bigl(\Xb^\top \Xib^\top \yb - \Xb^\top \Cb^\top\Xib \Xb\wb^\star - \Xb^\top \Xib^\top \Cb \Xb\wb^\star\Bigr).
    \end{align*}
where the first equality follows from the definition of the influence function, the second equality follows from the chain rule, the third and fourth equality follow from basic matrix calculus, and the last equality follows from Theorem~\ref{thm:lgcn_weight_infl_spec}.
\end{proof}

Next, we present an immediate corollary of this lemma and determine the influence of a single edge on all nodes' output.

\begin{corollary}[Single Edge Influence on Model Output, Linear GCN Case]\label{cor:lgcn_out_infl_spec_one_edge}
    Let $\Zb := \Cb\Xb$ and $\zb_l$ is the $l$‐th row of $\Zb$. Considering the influence of one specific edge $(v_i, v_j)$ on undirected graphs (i.e., $\Xib=\eb_i \eb_j^\top + \eb_j \eb_i^\top$), the influence function for the model output is:
    \begin{align*}
        \mathcal{I}_\epsilon(f_{\mathrm{LGN}}) = (\xb_j^\top \wb^\star)\eb_i + (\xb_i^\top \wb^\star)\eb_j + (\qb \otimes \Cb \Xb )\mathrm{vec}(\Hb^{-1}),
    \end{align*}
    where $\Hb = \nabla^2_{\wb} \Lcal(\Cb, \Xb, \wb^\star, \yb)$ is the Hessian of the loss with respect to $\wb$ evaluated at $\wb^\star$ and  
    $$\qb:= (y_j - \zb_j^\top \wb^\star)\xb_i + (y_i - \zb_i^\top \wb^\star)\xb_j -((\xb_j^\top \wb^\star)\zb_i+(\xb_i^\top \wb^\star)\zb_j).$$
\end{corollary}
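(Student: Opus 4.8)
The plan is to specialize Theorem~\ref{thm:lgcn_out_infl_spec} to the single-edge perturbation matrix $\Xib = \eb_i\eb_j^\top + \eb_j\eb_i^\top$ and then rewrite the matrix-weight contribution in vectorized form. First I would substitute $\Xib = \eb_i\eb_j^\top + \eb_j\eb_i^\top$ into the direct edge term $\Xib\Xb\wb^\star$. Using $\Xb = [\xb_1,\dots,\xb_n]^\top$ so that $\eb_j^\top\Xb = \xb_j^\top$, we get $\Xib\Xb\wb^\star = \eb_i(\xb_j^\top\wb^\star) + \eb_j(\xb_i^\top\wb^\star)$, which is exactly the first two terms $(\xb_j^\top\wb^\star)\eb_i + (\xb_i^\top\wb^\star)\eb_j$ of the claimed expression.

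Next I would handle the weight-mediated term $\Cb\Xb\,\Hb^{-1}\bigl(\Xb^\top\Xib^\top\yb - \Xb^\top\Cb^\top\Xib\Xb\wb^\star - \Xb^\top\Xib^\top\Cb\Xb\wb^\star\bigr)$. Writing $\Zb := \Cb\Xb$ with rows $\zb_l$, I compute each of the three pieces inside the parentheses under $\Xib = \eb_i\eb_j^\top + \eb_j\eb_i^\top$ (so $\Xib^\top = \Xib$). Concretely: $\Xb^\top\Xib^\top\yb = \Xb^\top(\eb_i\eb_j^\top+\eb_j\eb_i^\top)\yb = \xb_i y_j + \xb_j y_i$; $\Xb^\top\Cb^\top\Xib\Xb\wb^\star = \Zb^\top(\eb_i\eb_j^\top+\eb_j\eb_i^\top)\Xb\wb^\star = \zb_i(\xb_j^\top\wb^\star) + \zb_j(\xb_i^\top\wb^\star)$; and $\Xb^\top\Xib^\top\Cb\Xb\wb^\star = \Xb^\top(\eb_i\eb_j^\top+\eb_j\eb_i^\top)\Zb\wb^\star = \xb_i(\zb_j^\top\wb^\star) + \xb_j(\zb_i^\top\wb^\star)$. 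Collecting these and grouping the $y$ terms with the matching residual terms gives the vector $\qb = (y_j - \zb_j^\top\wb^\star)\xb_i + (y_i - \zb_i^\top\wb^\star)\xb_j - \bigl((\xb_j^\top\wb^\star)\zb_i + (\xb_i^\top\wb^\star)\zb_j\bigr)$, so the weight-mediated term becomes $\Cb\Xb\,\Hb^{-1}\qb = \Zb\Hb^{-1}\qb$.

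Finally I would convert $\Zb\Hb^{-1}\qb$ into the stated form $(\qb\otimes\Zb)\,\mathrm{vec}(\Hb^{-1})$. This is a standard vectorization identity: for conformable matrices, $\mathrm{vec}(\Zb\Hb^{-1}\qb) = (\qb^\top\otimes\Zb)\,\mathrm{vec}(\Hb^{-1})$ since $\qb$ is a vector and $\Zb\Hb^{-1}\qb$ is already a column vector equal to its own $\mathrm{vec}$; one only needs to be careful about whether $\qb$ or $\qb^\top$ appears in the Kronecker factor, matching the paper's convention. Assembling the direct term with this rewritten weight-mediated term yields the claimed identity. The only mild obstacle is bookkeeping: getting the signs right when distributing $\Hb^{-1}$ through the three summands and correctly pairing the label terms $y_i, y_j$ with the residuals $\zb_i^\top\wb^\star, \zb_j^\top\wb^\star$ to form $\qb$, and confirming the exact Kronecker-product orientation used; there is no substantive analytic difficulty beyond Theorem~\ref{thm:lgcn_out_infl_spec}.
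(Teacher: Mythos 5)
Your proposal is correct and follows essentially the same route as the paper, whose proof is a one-line appeal to ``basic algebra'' plus the identity $\mathrm{vec}(\Ab\Xb\Bb)=(\Bb\otimes\Ab)\,\mathrm{vec}(\Xb)$; you simply fill in the substitution of $\Xib=\eb_i\eb_j^\top+\eb_j\eb_i^\top$ into Theorem~\ref{thm:lgcn_out_infl_spec}, the collection of terms into $\qb$, and the vectorization step. Your remark about the Kronecker orientation is well taken: the standard identity gives $(\qb^\top\otimes\Cb\Xb)\,\mathrm{vec}(\Hb^{-1})$, and the corollary's $(\qb\otimes\Cb\Xb)$ reflects the paper's transpose-free statement of the vec identity rather than any gap in your argument.
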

\begin{proof}
    This follows from basic algebra and the fact that \\$\mathrm{vec}(\Ab \Xb \Bb)=(\Bb \otimes \Ab) \mathrm{vec}(\Xb)$.
\end{proof}

Then, taking an element $\mathcal{I}_\epsilon(f_{\mathrm{LGN}})_k$ from the vector-form influence function over all model outputs $\mathcal{I}_\epsilon(f_{\mathrm{LGN}})$, we obtain the final result on the influence of a single edge on a single node's output.

\begin{theorem}[Single Edge Influence on Single Node's Model Output, Linear GCN Case]\label{cor:lgcn_out_pernode_infl_spec_one_edge}
     Let $\Zb := \Cb\Xb$ and $\zb_l$ is the $l$‐th row of $\Zb$. Considering the influence of one specific edge $(v_i, v_j)$ on undirected graphs (i.e., $\Xib=\eb_i\eb_j^\top + \eb_j \eb_i^\top$), the influence function for the model output for specific node $v_k$ is:
    \begin{align*}
        \mathcal{I}_\epsilon(f_{\mathrm{LGN}})_k =&~ \boldsymbol{1}\{v_k=v_i\}\cdot (\xb^\top_v \wb^\star) + \boldsymbol{1}\{v_k=v_j\}\cdot (\xb^\top_u \wb^\star) \\
         &\quad\quad+ \qb^\top \Hb^{-1}\zb_k \\ 
        =&~  \underbrace{\boldsymbol{1}{\{v_k=v_i\}}\cdot (\xb^\top_i \wb^\star) + \boldsymbol{1}{\{v_k=v_j\}}\cdot (\xb^\top_j \wb^\star)}_{\mathrm{edge~influence}} \\
        &~~~~- \underbrace{\langle(\xb_j^\top \wb^\star)\zb_i+(\xb_i^\top \wb^\star) \zb_j, \zb_k\rangle_{\Hb^{-1}}}_{\mathrm{magnitude~weight~influence}} \\
        &~~~~ 
        + \underbrace{\langle (y_j - \zb_j^\top \wb^\star)\xb_i+ (y_i - \zb_i^\top \wb^\star) \xb_j, \zb_k\rangle_{\Hb^{-1}}}_{\mathrm{error~weight~influence}},
    \end{align*}
     where $\Hb = \nabla^2_{\wb} \Lcal(\Cb, \Xb, \wb^\star, \yb)$ is the Hessian of the loss with respect to $\wb$ evaluated at $\wb^\star$.
\end{theorem}
\begin{proof}
    This follows from Corollary~\ref{cor:lgcn_out_infl_spec_one_edge}, basic algebra, the definition of $z$, and the definition of inner product w.r.t. a PSD matrix ($\langle \cdot,~\cdot\rangle_{\Ab}$). 
\end{proof}
\section{Preliminary Study}\label{sec:prelim_exp}

In this section, we present empirical evidence from preliminary experiments to support our main claims, probability similarity gap (Claim~\ref{clm:prob_sim_gap}) and confidence pitfall (Claim~\ref{clm:prob_trend}), in this paper.

\subsection{Probability Similarity Gap}

\begin{table*}[!ht]
\caption{\textbf{Average black-box probability similarity between different types of edges}. We consider three types of edges: negative edges ($\Qcal^-$), unlearned edges ($\Qcal^+_{\mathrm{un}}$), and other membership edges ($\Qcal^+_{\mathrm{mem}}$). The similarity measure is based on JS Divergence.} \label{tab:js_divergence}
  \centering
  \vskip -1em
  \begin{tabular}{llccc}
    \toprule
    \textbf{Dataset}      & \textbf{Unlearn Method} & $\mathrm{ProbSim}(\Qcal^-)$           & $\mathrm{ProbSim}(\Qcal^+_{\mathrm{un}})$      & $\mathrm{ProbSim}(\Qcal^+_{\mathrm{mem}})$           \\
    \midrule
    \multirow{3}{*}{\textbf{Cora}}        
                 & GIF             & 0.1979 ± 0.3147   & 0.6552 ± 0.3457   & 0.8001 ± 0.2689   \\
                 & CEU             & 0.1997 ± 0.3172   & 0.6553 ± 0.3502   & 0.8122 ± 0.2528   \\
                 & GA              & 0.1955 ± 0.3119   & 0.6511 ± 0.3486   & 0.8101 ± 0.2625   \\
    \midrule
    \multirow{3}{*}{\textbf{Citeseer}}   
                 & GIF             & 0.2689 ± 0.2997   & 0.6446 ± 0.3046   & 0.8250 ± 0.2116   \\
                 & CEU             & 0.2492 ± 0.3145   & 0.6248 ± 0.3317   & 0.8251 ± 0.2311   \\
                 & GA              & 0.2695 ± 0.3071   & 0.6397 ± 0.3200   & 0.8340 ± 0.2051   \\
    \midrule
    \multirow{3}{*}{\textbf{Pubmed}}     
                 & GIF             & 0.5833 ± 0.3507   & 0.7450 ± 0.2752   & 0.8843 ± 0.1682   \\
                 & CEU             & 0.5799 ± 0.3677   & 0.7494 ± 0.2856   & 0.8954 ± 0.1741   \\
                 & GA              & 0.5997 ± 0.3687   & 0.7548 ± 0.2865   & 0.9031 ± 0.1677   \\
    \midrule
    \multirow{3}{*}{\textbf{LastFM-Asia}}
                 & GIF             & 0.1608 ± 0.3413   & 0.8529 ± 0.3278   & 0.9308 ± 0.2234   \\
                 & CEU             & 0.1631 ± 0.3433   & 0.8564 ± 0.3224   & 0.9330 ± 0.2129   \\
                 & GA              & 0.1638 ± 0.3451   & 0.8439 ± 0.3411   & 0.9238 ± 0.2373   \\
    \bottomrule
  \end{tabular}
\end{table*}

To validate that different types of edges require different levels of similarity thresholds (Claim~\ref{clm:prob_sim_gap}), we present an important preliminary study in this section. Specifically, we adopt a GCN backbone and define a specific query set $\Qcal = \Qcal^+_{\mathrm{un}} \cup \Qcal^+_{\mathrm{mem}} \cup \Qcal^-$, which consists of 5\% of edges marked as unlearned ($\Qcal^+_{\mathrm{un}}$), 5\% of other membership edges ($\Qcal^+_{\mathrm{mem}}$), and 10\% of negative edges ($\Qcal^-$). For any two nodes $(v_i, v_j) \in \Qcal$, we compute the following similarity metric based on Jensen-Shannon divergence:
\begin{align*}
    \phi(\pb_i, \pb_j) 
    &= 1 - \frac{1}{2} \left[ \mathrm{KL}(\pb_i \,\|\, \mathbf{m}) + \mathrm{KL}(\pb_j \,\|\, \mathbf{m}) \right],
\end{align*}
where $\pb_i$ and $\pb_j$ are the predictive probability distributions of $v_i$ and $v_j$ from the unlearned victim model, and $\mb = \frac{1}{2}(\pb_i + \pb_j)$. The higher $\phi(\pb_i, \pb_j)$ is, the more similar the model's black-box predictions are on the two nodes. 

We report the mean and standard deviation of the probability similarity for each group in Table~\ref{tab:js_divergence}. The key observations are:

(i) There exists a clear and consistent gap in the probability similarity among the three edge types, aligning with Claim~\ref{clm:prob_sim_gap}:
\begin{align*}
    \mathrm{ProbSim}(\Qcal^-) < \mathrm{ProbSim}(\Qcal^+_{\mathrm{un}}) < \mathrm{ProbSim}(\Qcal^+_{\mathrm{mem}}).
\end{align*}
This supports the need for an adaptive prediction mechanism that distinguishes between unlearned and other membership edges, potentially improving prediction accuracy. Accordingly, our model in Eq.~\eqref{eq:attack_model} incorporates a learnable transformation on trend features to adjust predicted similarities obtained from MIA methods.

(ii) The variance within each group is relatively large, indicating that although a similarity gap exists, simple probability similarity computation may not be sufficient for membership inference. For instance, on the Cora dataset, the average probability similarity in $\Qcal^-$ is around 0.195, while its standard variance is nearly 0.31. This effect arises because, although most edges in $\Qcal^-$ have similarity near 0, there are outliers with high similarity levels. To address this large variance, we incorporate a broad range of probability-based similarity features, alongside node feature similarities used in prior MIA methods. These additional features enhance similarity representation and help stabilize membership inference.

\subsection{Confidence Pitfall}

\begin{figure*}[!ht]
  \centering
  \begin{subfigure}[b]{0.32\linewidth}
    \centering
    \includegraphics[width=\linewidth]{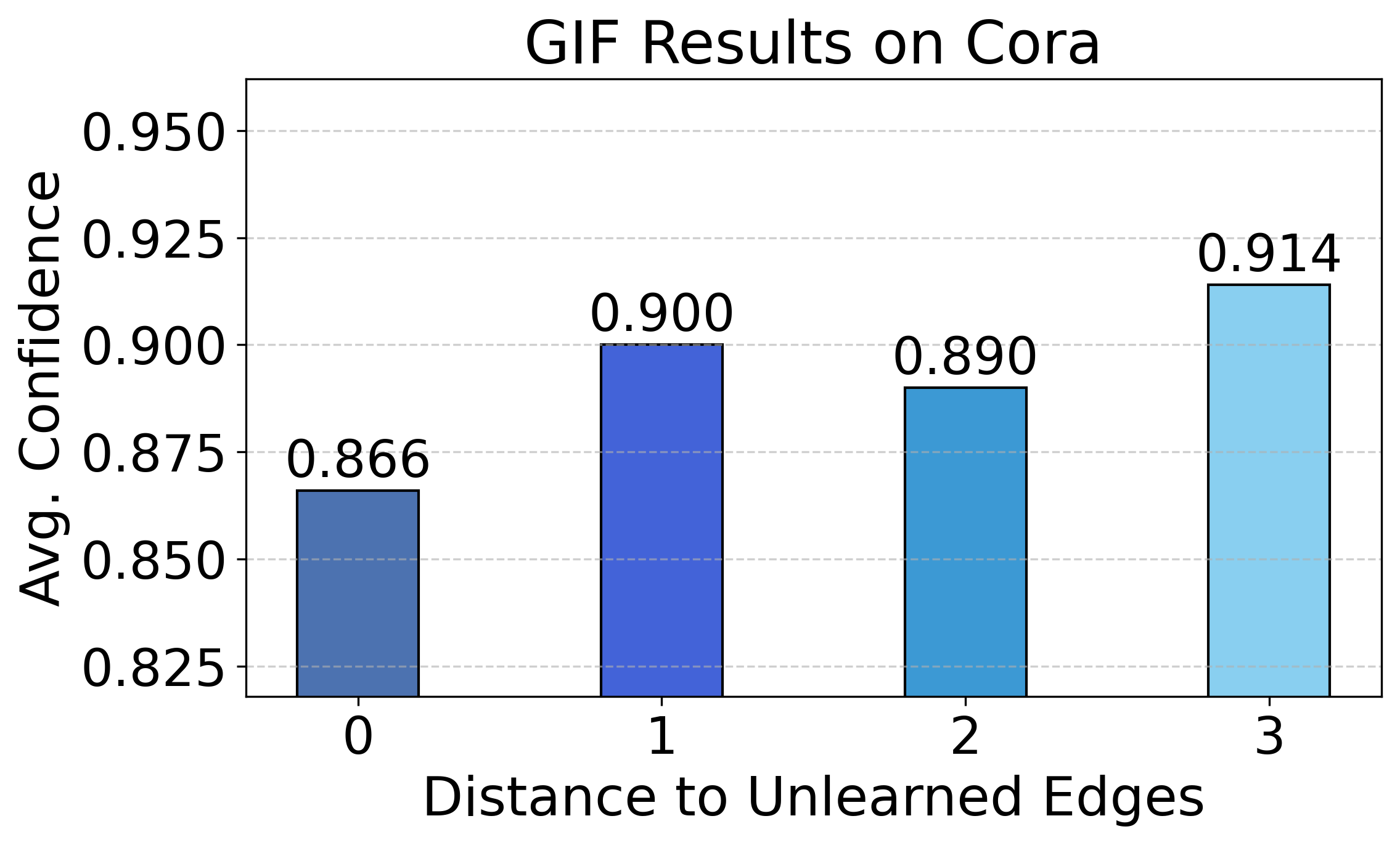}
    \caption{GIF results.}
  \end{subfigure}
  \hfill
  \begin{subfigure}[b]{0.32\linewidth}
    \centering
    \includegraphics[width=\linewidth]{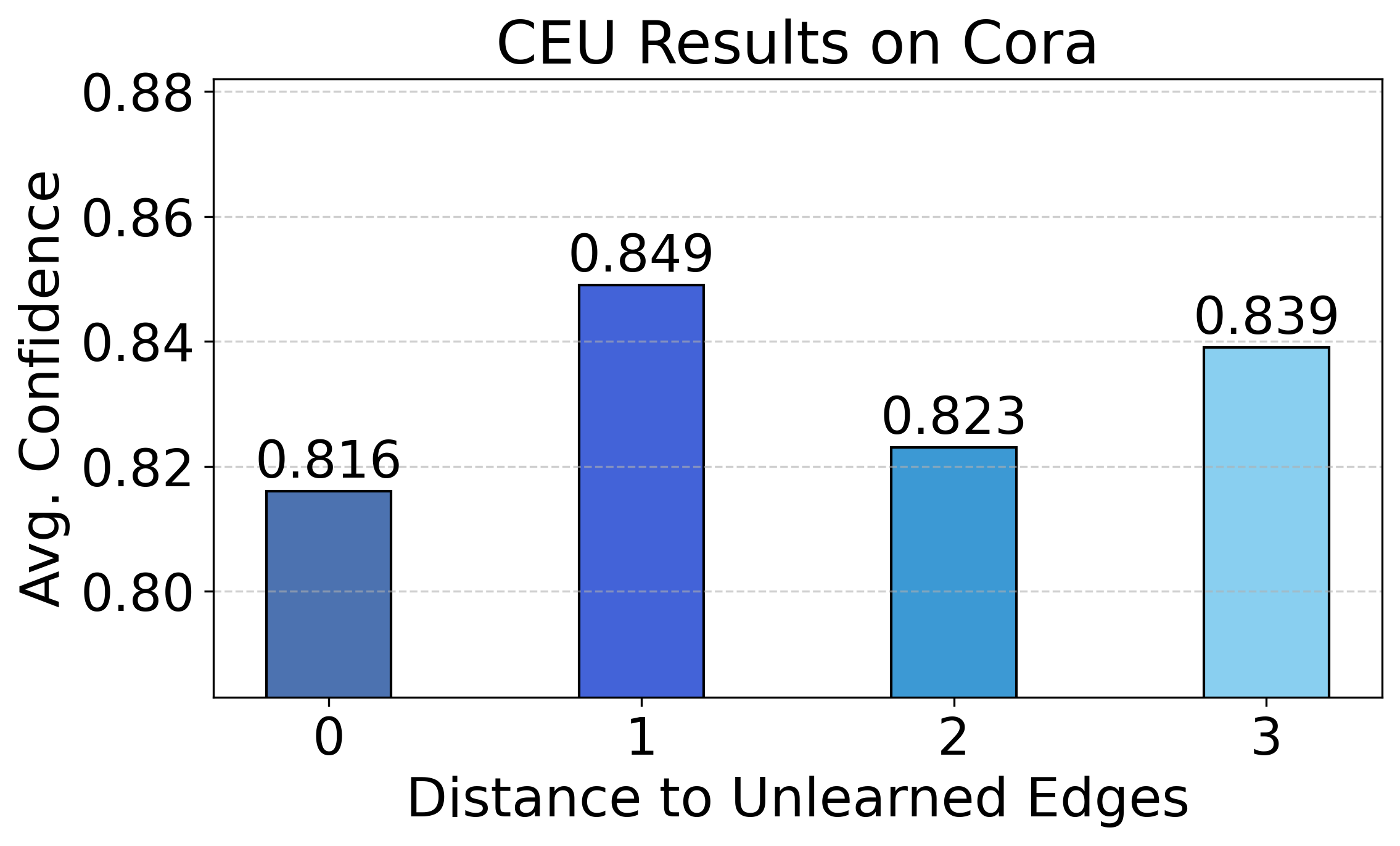}
    \caption{CEU results.}
  \end{subfigure}
  \hfill
  \begin{subfigure}[b]{0.32\linewidth}
    \centering
    \includegraphics[width=\linewidth]{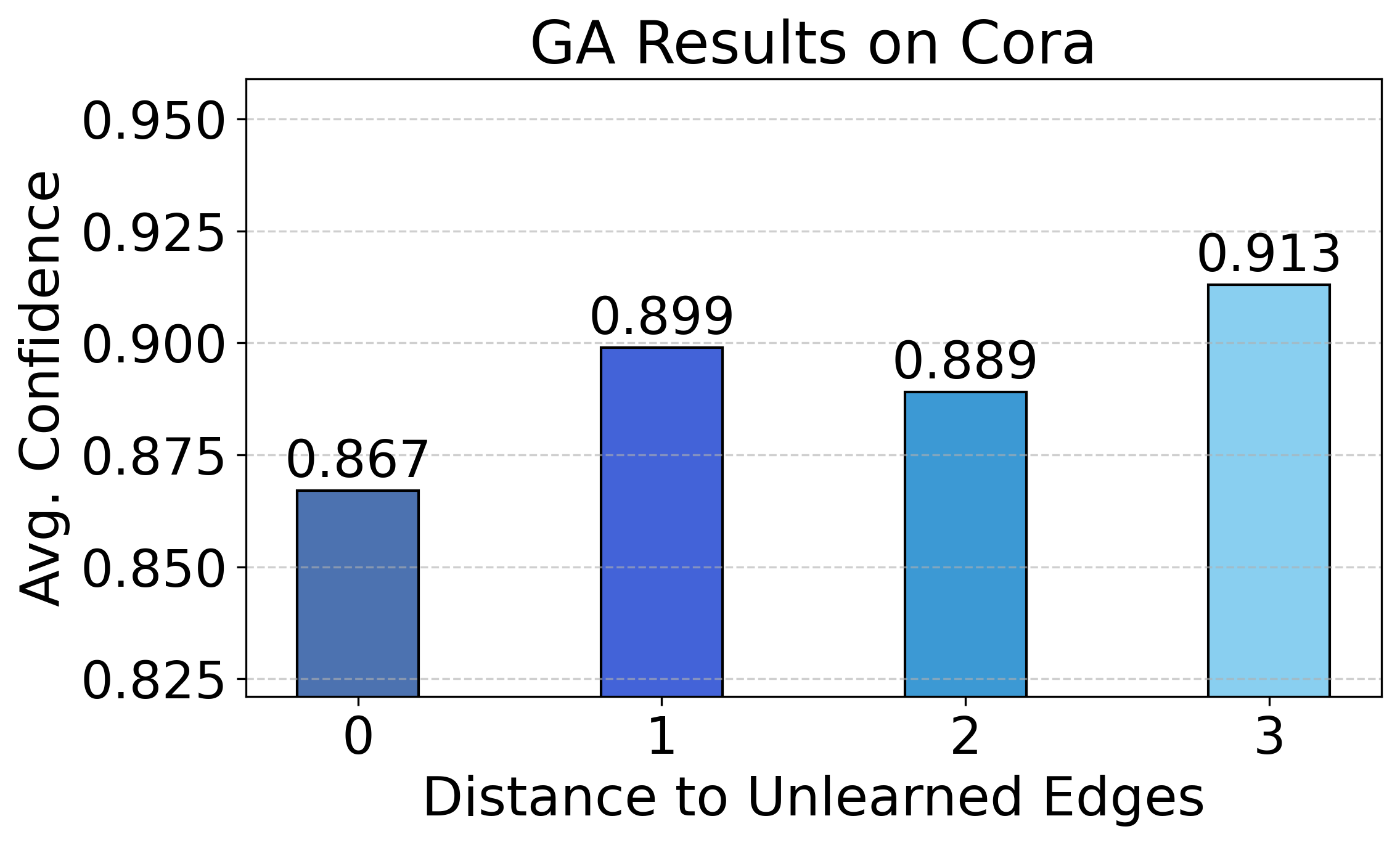}
    \caption{GA results.}
  \end{subfigure}
  \caption{\textbf{The relation between average model confidence and distance to unlearned edges on Cora.}}
  \label{fig:confidence_cora}
\end{figure*}

\begin{figure*}[!ht]
  \centering
  \begin{subfigure}[b]{0.32\linewidth}
    \centering
    \includegraphics[width=\linewidth]{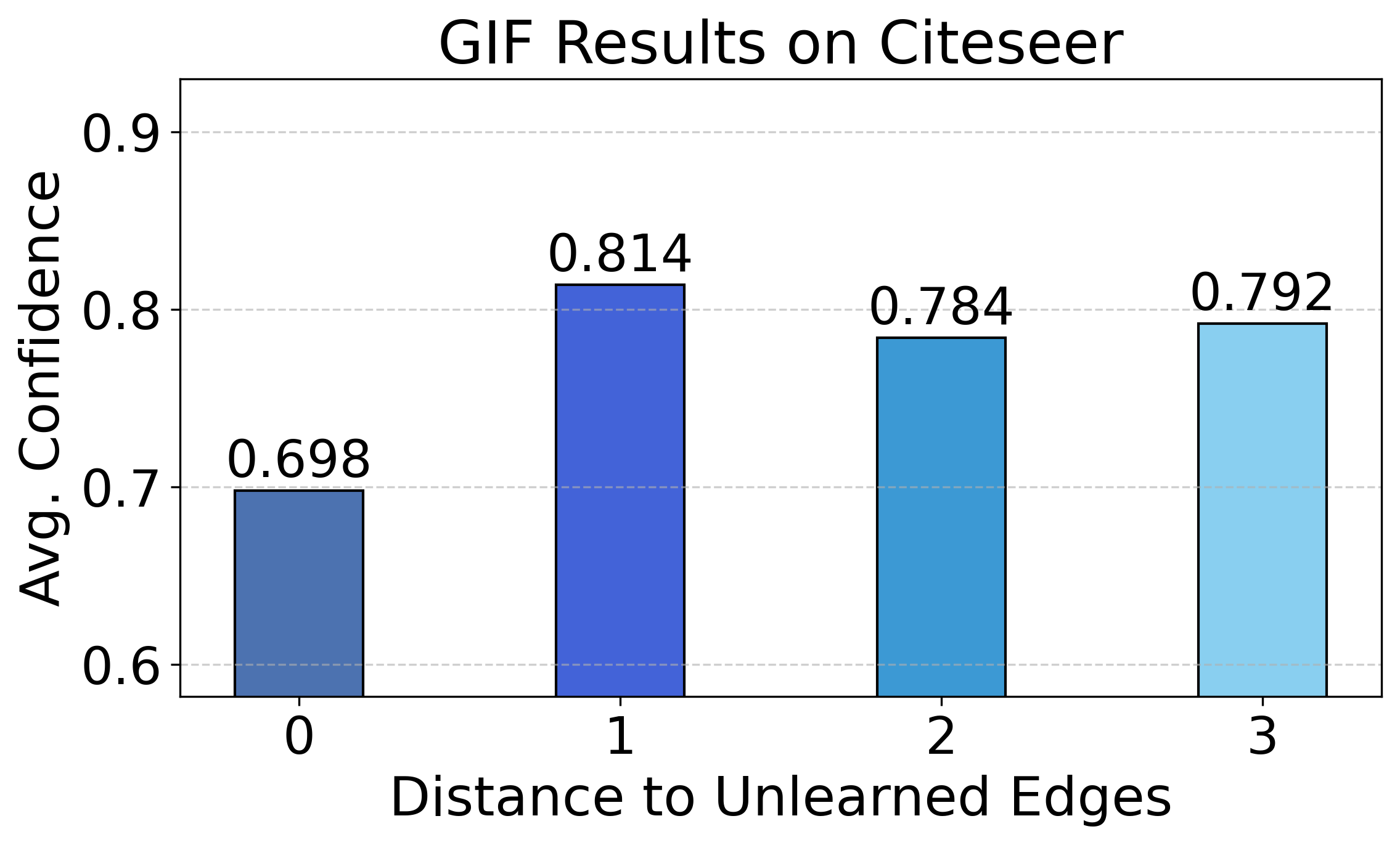}
    \caption{GIF results.}
  \end{subfigure}
  \hfill
  \begin{subfigure}[b]{0.32\linewidth}
    \centering
    \includegraphics[width=\linewidth]{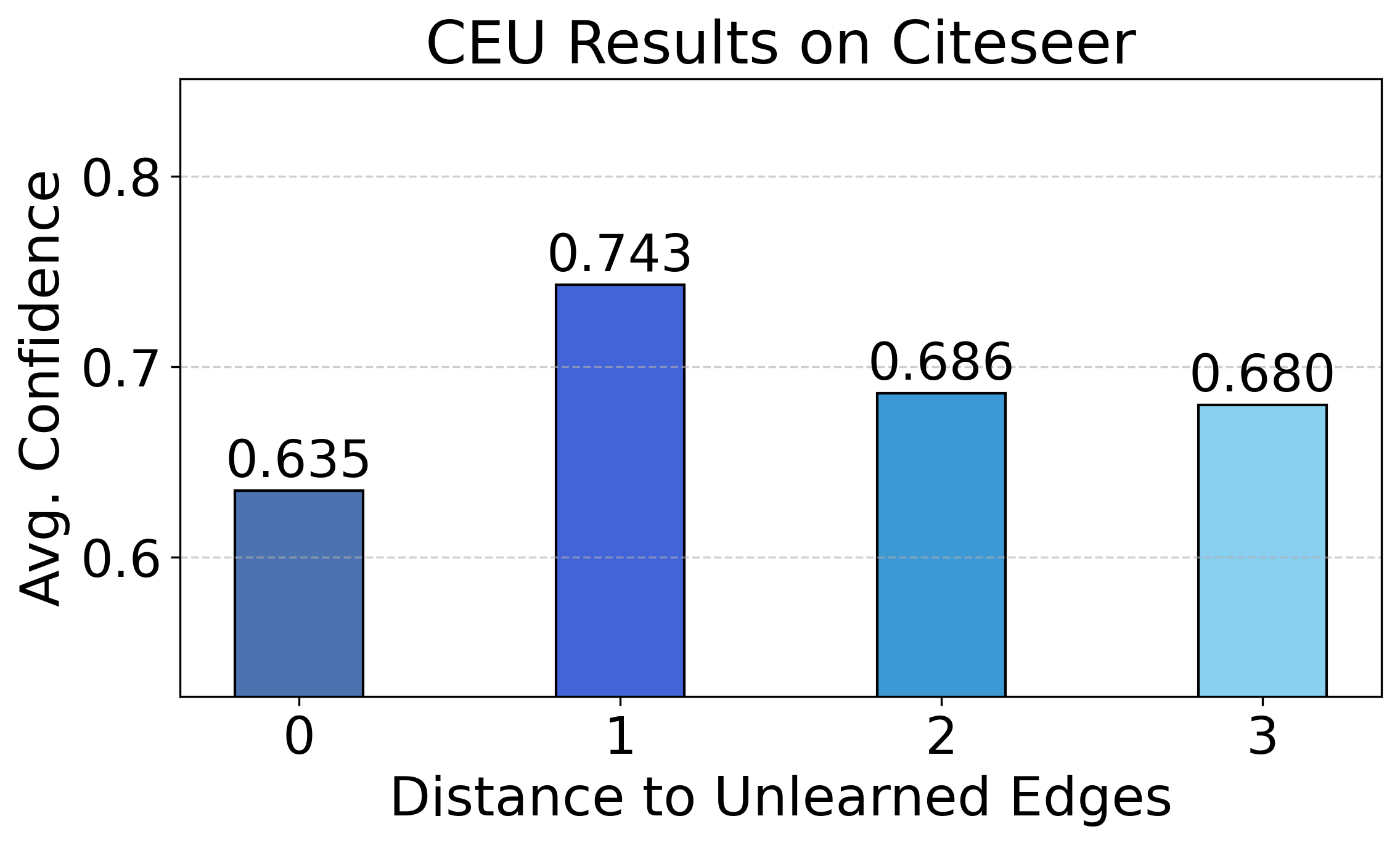}
    \caption{CEU results.}
  \end{subfigure}
  \hfill
  \begin{subfigure}[b]{0.32\linewidth}
    \centering
    \includegraphics[width=\linewidth]{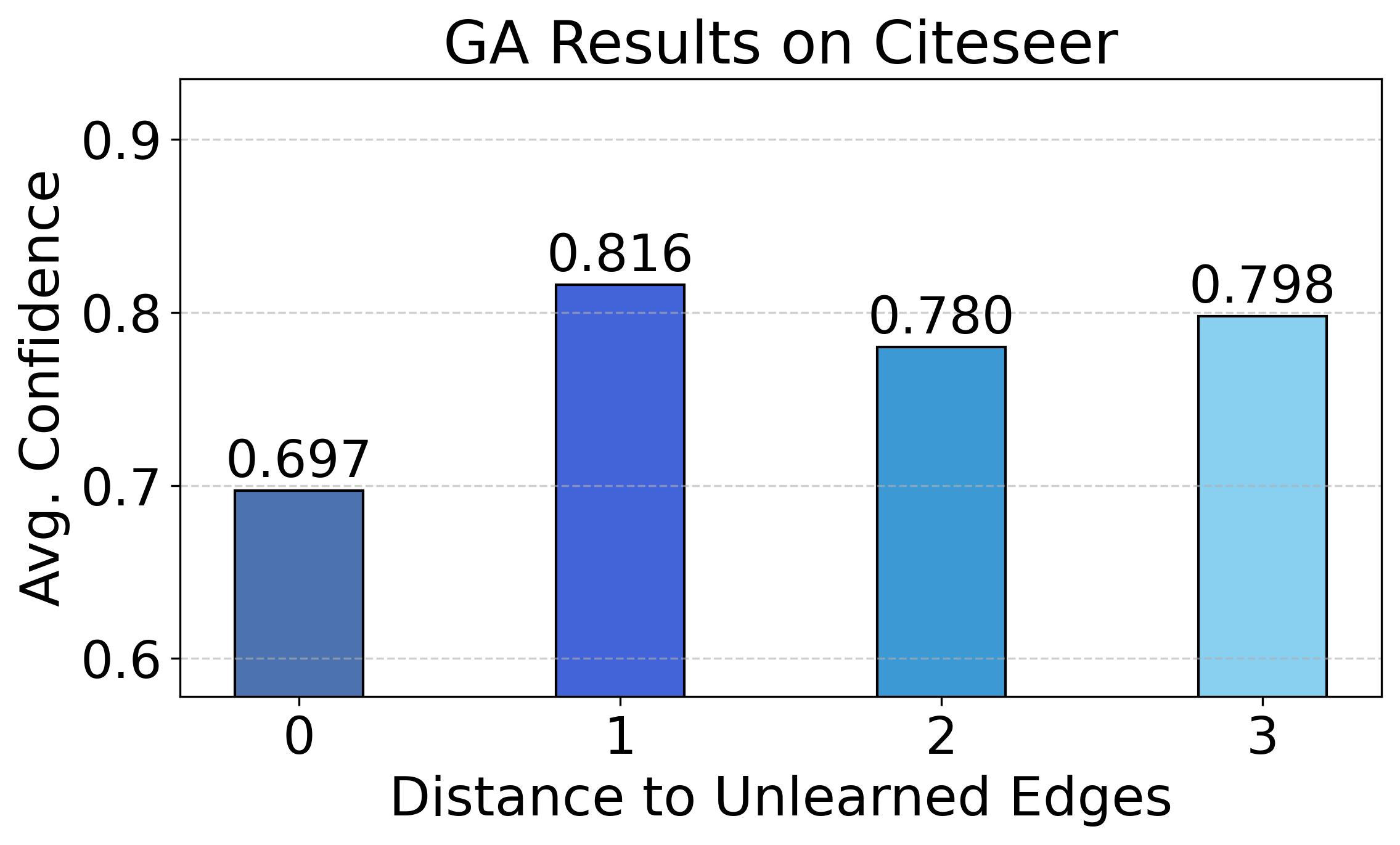}
    \caption{GA results.}
  \end{subfigure}
  \caption{\textbf{The relation between average model confidence and distance to unlearned edges on Citeseer.}}
  \label{fig:confidence_citeseer}
\end{figure*}

To demonstrate that the confidence of nodes connected to unlearned edges drops significantly compared to other nodes, we conduct a preliminary experiment. Specifically, we use GCN as the victim model. We randomly select 5\% of the edges as unlearned edges $\Delta\Ecal$ and apply three unlearning methods, GIF, CEU, and GA, to unlearn the trained GCN. The selected edges are also removed from the graph.

We then perform inference on all nodes using the unlearned GNN and compute the average confidence for nodes grouped by their shortest path distance to the endpoints of unlearned edges.

The results on two datasets are shown in Figure~\ref{fig:confidence_cora} and Figure~\ref{fig:confidence_citeseer}. We summarize the following observation:

For all datasets, nodes directly connected to unlearned edges (distance = 0) exhibit substantially lower confidence than their immediate neighbors (distance $\geq$ 1). 
This aligns with our theoretical analysis in Section~\ref{sec:design_motivation}, which shows that the influence of edge unlearning on the endpoint is significantly greater than on other nodes. 
This empirical finding suggests that the confidence gap between a node and its neighborhood is a strong indicator of its connection to an unlearned edge. Consequently, an attacker can adjust the similarity threshold accordingly to enhance membership inference accuracy.

\section{Experimental Settings}\label{sec:append_exp_settings}

{\bf Model Parameters.} All victim models use an embedding size of 16 and are trained for 100 epochs. For learning rates and weight decays, we follow the settings of GIF~\cite{wu2023gif}. All models are optimized using the Adam optimizer~\cite{kingma2015adam}. For all neural attack models, including StealLink~\cite{he2021stealing}, MIA-GNN~\cite{olatunji2021membership}, and TrendAttack, we follow the settings of StealLink~\cite{he2021stealing}, using a hidden dimension of 64 and a 2-layer MLP to encode features. All membership inference models are trained using learning rate = 0.01, weight decay = 0.0001, and the Adam optimizer~\cite{kingma2015adam}. They are trained with the binary cross-entropy loss.

\noindent{\bf Unlearning Method Settings.} We now supplement the missing details of our unlearning methods.
\begin{itemize}[leftmargin=*]
    \item \textbf{GIF}~\cite{wu2023gif}: The number of estimation iterations is 100 and the damping factor is 0. On the Cora and Citeseer datasets, the scale factor $\lambda$ is set to 500, following the official implementation. On the Pubmed and LastFM-Asia datasets, we search $\lambda$ in $\{10^1, 10^2, 10^3, 10^4, \ldots\}$, following the original search space. This ensures that the unlearned models maintain utility and produce meaningful outputs. We select the smallest $\lambda$ that results in a meaningful model and find that $\lambda=10^3$ works well for Pubmed and $\lambda=10^4$ for LastFM-Asia.
    \item \textbf{CEU}~\cite{wu2023certified}: For a fair comparison, we use the same parameter search space as GIF for all influence-function-related parameters, including iterations, damping factor, and $\lambda$. For the noise variance, we search in $\{0.1, 0.5, 0.01, 0.005, 0.001, 0.0005, 0.0001\}$ to ensure meaningful unlearning performance.
    \item \textbf{GA}~\cite{wu2024graphguard}: We follow the official GraphGuard settings and use 1 gradient ascent epoch. We tune the unlearning magnitude $\alpha$ in $\{1, 0.5, 0.1, 0.05, 0.01, 0.005, 0.001\}$ and find that $\alpha = 0.5$ gives the best results.
\end{itemize}

\noindent{\bf Baselines.} Details for our baselines are as follows:
\begin{itemize}[leftmargin=*]
    \item \textbf{GraphSAGE}~\cite{hamilton2017inductive}: We use a simple link prediction model trained on the shadow graph to perform link prediction on the unlearned graph. The embedding size is 64, with 2 layers. The negative-to-positive link ratio is 1:1.
    \item \textbf{NCN}~\cite{wang2024neural}: This is a state-of-the-art link prediction method. We follow the official settings from their code repository. We search across the provided backbones: GCN~\cite{kipf2017semi}, GIN~\cite{xu2019powerful}, and SAGE~\cite{wang2023inductive}. The embedding dimension is set to the default value of 256. The batch size is 16, the learning rate is 0.01, and the number of training epochs is 100.
    \item \textbf{MIA-GNN}~\cite{olatunji2021membership}: Since no official implementation is available, we follow the settings described in the paper. We re-implement the model using a 2-layer MLP feature extractor with a hidden dimension of 64. We use another 2-layer MLP with a dimension of 16 as the link predictor. Training uses binary cross-entropy loss and the same setup as StealLink, ensuring a fair comparison.
    \item \textbf{StealLink}~\cite{he2021stealing}: This is a representative membership inference attack. We use its strongest variant, Attack-7, which has access to the shadow dataset and partial features and connectivity from the target dataset. We follow the official codebase and use all 8 distance metrics (e.g., cosine, Euclidean). For the probability metric, we search among entropy, KL divergence, and JS divergence, and report the best result. The reference model is a 2-layer MLP with an embedding size of 16.
    \item \textbf{GroupAttack}~\cite{zhang2023demystifying}: This is a recent membership inference attack that relies solely on hard labels and thresholding. We search the threshold hyperparameter $\alpha$ on the shadow dataset in the range $[0, 1]$ with a step size of 0.05.
\end{itemize}

\noindent{\bf Reproducibility.}
All experiments are conducted using Python 3.12.2, PyTorch 2.3.1+cu121, and PyTorch Geometric 2.5.3. The experiments run on a single NVIDIA RTX A6000 GPU. The server used has 64 CPUs, and the type is AMD EPYC 7282 16-Core CPU. All source code is provided in the supplementary materials.
\section{Additional Experiments}\label{sec:more_experiments}

\begin{table*}[!ht]
\caption{\textbf{Main Comparison Results on Cora and Citeseer (with variance)}. We present the AUC scores for attack methods across different edge groups, now including mean $\pm$ variance. The best results are highlighted in \textbf{bold}, while the second-best results are \underline{underlined}.}
\label{tab:main_with_var_cora_citeseer}
\centering
\vskip -1em
\resizebox{\textwidth}{!}{
\begin{tabular}{llccc|ccc}
\toprule
\multirow{2}{*}{\textbf{Unlearn method}} & \multirow{2}{*}{\textbf{Attack}}
  & \multicolumn{3}{c}{\textbf{Cora}}
  & \multicolumn{3}{c}{\textbf{Citeseer}} \\
\cmidrule(lr){3-5} \cmidrule(lr){6-8}
 & & Unlearned & Original & All
   & Unlearned & Original & All \\
\midrule
\textbf{GIF} 
 & GraphSAGE    & $0.5356 \pm 0.0124$ & $0.5484 \pm 0.0180$ & $0.5420 \pm 0.0129$  
                & $0.5275 \pm 0.0452$ & $0.5216 \pm 0.0535$ & $0.5246 \pm 0.0474$ \\
 & NCN          & $0.7403 \pm 0.0309$ & $0.7405 \pm 0.0222$ & $0.7404 \pm 0.0237$  
                & $0.6750 \pm 0.0881$ & $0.6872 \pm 0.0799$ & $0.6811 \pm 0.0816$ \\
 & MIA-GNN      & $0.7547 \pm 0.0809$ & $0.7916 \pm 0.0939$ & $0.7732 \pm 0.0865$  
                & $0.7802 \pm 0.0251$ & $0.8245 \pm 0.0230$ & $0.8023 \pm 0.0210$ \\
 & StealLink    & $0.7841 \pm 0.0357$ & $0.8289 \pm 0.0576$ & $0.8065 \pm 0.0395$  
                & $0.7369 \pm 0.0463$ & $\underline{0.8404 \pm 0.0360}$ & $0.7887 \pm 0.0373$ \\
 & GroupAttack  & $0.7982 \pm 0.0072$ & $0.8053 \pm 0.0190$ & $0.8018 \pm 0.0125$  
                & $0.7771 \pm 0.0092$ & $0.7618 \pm 0.0069$ & $0.7695 \pm 0.0051$ \\
 & \textbf{TrendAttack-MIA} 
                & $\underline{0.8240 \pm 0.0209}$ & $\underline{0.8448 \pm 0.0185}$ & $\underline{0.8344 \pm 0.0189}$  
                & $\underline{0.8069 \pm 0.0185}$ & $0.8078 \pm 0.0284$ & $\underline{0.8073 \pm 0.0228}$ \\
 & \textbf{TrendAttack-SL}  
                & $\mathbf{0.8309 \pm 0.0250}$ & $\mathbf{0.8527 \pm 0.0140}$ & $\mathbf{0.8418 \pm 0.0188}$  
                & $\mathbf{0.8410 \pm 0.0169}$ & $\mathbf{0.8430 \pm 0.0276}$ & $\mathbf{0.8420 \pm 0.0207}$ \\
\midrule
\textbf{CEU} 
 & GraphSAGE    & $0.5356 \pm 0.0124$ & $0.5484 \pm 0.0180$ & $0.5420 \pm 0.0129$  
                & $0.5275 \pm 0.0452$ & $0.5216 \pm 0.0535$ & $0.5246 \pm 0.0474$ \\
 & NCN          & $0.7403 \pm 0.0309$ & $0.7405 \pm 0.0222$ & $0.7404 \pm 0.0237$  
                & $0.6750 \pm 0.0881$ & $0.6872 \pm 0.0799$ & $0.6811 \pm 0.0816$ \\
 & MIA-GNN      & $0.7458 \pm 0.0761$ & $0.7810 \pm 0.0928$ & $0.7634 \pm 0.0840$  
                & $0.7718 \pm 0.0262$ & $0.8248 \pm 0.0264$ & $0.7983 \pm 0.0219$ \\
 & StealLink    & $0.7901 \pm 0.0224$ & $\underline{0.8486 \pm 0.0099}$ & $0.8193 \pm 0.0068$  
                & $0.7643 \pm 0.0242$ & $\mathbf{0.8450 \pm 0.0257}$ & $\underline{0.8046 \pm 0.0217}$ \\
 & GroupAttack  & $0.7941 \pm 0.0086$ & $0.7976 \pm 0.0137$ & $0.7958 \pm 0.0105$  
                & $0.7557 \pm 0.0092$ & $0.7458 \pm 0.0131$ & $0.7508 \pm 0.0107$ \\
 & \textbf{TrendAttack-MIA} 
                & $\underline{0.8194 \pm 0.0170}$ & $0.8333 \pm 0.0213$ & $\underline{0.8263 \pm 0.0178}$  
                & $\underline{0.7933 \pm 0.0206}$ & $0.8041 \pm 0.0261$ & $0.7987 \pm 0.0226$ \\
 & \textbf{TrendAttack-SL}  
                & $\mathbf{0.8467 \pm 0.0229}$ & $\mathbf{0.8612 \pm 0.0113}$ & $\mathbf{0.8539 \pm 0.0149}$  
                & $\mathbf{0.8514 \pm 0.0214}$ & $\underline{0.8400 \pm 0.0216}$ & $\mathbf{0.8457 \pm 0.0208}$ \\
\midrule
\textbf{GA}  
 & GraphSAGE    & $0.5356 \pm 0.0124$ & $0.5484 \pm 0.0180$ & $0.5420 \pm 0.0129$  
                & $0.5275 \pm 0.0452$ & $0.5216 \pm 0.0535$ & $0.5246 \pm 0.0474$ \\
 & NCN          & $0.7403 \pm 0.0309$ & $0.7405 \pm 0.0222$ & $0.7404 \pm 0.0237$  
                & $0.6750 \pm 0.0881$ & $0.6872 \pm 0.0799$ & $0.6811 \pm 0.0816$ \\
 & MIA-GNN      & $0.7676 \pm 0.0584$ & $0.8068 \pm 0.0489$ & $0.7872 \pm 0.0531$  
                & $0.7798 \pm 0.0146$ & $0.8353 \pm 0.0175$ & $0.8076 \pm 0.0117$ \\
 & StealLink    & $0.7862 \pm 0.0402$ & $0.8301 \pm 0.0667$ & $0.8082 \pm 0.0475$  
                & $0.7479 \pm 0.0512$ & $\underline{0.8431 \pm 0.0305}$ & $0.7955 \pm 0.0332$ \\
 & GroupAttack  & $0.7945 \pm 0.0133$ & $0.8042 \pm 0.0123$ & $0.7993 \pm 0.0122$  
                & $0.7662 \pm 0.0112$ & $0.7563 \pm 0.0080$ & $0.7613 \pm 0.0086$ \\
 & \textbf{TrendAttack-MIA} 
                & $\underline{0.8193 \pm 0.0276}$ & $\mathbf{0.8397 \pm 0.0219}$ & $\underline{0.8295 \pm 0.0244}$  
                & $\underline{0.8080 \pm 0.0135}$ & $0.8249 \pm 0.0331$ & $\underline{0.8165 \pm 0.0227}$ \\
 & \textbf{TrendAttack-SL}  
                & $\mathbf{0.8270 \pm 0.0307}$ & $\underline{0.8382 \pm 0.0308}$ & $\mathbf{0.8326 \pm 0.0287}$  
                & $\mathbf{0.8628 \pm 0.0131}$ & $\mathbf{0.8614 \pm 0.0298}$ & $\mathbf{0.8621 \pm 0.0209}$ \\
\bottomrule
\end{tabular}
}
\vskip -0.5em
\end{table*}

\begin{table*}[!ht]
\centering
\caption{\textbf{Main Comparison Results on Pubmed and LastFM-Asia (with variance)}. We present the AUC scores for attack methods across different edge groups, now including mean $\pm$ variance. The best results are highlighted in \textbf{bold}, while the second-best results are \underline{underlined}.}
\label{tab:main_with_var_pubmed_lastfm}
\vskip -1em
\resizebox{\textwidth}{!}{
\begin{tabular}{llccc|ccc}
\toprule
\multirow{2}{*}{\textbf{Unlearn method}} & \multirow{2}{*}{\textbf{Attack}}
  & \multicolumn{3}{c}{\textbf{Pubmed}}
  & \multicolumn{3}{c}{\textbf{LastFM-Asia}} \\
\cmidrule(lr){3-5} \cmidrule(lr){6-8}
 & & Unlearned & Original & All
   & Unlearned & Original & All \\
\midrule
\textbf{GIF} 
 & GraphSAGE    & $0.6503 \pm 0.0114$ & $0.6457 \pm 0.0134$ & $0.6480 \pm 0.0118$  
                & $0.6914 \pm 0.0620$ & $0.6853 \pm 0.0613$ & $0.6884 \pm 0.0611$ \\
 & NCN          & $0.6661 \pm 0.0321$ & $0.6718 \pm 0.0314$ & $0.6690 \pm 0.0312$  
                & $0.7283 \pm 0.0177$ & $0.7273 \pm 0.0120$ & $0.7278 \pm 0.0141$ \\
 & MIA-GNN      & $0.7028 \pm 0.0069$ & $0.7902 \pm 0.0041$ & $0.7465 \pm 0.0044$  
                & $0.5955 \pm 0.0498$ & $0.5744 \pm 0.0751$ & $0.5850 \pm 0.0614$ \\
 & StealLink    & $0.8248 \pm 0.0098$ & $0.8964 \pm 0.0027$ & $0.8606 \pm 0.0052$  
                & $\underline{0.8472 \pm 0.0099}$ & $\underline{0.9037 \pm 0.0097}$ & $\underline{0.8755 \pm 0.0089}$ \\
 & GroupAttack  & $0.6497 \pm 0.0021$ & $0.6554 \pm 0.0049$ & $0.6525 \pm 0.0033$  
                & $0.7858 \pm 0.0044$ & $0.7850 \pm 0.0033$ & $0.7854 \pm 0.0027$ \\
 & \textbf{TrendAttack-MIA} 
                & $\underline{0.8950 \pm 0.0054}$ & $\underline{0.9171 \pm 0.0039}$ & $\underline{0.9060 \pm 0.0032}$  
                & $0.7795 \pm 0.0690$ & $0.7649 \pm 0.0944$ & $0.7722 \pm 0.0814$ \\
 & \textbf{TrendAttack-SL}  
                & $\mathbf{0.9524 \pm 0.0026}$ & $\mathbf{0.9535 \pm 0.0025}$ & $\mathbf{0.9529 \pm 0.0024}$  
                & $\mathbf{0.9078 \pm 0.0069}$ & $\mathbf{0.9134 \pm 0.0039}$ & $\mathbf{0.9106 \pm 0.0050}$ \\
\midrule
\textbf{CEU} 
 & GraphSAGE    & $0.6503 \pm 0.0114$ & $0.6457 \pm 0.0134$ & $0.6480 \pm 0.0118$  
                & $0.6914 \pm 0.0620$ & $0.6853 \pm 0.0613$ & $0.6884 \pm 0.0611$ \\
 & NCN          & $0.6661 \pm 0.0321$ & $0.6718 \pm 0.0314$ & $0.6690 \pm 0.0312$  
                & $0.7283 \pm 0.0177$ & $0.7273 \pm 0.0120$ & $0.7278 \pm 0.0141$ \\
 & MIA-GNN      & $0.6626 \pm 0.0237$ & $0.6561 \pm 0.0256$ & $0.6593 \pm 0.0243$  
                & $0.6004 \pm 0.0363$ & $0.5811 \pm 0.0568$ & $0.5908 \pm 0.0459$ \\
 & StealLink    & $0.8467 \pm 0.0168$ & $0.9088 \pm 0.0102$ & $0.8777 \pm 0.0134$  
                & $\underline{0.8416 \pm 0.0163}$ & $\underline{0.9021 \pm 0.0084}$ & $\underline{0.8719 \pm 0.0114}$ \\
 & GroupAttack  & $0.6388 \pm 0.0052$ & $0.6430 \pm 0.0054$ & $0.6409 \pm 0.0053$  
                & $0.7845 \pm 0.0038$ & $0.7817 \pm 0.0010$ & $0.7831 \pm 0.0023$ \\
 & \textbf{TrendAttack-MIA} 
                & $\underline{0.8982 \pm 0.0038}$ & $\underline{0.9184 \pm 0.0025}$ & $\underline{0.9083 \pm 0.0018}$  
                & $0.7676 \pm 0.0722$ & $0.7576 \pm 0.0986$ & $0.7626 \pm 0.0850$ \\
 & \textbf{TrendAttack-SL}  
                & $\mathbf{0.9550 \pm 0.0032}$ & $\mathbf{0.9579 \pm 0.0028}$ & $\mathbf{0.9565 \pm 0.0028}$  
                & $\mathbf{0.9037 \pm 0.0062}$ & $\mathbf{0.9088 \pm 0.0020}$ & $\mathbf{0.9062 \pm 0.0040}$ \\
\midrule
\textbf{GA}  
 & GraphSAGE    & $0.6503 \pm 0.0114$ & $0.6457 \pm 0.0134$ & $0.6480 \pm 0.0118$  
                & $0.6914 \pm 0.0620$ & $0.6853 \pm 0.0613$ & $0.6884 \pm 0.0611$ \\
 & NCN          & $0.6661 \pm 0.0321$ & $0.6718 \pm 0.0314$ & $0.6690 \pm 0.0312$  
                & $0.7283 \pm 0.0177$ & $0.7273 \pm 0.0120$ & $0.7278 \pm 0.0141$ \\
 & MIA-GNN      & $0.7242 \pm 0.0099$ & $0.8039 \pm 0.0128$ & $0.7641 \pm 0.0112$  
                & $0.6200 \pm 0.0636$ & $0.6057 \pm 0.0884$ & $0.6129 \pm 0.0756$ \\
 & StealLink    & $0.8203 \pm 0.0128$ & $0.8898 \pm 0.0056$ & $0.8550 \pm 0.0080$  
                & $\underline{0.8342 \pm 0.0182}$ & $\underline{0.8947 \pm 0.0040}$ & $\underline{0.8644 \pm 0.0106}$ \\
 & GroupAttack  & $0.6458 \pm 0.0076$ & $0.6493 \pm 0.0111$ & $0.6475 \pm 0.0093$  
                & $0.7746 \pm 0.0069$ & $0.7760 \pm 0.0031$ & $0.7753 \pm 0.0048$ \\
 & \textbf{TrendAttack-MIA} 
                & $\underline{0.8932 \pm 0.0052}$ & $\underline{0.9158 \pm 0.0048}$ & $\underline{0.9045 \pm 0.0037}$  
                & $0.7255 \pm 0.0715$ & $0.7099 \pm 0.0778$ & $0.7177 \pm 0.0742$ \\
 & \textbf{TrendAttack-SL}  
                & $\mathbf{0.9531 \pm 0.0027}$ & $\mathbf{0.9537 \pm 0.0034}$ & $\mathbf{0.9534 \pm 0.0029}$  
                & $\mathbf{0.9041 \pm 0.0054}$ & $\mathbf{0.9119 \pm 0.0034}$ & $\mathbf{0.9080 \pm 0.0040}$ \\
\bottomrule
\end{tabular}
}
\vskip -0.5em
\end{table*}

{\bf Variance of Comparison Results.} Due to space limitations, the comparison results in Table~\ref{tab:main} in Section~\ref{sec:experiments} do not include the standard variance from five repeated experiments. We now supplement all variance results in Table~\ref{tab:main_with_var_cora_citeseer} and Table~\ref{tab:main_with_var_pubmed_lastfm}. We make the following observations regarding the stability of our proposed TrendAttack: Compared with the no-trend-feature counterparts MIA-GNN and StealLink, our TrendAttack-MIA and TrendAttack-SL exhibit smaller variances, indicating better stability. This highlights the effectiveness of the trend features, which not only improve attack performance on both edge groups but also reduce variance, enhancing model stability.

\begin{figure*}[!ht]
  \centering
  \begin{subfigure}[b]{0.27\textwidth}
    \centering
    \includegraphics[width=\linewidth]{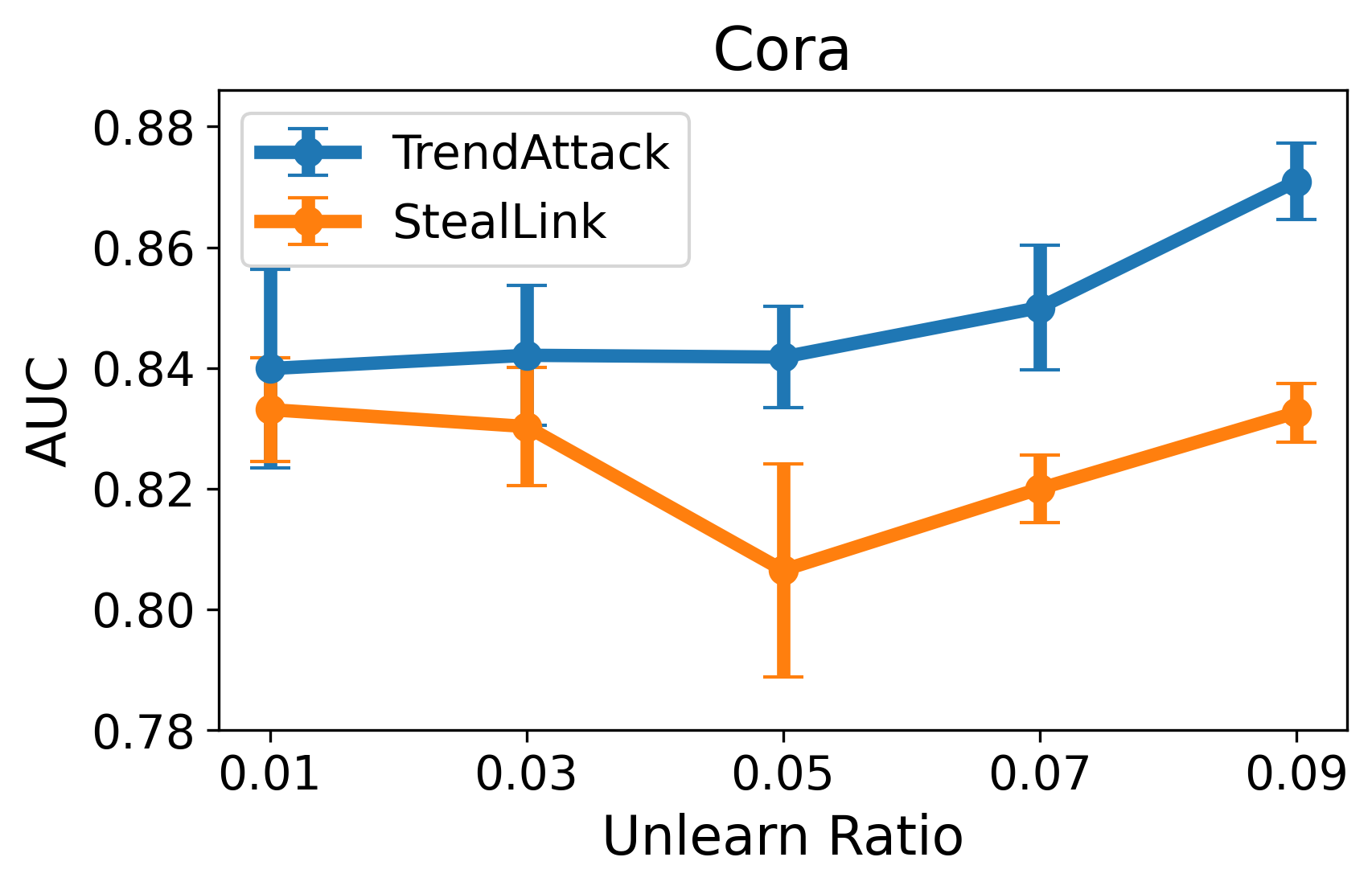}
    \caption{Cora results.}
    \label{fig:ratio_cora}
  \end{subfigure}\hfill
  \begin{subfigure}[b]{0.27\textwidth}
    \centering
    \includegraphics[width=\linewidth]{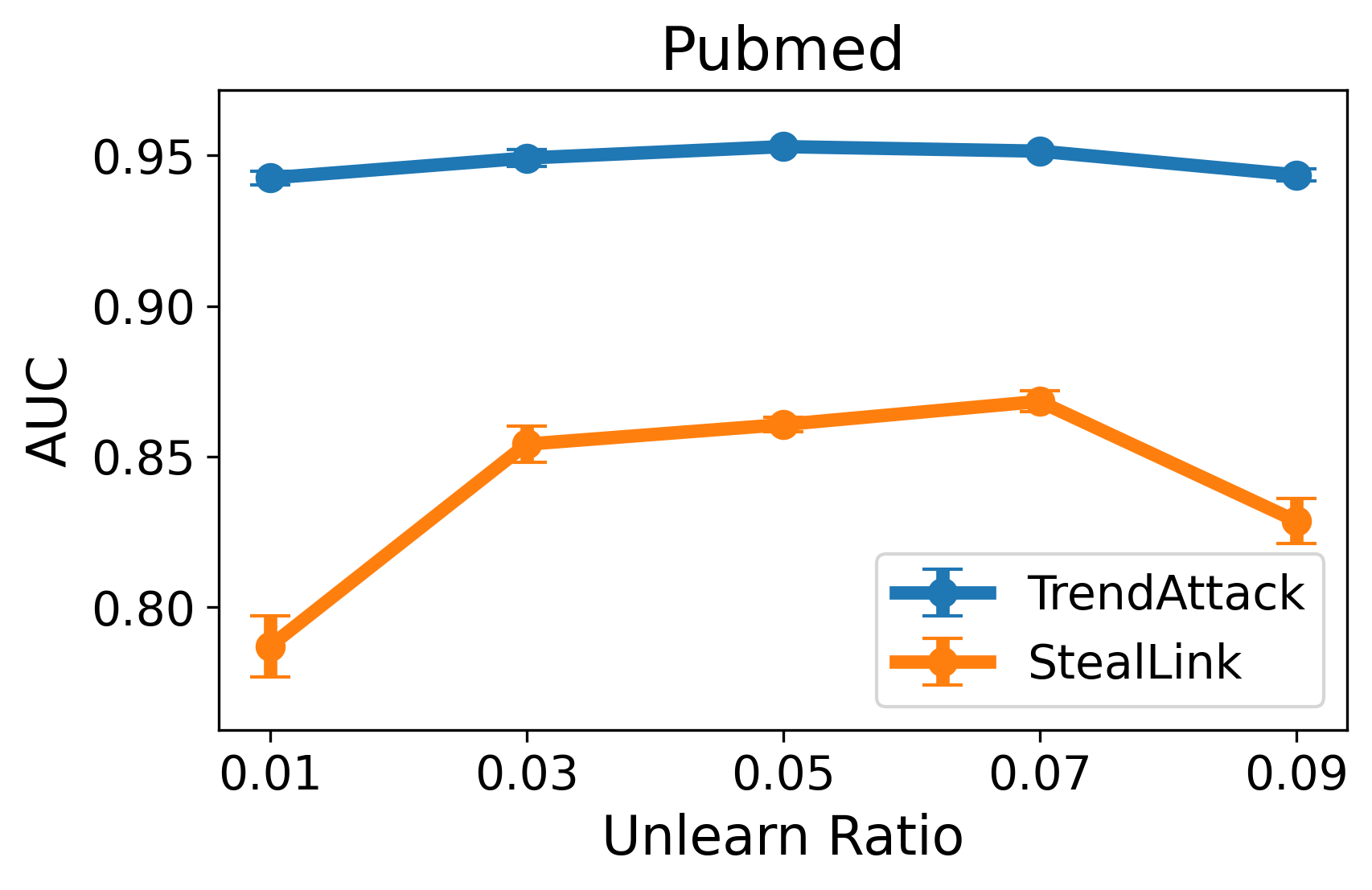}
    \caption{Pubmed results.}
    \label{fig:ratio_pubmed}
  \end{subfigure}\hfill
  \begin{subfigure}[b]{0.27\textwidth}
    \centering
    \includegraphics[width=\linewidth]{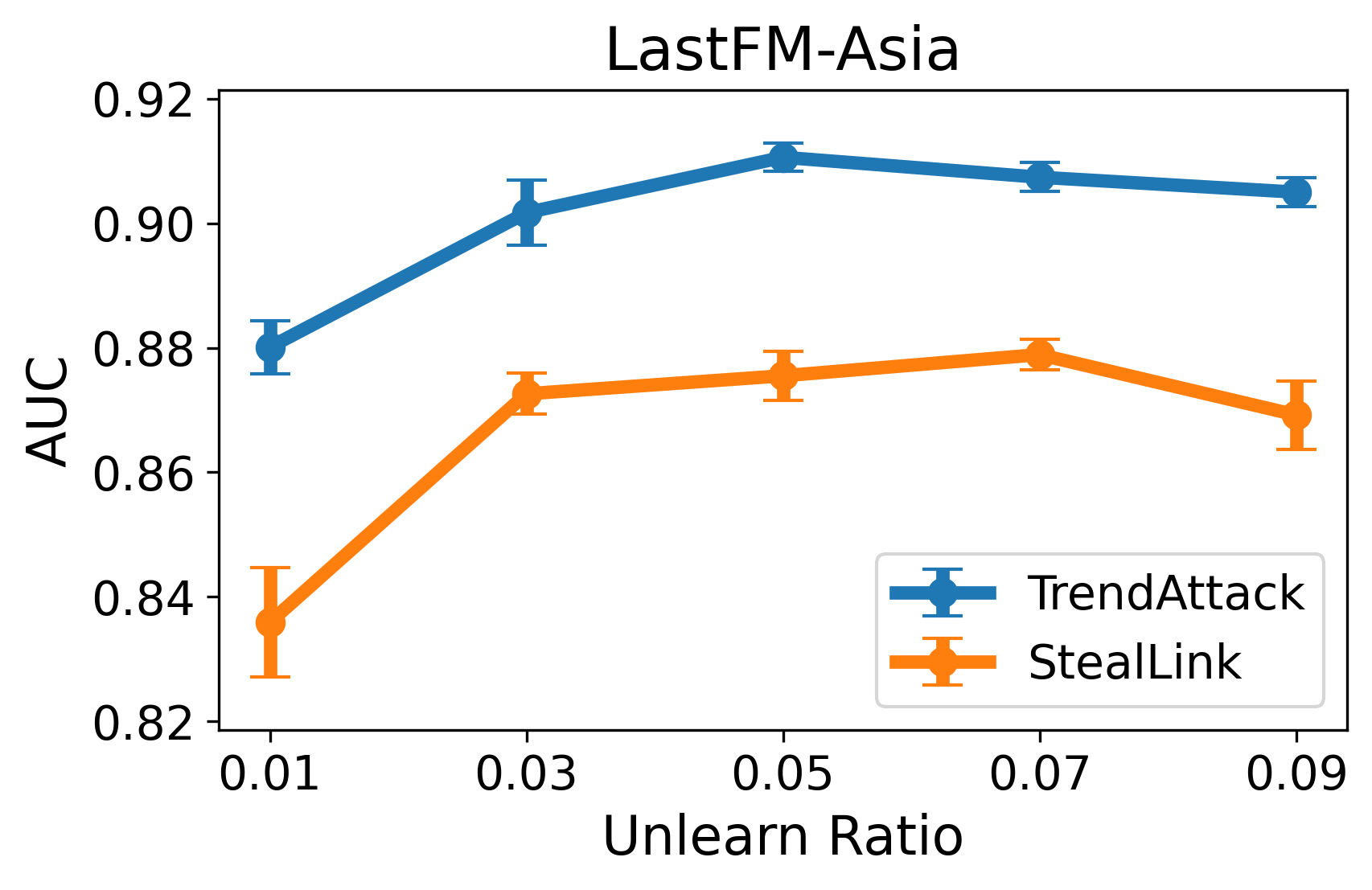}
    \caption{LastFM-Asia results.}
    \label{fig:ratio_lastfm}
  \end{subfigure}
  \vskip -1em
  \caption{\textbf{Ablation study on the impact of unlearning ratio}.}
  \label{fig:ablation_ratio}
\end{figure*}

\begin{figure*}[!ht]
    \centering
    \begin{subfigure}{0.3\textwidth}
        \centering
        \includegraphics[width=\linewidth]{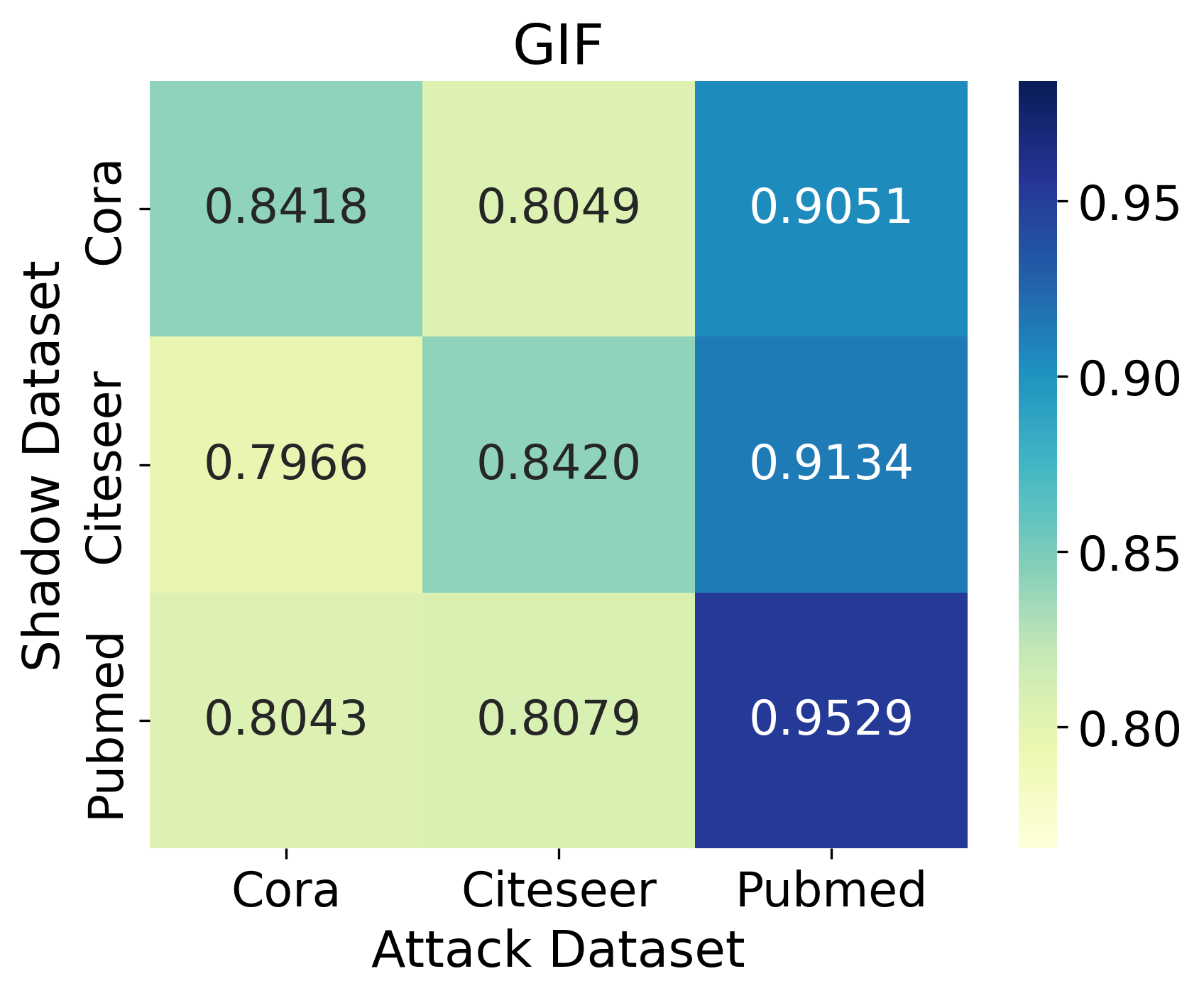}
        \caption{GIF results.}
    \end{subfigure}
    \hfill
    \begin{subfigure}{0.3\textwidth}
        \centering
        \includegraphics[width=\linewidth]{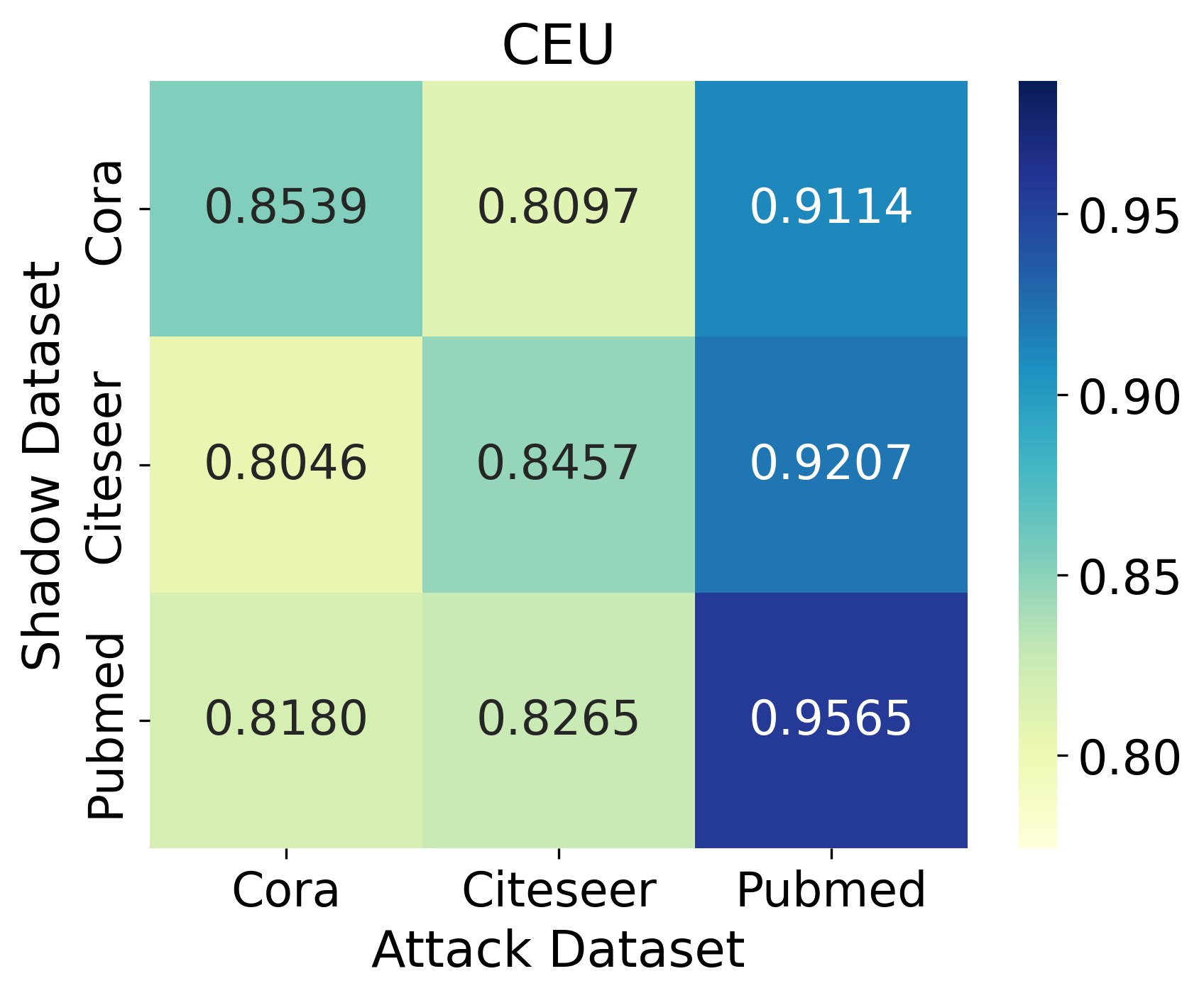}
        \caption{CEU results.}
    \end{subfigure}
    \hfill
    \begin{subfigure}{0.3\textwidth}
        \centering
        \includegraphics[width=\linewidth]{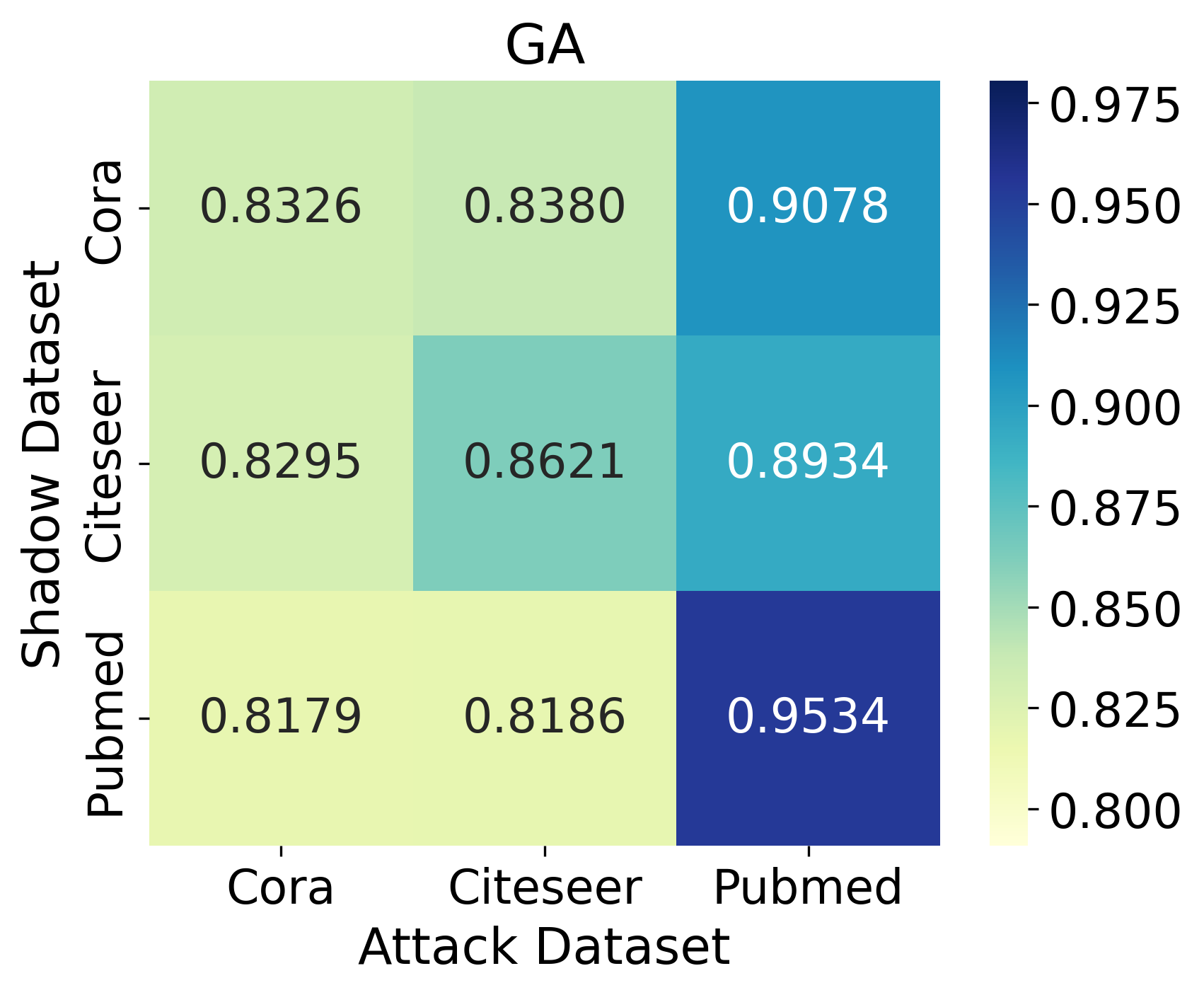}
        \caption{GA results.}
    \end{subfigure}
    \vskip -0.1in
    \caption{Transferability of TrendAttack across different datasets on Cora, Citeseer, and Pubmed. Each heatmap shows attack AUC when the shadow dataset (rows) and attack dataset (columns) differ.}
    \label{fig:data_transfer}
\end{figure*}

\noindent\textbf{Impact of Unlearning Ratio.} In this study, we examine whether the victim model's randomly selected unlearning edge ratio affects the overall attack AUC. Specifically, we fix the unlearning method to GIF and use TrendAttack-SL as our model variant, with results presented in Figure~\ref{fig:ablation_ratio}. From the figure, we find that compared to our most important baseline, StealLink, TrendAttack consistently performs better with less fluctuation. This indicates that our model remains relatively stable across different unlearning ratios, making it more universally applicable to various attack settings.

\noindent{\bf Dataset Transferability.} We evaluate the ability of TrendAttack-SL to transfer across datasets using a GCN backbone. For each unlearning method (GIF, CEU, GA), we train the attack on one dataset (shadow dataset) and apply it to a different dataset (attack dataset). Figures~\ref{fig:data_transfer} (a)-(c) present the results on Cora, Citeseer, and Pubmed. Overall, the attack maintains strong performance even when the shadow and attack datasets differ, with only slight decreases in some cases. This demonstrates the stability and generalization ability of TrendAttack-SL across datasets, indicating that the attack does not strongly depend on having the same dataset for training the shadow model.

\begin{table*}[!ht]
\centering
\caption{Comparison of our attack setting with previous unlearning inversion attacks. Our setting targets GNNs and assumes no access to the original model $f(\cdot;\thetab_{\mathrm{orig}})$, which is unlikely to be shared by responsible model owners due to its memorization of sensitive contents. Although we use a shadow dataset, our assumption of only black-box access to the unlearned model $f(\cdot;\thetab_{\mathrm{un}})$ makes our setting one of the strictest and most practical in the unlearning inversion literature.}
\label{tab:attack_comparison}
\vskip -1em
\begin{tabular}{lccccc}
\toprule
\textbf{Method} & \textbf{Access to $f(\cdot;\thetab_{\mathrm{orig}})$} & \textbf{Access to $f(\cdot;\thetab_{\mathrm{un}})$} & \textbf{Shadow Dataset} & \textbf{Victim Model} & \textbf{Attacker’s Goal} \\
\midrule
ReconAttack~\cite{bertran2024reconstruction} & White-box & White-box & No & Linear regression & Unlearned features \\
UnlearnInv-Feature~\cite{hu2024learn} & White-box & White-box & No & DNNs & Unlearned features \\
UnlearnInv-Label~\cite{hu2024learn} & Black-box & Black-box & No & DNNs & Unlearned labels \\
\textbf{Ours} & \textbf{No} & \textbf{Black-box} & \textbf{Yes} & \textbf{GNNs} & \textbf{Unlearned edges} \\
\bottomrule
\end{tabular}
\end{table*}

\section{Discussions} \label{sec:discussion}

In this section, we discuss the rationale of our attack settings and present potential future directions for potential defence strategies of our work.

\subsection{Comparison to Previous Unlearning Inversion Attacks} \label{sec:compare_prev_inv_atk}

Compared to prior unlearning inversion attacks~\cite{bertran2024reconstruction,hu2024learn} (see Table~\ref{tab:attack_comparison}), our attack is conducted under one of the strictest threat models in the literature. The main difference lies in model access assumptions: both ReconAttack~\cite{bertran2024reconstruction} and UnlearnInv~\cite{hu2024learn} adopt a \textbf{two-model} setting, where the attacker can access both the original model $f_{\mathrm{orig}}$ (before unlearning) and the unlearned model $f_{\mathrm{un}}$ (after unlearning). In contrast, our method uses a \textbf{one-model} setting, where the attacker can only access the unlearned model $f_{\mathrm{un}}$. This difference is important, as in realistic deployments the pre-unlearning model, which memorizes sensitive data, is rarely released. Therefore, our attack setting is fundamentally different and reflects practical scenarios. Detailed comparisons are as follows.

{\bf Difference to ReconAttack~\cite{bertran2024reconstruction}.}  
This work targets linear regression models with different loss functions, while our attack addresses a much more complex setting involving GNNs. Our method explicitly considers how unlearning effects propagate over graph structures and tackles challenges from deep, non-linear architectures, which go beyond simple regression models. Moreover,~\cite{bertran2024reconstruction} assumes white-box access to both the original and unlearned models’ parameters (see “...the parameter difference between the two models…” on page 2 of~\cite{bertran2024reconstruction}), whereas our method only requires black-box access to the unlearned model’s outputs.

{\bf Difference to UnlearnInv~\cite{hu2024learn}.}  
This work presents two settings. The feature attack (Section 3.2.1 of~\cite{hu2024learn}) assumes white-box access to both models and aims to recover training features. The label attack (Section 3.2.2 of~\cite{hu2024learn}) assumes black-box access to both models and infers training labels. Our TrendAttack, by contrast, requires only black-box access to $f_{\mathrm{un}}$ and uses a shadow dataset. Instead of recovering features or labels, we focus on reconstructing unlearned edges, an attack goal uniquely important for graph-structured data.

\subsection{The Necessity of Shadow Datasets} 

In this work, we first train the attack model on the shadow dataset $\Gcal^{\mathrm{sha}}$, and then transfer the trained attack model to the target dataset and its corresponding victim model (i.e., the unlearned GNN trained on the target dataset). 
An alternative strategy is to discard the shadow dataset and instead train a link prediction model directly on the target dataset $\mathcal{G}_{\mathrm{un}}$, which is a more intuitive way to recover unlearned edges from the cleaned graph. 
However, due to API query limitations imposed by many model-serving platforms and online social networks (e.g., see the query limitation policies of Twitter\footnote{\url{https://developer.x.com/en/docs/x-api}} and TikTok\footnote{\url{https://developers.tiktok.com/doc/research-api-faq}}), attackers can typically collect only a small number of user profiles and social connections within a given time window. As a result, it is difficult to gather sufficient training data directly on the target dataset, making such direct link prediction approaches less practical in real-world settings. Thus, our setting of training the attack model on a shadow dataset with a similar distribution to the target dataset (e.g., Facebook in the US $\rightarrow$ Facebook in the UK) provides access to more training data and is therefore more practical under such widespread query limitations.

\subsection{Potential Defence Strategies} \label{sec:potential_defence}

{\bf Leveraging Existing Defences for MIAs.} 
Since our proposed attack is a significantly more challenging variant of traditional membership inference attacks (MIAs), defences developed for MIAs may still offer protection. For example, training differentially private GNNs, as suggested in~\cite{hu2024learn}, could help mitigate our attack.

{\bf Strengthening Unlearning Mechanisms.} 
Our attack exploits the residual effects left by imperfect unlearning. Increasing the unlearning magnitude (i.e., allocating a stronger privacy budget) could help reduce such effects. However, finding a good balance between model utility and privacy remains a challenge.

{\bf Post-Unlearning Mitigation Strategies.} 
It may also be helpful to adopt mitigation strategies for general-purpose unlearning inversion attacks on Euclidean (i.e., non-graph) data. Techniques such as parameter pruning or fine-tuning have been shown to reduce privacy leakage~\cite{hu2024learn}, and could be applied to reduce the effectiveness of inversion attacks like ours.

\ifdefined\isarxiv

\balance
\clearpage

\bibliographystyle{ACM-Reference-Format}
\bibliography{ref}

@String{Computing = "Computing" }

@String{Computer = "{IEEE} Computer" }

@String{Springer = "Springer-Verlag" }

@inproceedings{fan2019graph,
  title={Graph neural networks for social recommendation},
  author={Fan, Wenqi and Ma, Yao and Li, Qing and He, Yuan and Zhao, Eric and Tang, Jiliang and Yin, Dawei},
  booktitle={The world wide web conference},
  pages={417--426},
  year={2019}
}

@inproceedings{he2020lightgcn,
  title={Lightgcn: Simplifying and powering graph convolution network for recommendation},
  author={He, Xiangnan and Deng, Kuan and Wang, Xiang and Li, Yan and Zhang, Yongdong and Wang, Meng},
  booktitle={Proceedings of the 43rd International ACM SIGIR conference on research and development in Information Retrieval},
  pages={639--648},
  year={2020}
}

@inproceedings{kipf2017semi,
  title={Semi-Supervised Classification with Graph Convolutional Networks},
  author={Kipf, Thomas N. and Welling, Max},
  booktitle={International Conference on Learning Representations (ICLR)},
  year={2017}
}

@inproceedings{kingma2015adam,
  title={Adam: A Method for Stochastic Optimization},
  author={Kingma, Diederik P and Ba, Jimmy},
  booktitle={International Conference on Learning Representations (ICLR)},
  year={2015}
}

@article{hamilton2017inductive,
  title={Inductive representation learning on large graphs},
  author={Hamilton, Will and Ying, Zhitao and Leskovec, Jure},
  journal={Advances in neural information processing systems},
  volume={30},
  year={2017}
}

@inproceedings{wu2019simplifying,
  title={Simplifying graph convolutional networks},
  author={Wu, Felix and Souza, Amauri and Zhang, Tianyi and Fifty, Christopher and Yu, Tao and Weinberger, Kilian},
  booktitle={International conference on machine learning},
  pages={6861--6871},
  year={2019},
  organization={PMLR}
}

@article{shin2023efficient,
  title={Efficient learning of linear graph neural networks via node subsampling},
  author={Shin, Seiyun and Shomorony, Ilan and Zhao, Han},
  journal={Advances in Neural Information Processing Systems},
  volume={36},
  pages={55479--55501},
  year={2023}
}

@inproceedings{olatunji2021membership,
  title={Membership inference attack on graph neural networks},
  author={Olatunji, Iyiola E and Nejdl, Wolfgang and Khosla, Megha},
  booktitle={2021 Third IEEE International Conference on Trust, Privacy and Security in Intelligent Systems and Applications (TPS-ISA)},
  pages={11--20},
  year={2021},
  organization={IEEE}
}

@inproceedings{he2021stealing,
  title={Stealing links from graph neural networks},
  author={He, Xinlei and Jia, Jinyuan and Backes, Michael and Gong, Neil Zhenqiang and Zhang, Yang},
  booktitle={30th USENIX security symposium (USENIX security 21)},
  pages={2669--2686},
  year={2021}
}

@inproceedings{zhang2023demystifying,
  title={Demystifying uneven vulnerability of link stealing attacks against graph neural networks},
  author={Zhang, He and Wu, Bang and Wang, Shuo and Yang, Xiangwen and Xue, Minhui and Pan, Shirui and Yuan, Xingliang},
  booktitle={International Conference on Machine Learning},
  pages={41737--41752},
  year={2023},
  organization={PMLR}
}

@inproceedings{wu2023gif,
  title={Gif: A general graph unlearning strategy via influence function},
  author={Wu, Jiancan and Yang, Yi and Qian, Yuchun and Sui, Yongduo and Wang, Xiang and He, Xiangnan},
  booktitle={Proceedings of the ACM Web Conference 2023},
  pages={651--661},
  year={2023}
}

@inproceedings{wu2023certified,
  title={Certified edge unlearning for graph neural networks},
  author={Wu, Kun and Shen, Jie and Ning, Yue and Wang, Ting and Wang, Wendy Hui},
  booktitle={Proceedings of the 29th ACM SIGKDD Conference on Knowledge Discovery and Data Mining},
  pages={2606--2617},
  year={2023}
}

@inproceedings{shokri2017membership,
  title={Membership inference attacks against machine learning models},
  author={Shokri, Reza and Stronati, Marco and Song, Congzheng and Shmatikov, Vitaly},
  booktitle={2017 IEEE symposium on security and privacy (SP)},
  pages={3--18},
  year={2017},
  organization={IEEE}
}

@article{hu2022membership,
  title={Membership inference attacks on machine learning: A survey},
  author={Hu, Hongsheng and Salcic, Zoran and Sun, Lichao and Dobbie, Gillian and Yu, Philip S and Zhang, Xuyun},
  journal={ACM Computing Surveys (CSUR)},
  volume={54},
  number={11s},
  pages={1--37},
  year={2022},
  publisher={ACM New York, NY}
}

@inproceedings{zhang2021membership,
  title={Membership inference attacks against recommender systems},
  author={Zhang, Minxing and Ren, Zhaochun and Wang, Zihan and Ren, Pengjie and Chen, Zhunmin and Hu, Pengfei and Zhang, Yang},
  booktitle={Proceedings of the 2021 ACM SIGSAC Conference on Computer and Communications Security},
  pages={864--879},
  year={2021}
}

@inproceedings{yuan2023interaction,
  title={Interaction-level membership inference attack against federated recommender systems},
  author={Yuan, Wei and Yang, Chaoqun and Nguyen, Quoc Viet Hung and Cui, Lizhen and He, Tieke and Yin, Hongzhi},
  booktitle={Proceedings of the ACM Web Conference 2023},
  pages={1053--1062},
  year={2023}
}

@inproceedings{mattern2023membership,
  title={Membership Inference Attacks against Language Models via Neighbourhood Comparison},
  author={Mattern, Justus and Mireshghallah, Fatemehsadat and Jin, Zhijing and Schoelkopf, Bernhard and Sachan, Mrinmaya and Berg-Kirkpatrick, Taylor},
  booktitle={Findings of the Association for Computational Linguistics: ACL 2023},
  pages={11330--11343},
  year={2023}
}

@inproceedings{meeus2024did,
  title={Did the neurons read your book? document-level membership inference for large language models},
  author={Meeus, Matthieu and Jain, Shubham and Rei, Marek and de Montjoye, Yves-Alexandre},
  booktitle={33rd USENIX Security Symposium (USENIX Security 24)},
  pages={2369--2385},
  year={2024}
}

@inproceedings{chen2020gan,
  title={Gan-leaks: A taxonomy of membership inference attacks against generative models},
  author={Chen, Dingfan and Yu, Ning and Zhang, Yang and Fritz, Mario},
  booktitle={Proceedings of the 2020 ACM SIGSAC conference on computer and communications security},
  pages={343--362},
  year={2020}
}

@inproceedings{choquette2021label,
  title={Label-only membership inference attacks},
  author={Choquette-Choo, Christopher A and Tramer, Florian and Carlini, Nicholas and Papernot, Nicolas},
  booktitle={International conference on machine learning},
  pages={1964--1974},
  year={2021},
  organization={PMLR}
}

@inproceedings{liu2024breaking,
  title={Breaking the trilemma of privacy, utility, and efficiency via controllable machine unlearning},
  author={Liu, Zheyuan and Dou, Guangyao and Chien, Eli and Zhang, Chunhui and Tian, Yijun and Zhu, Ziwei},
  booktitle={Proceedings of the ACM Web Conference 2024},
  pages={1260--1271},
  year={2024}
}

@inproceedings{duddu2020quantifying,
  title={Quantifying privacy leakage in graph embedding},
  author={Duddu, Vasisht and Boutet, Antoine and Shejwalkar, Virat},
  booktitle={MobiQuitous 2020-17th EAI International Conference on Mobile and Ubiquitous Systems: Computing, Networking and Services},
  pages={76--85},
  year={2020}
}

@inproceedings{wu2021adapting,
  title={Adapting membership inference attacks to GNN for graph classification: Approaches and implications},
  author={Wu, Bang and Yang, Xiangwen and Pan, Shirui and Yuan, Xingliang},
  booktitle={2021 IEEE International Conference on Data Mining (ICDM)},
  pages={1421--1426},
  year={2021},
  organization={IEEE}
}

@inproceedings{zhang2022inference,
  title={Inference attacks against graph neural networks},
  author={Zhang, Zhikun and Chen, Min and Backes, Michael and Shen, Yun and Zhang, Yang},
  booktitle={31st USENIX Security Symposium (USENIX Security 22)},
  pages={4543--4560},
  year={2022}
}

@article{he2021node,
  title={Node-level membership inference attacks against graph neural networks},
  author={He, Xinlei and Wen, Rui and Wu, Yixin and Backes, Michael and Shen, Yun and Zhang, Yang},
  journal={arXiv preprint arXiv:2102.05429},
  year={2021}
}

@inproceedings{conti2022label,
  title={Label-only membership inference attack against node-level graph neural networks},
  author={Conti, Mauro and Li, Jiaxin and Picek, Stjepan and Xu, Jing},
  booktitle={Proceedings of the 15th ACM Workshop on Artificial Intelligence and Security},
  pages={1--12},
  year={2022}
}

@inproceedings{zhang2021graphmi,
  title     = {GraphMI: Extracting Private Graph Data from Graph Neural Networks},
  author    = {Zhang, Zaixi and Liu, Qi and Huang, Zhenya and Wang, Hao and Lu, Chengqiang and Liu, Chuanren and Chen, Enhong},
  booktitle = {Proceedings of the Thirtieth International Joint Conference on
               Artificial Intelligence, {IJCAI-21}},
  year      = {2021},
}

@article{zhang2022model,
  title={Model inversion attacks against graph neural networks},
  author={Zhang, Zaixi and Liu, Qi and Huang, Zhenya and Wang, Hao and Lee, Chee-Kong and Chen, Enhong},
  journal={IEEE Transactions on Knowledge and Data Engineering},
  volume={35},
  number={9},
  pages={8729--8741},
  year={2022},
  publisher={IEEE}
}

@inproceedings{wang2023inductive,
  title={Inductive graph unlearning},
  author={Wang, Cheng-Long and Huai, Mengdi and Wang, Di},
  booktitle={32nd USENIX Security Symposium (USENIX Security 23)},
  pages={3205--3222},
  year={2023}
}

@inproceedings{chen2022graph,
  title={Graph unlearning},
  author={Chen, Min and Zhang, Zhikun and Wang, Tianhao and Backes, Michael and Humbert, Mathias and Zhang, Yang},
  booktitle={Proceedings of the 2022 ACM SIGSAC conference on computer and communications security},
  pages={499--513},
  year={2022}
}

@inproceedings{li2025community,
  title={Community-Centric Graph Unlearning},
  author={Li, Yi and Zhang, Shichao and Zhang, Guixian and Cheng, Debo},
  booktitle={Proceedings of the AAAI Conference on Artificial Intelligence},
  volume={39},
  number={17},
  pages={18548--18556},
  year={2025}
}

@inproceedings{zhang2024graph,
  title={Graph unlearning with efficient partial retraining},
  author={Zhang, Jiahao},
  booktitle={Companion Proceedings of the ACM Web Conference 2024},
  pages={1218--1221},
  year={2024}
}

@article{li2023ultrare,
  title={Ultrare: Enhancing receraser for recommendation unlearning via error decomposition},
  author={Li, Yuyuan and Chen, Chaochao and Zhang, Yizhao and Liu, Weiming and Lyu, Lingjuan and Zheng, Xiaolin and Meng, Dan and Wang, Jun},
  journal={Advances in Neural Information Processing Systems},
  volume={36},
  pages={12611--12625},
  year={2023}
}

@article{li2024tcgu,
  title={TCGU: Data-centric Graph Unlearning based on Transferable Condensation},
  author={Li, Fan and Wang, Xiaoyang and Cheng, Dawei and Zhang, Wenjie and Zhang, Ying and Lin, Xuemin},
  journal={arXiv preprint arXiv:2410.06480},
  year={2024}
}

@article{said2023survey,
  title={A survey of graph unlearning},
  author={Said, Anwar and Zhao, Yuying and Derr, Tyler and Shabbir, Mudassir and Abbas, Waseem and Koutsoukos, Xenofon},
  journal={arXiv preprint arXiv:2310.02164},
  year={2023}
}

@article{fan2025opengu,
  title={OpenGU: A Comprehensive Benchmark for Graph Unlearning},
  author={Fan, Bowen and Ai, Yuming and Li, Xunkai and Guo, Zhilin and Li, Rong-Hua and Wang, Guoren},
  journal={arXiv preprint arXiv:2501.02728},
  year={2025}
}

@inproceedings{
chien2022certified,
title={Certified Graph Unlearning},
author={Eli Chien and Chao Pan and Olgica Milenkovic},
booktitle={NeurIPS 2022 Workshop: New Frontiers in Graph Learning},
year={2022},
url={https://openreview.net/forum?id=wCxlGc9ZCwi}
}

@inproceedings{wu2024graphguard,
  title={Graphguard: Detecting and counteracting training data misuse in graph neural networks},
  author={Wu, Bang and Zhang, He and Yang, Xiangwen and Wang, Shuo and Xue, Minhui and Pan, Shirui and Yuan, Xingliang},
  booktitle={31st Annual Network and Distributed System Security Symposium (NDSS)},
  year={2024}
}

@inproceedings{koh2017understanding,
  title={Understanding black-box predictions via influence functions},
  author={Koh, Pang Wei and Liang, Percy},
  booktitle={International conference on machine learning},
  pages={1885--1894},
  year={2017},
  organization={PMLR}
}

@inproceedings{dong2024idea,
  title={Idea: A flexible framework of certified unlearning for graph neural networks},
  author={Dong, Yushun and Zhang, Binchi and Lei, Zhenyu and Zou, Na and Li, Jundong},
  booktitle={Proceedings of the 30th ACM SIGKDD Conference on Knowledge Discovery and Data Mining},
  pages={621--630},
  year={2024}
}

@inproceedings{pan2023unlearning,
  title={Unlearning graph classifiers with limited data resources},
  author={Pan, Chao and Chien, Eli and Milenkovic, Olgica},
  booktitle={Proceedings of the ACM Web Conference 2023},
  pages={716--726},
  year={2023}
}

@inproceedings{
yi2025scalable,
title={Scalable and Certifiable Graph Unlearning: Overcoming the Approximation Error Barrier},
author={Lu Yi and Zhewei Wei},
booktitle={The Thirteenth International Conference on Learning Representations},
year={2025},
url={https://openreview.net/forum?id=pPyJyeLriR}
}

@inproceedings{yang2025erase,
  title={Erase then Rectify: A Training-Free Parameter Editing Approach for Cost-Effective Graph Unlearning},
  author={Yang, Zhe-Rui and Han, Jindong and Wang, Chang-Dong and Liu, Hao},
  booktitle={Proceedings of the AAAI Conference on Artificial Intelligence},
  volume={39},
  number={12},
  pages={13044--13051},
  year={2025}
}

@inproceedings{zhang2025dynamic,
  title={Dynamic Graph Unlearning: A General and Efficient Post-Processing Method via Gradient Transformation},
  author={Zhang, He and Wu, Bang and Yang, Xiangwen and Yuan, Xingliang and Liu, Xiaoning and Yi, Xun},
  booktitle={Proceedings of the ACM on Web Conference 2025},
  pages={931--944},
  year={2025}
}

@inproceedings{li2024towards,
  title={Towards effective and general graph unlearning via mutual evolution},
  author={Li, Xunkai and Zhao, Yulin and Wu, Zhengyu and Zhang, Wentao and Li, Rong-Hua and Wang, Guoren},
  booktitle={Proceedings of the AAAI Conference on Artificial Intelligence},
  volume={38},
  number={12},
  pages={13682--13690},
  year={2024}
}

@article{zhang2025node,
  title={Node-level Contrastive Unlearning on Graph Neural Networks},
  author={Zhang, Qiuchen and Yang, Carl and Xiong, Li and others},
  journal={arXiv preprint arXiv:2503.02959},
  year={2025}
}

@inproceedings{
cheng2023gnndelete,
title={{GNND}elete: A General Strategy for Unlearning in Graph Neural Networks},
author={Jiali Cheng and George Dasoulas and Huan He and Chirag Agarwal and Marinka Zitnik},
booktitle={The Eleventh International Conference on Learning Representations },
year={2023},
url={https://openreview.net/forum?id=X9yCkmT5Qrl}
}

@inproceedings{sankar2021graph,
  title={Graph neural networks for friend ranking in large-scale social platforms},
  author={Sankar, Aravind and Liu, Yozen and Yu, Jun and Shah, Neil},
  booktitle={Proceedings of the Web Conference 2021},
  pages={2535--2546},
  year={2021}
}

@inproceedings{dou2020enhancing,
  title={Enhancing graph neural network-based fraud detectors against camouflaged fraudsters},
  author={Dou, Yingtong and Liu, Zhiwei and Sun, Li and Deng, Yutong and Peng, Hao and Yu, Philip S},
  booktitle={Proceedings of the 29th ACM international conference on information \& knowledge management},
  pages={315--324},
  year={2020}
}

@inproceedings{liu2020alleviating,
  title={Alleviating the inconsistency problem of applying graph neural network to fraud detection},
  author={Liu, Zhiwei and Dou, Yingtong and Yu, Philip S and Deng, Yutong and Peng, Hao},
  booktitle={Proceedings of the 43rd international ACM SIGIR conference on research and development in information retrieval},
  pages={1569--1572},
  year={2020}
}

@inproceedings{
velickovic2018graph,
title={Graph Attention Networks},
author={Petar Veličković and Guillem Cucurull and Arantxa Casanova and Adriana Romero and Pietro Liò and Yoshua Bengio},
booktitle={International Conference on Learning Representations},
year={2018},
url={https://openreview.net/forum?id=rJXMpikCZ},
}

@inproceedings{gilmer2017neural,
  title={Neural message passing for quantum chemistry},
  author={Gilmer, Justin and Schoenholz, Samuel S and Riley, Patrick F and Vinyals, Oriol and Dahl, George E},
  booktitle={International conference on machine learning},
  pages={1263--1272},
  year={2017},
  organization={PMLR}
}

@article{feng2022powerful,
  title={How powerful are k-hop message passing graph neural networks},
  author={Feng, Jiarui and Chen, Yixin and Li, Fuhai and Sarkar, Anindya and Zhang, Muhan},
  journal={Advances in Neural Information Processing Systems},
  volume={35},
  pages={4776--4790},
  year={2022}
}

@inproceedings{lee2018graph,
  title={Graph classification using structural attention},
  author={Lee, John Boaz and Rossi, Ryan and Kong, Xiangnan},
  booktitle={Proceedings of the 24th ACM SIGKDD international conference on knowledge discovery \& data mining},
  pages={1666--1674},
  year={2018}
}

@article{zhang2018link,
  title={Link prediction based on graph neural networks},
  author={Zhang, Muhan and Chen, Yixin},
  journal={Advances in neural information processing systems},
  volume={31},
  year={2018}
}

@article{dai2024comprehensive,
  title={A comprehensive survey on trustworthy graph neural networks: Privacy, robustness, fairness, and explainability},
  author={Dai, Enyan and Zhao, Tianxiang and Zhu, Huaisheng and Xu, Junjie and Guo, Zhimeng and Liu, Hui and Tang, Jiliang and Wang, Suhang},
  journal={Machine Intelligence Research},
  volume={21},
  number={6},
  pages={1011--1061},
  year={2024},
  publisher={Springer}
}

@inproceedings{sajadmanesh2021locally,
  title={Locally private graph neural networks},
  author={Sajadmanesh, Sina and Gatica-Perez, Daniel},
  booktitle={Proceedings of the 2021 ACM SIGSAC conference on computer and communications security},
  pages={2130--2145},
  year={2021}
}

@inproceedings{pedarsani2011privacy,
  title={On the privacy of anonymized networks},
  author={Pedarsani, Pedram and Grossglauser, Matthias},
  booktitle={Proceedings of the 17th ACM SIGKDD international conference on Knowledge discovery and data mining},
  pages={1235--1243},
  year={2011}
}

@article{xin2023user,
  title={On the user behavior leakage from recommender system exposure},
  author={Xin, Xin and Yang, Jiyuan and Wang, Hanbing and Ma, Jun and Ren, Pengjie and Luo, Hengliang and Shi, Xinlei and Chen, Zhumin and Ren, Zhaochun},
  journal={ACM Transactions on Information Systems},
  volume={41},
  number={3},
  pages={1--25},
  year={2023},
  publisher={ACM New York, NY}
}

@article{wang2024data,
  title={Data privacy and cybersecurity challenges in the digital transformation of the banking sector},
  author={Wang, Shuang and Asif, Muhammad and Shahzad, Muhammad Farrukh and Ashfaq, Muhammad},
  journal={Computers \& security},
  volume={147},
  pages={104051},
  year={2024},
  publisher={Elsevier}
}

@article{mantelero2013eu,
  title={The EU Proposal for a General Data Protection Regulation and the roots of the ‘right to be forgotten’},
  author={Mantelero, Alessandro},
  journal={Computer Law \& Security Review},
  volume={29},
  number={3},
  pages={229--235},
  year={2013},
  publisher={Elsevier}
}

@misc{pipeda,
  author = {Office of the Privacy Commissioner of Canada},
  title = {Announcement: Privacy commissioner seeks federal court determination on key issue for Canadians’ online reputation},
  year = {2018},
  url = {https://www.priv.gc.ca/en/opc-news/news-and-announcements/2018/an_181010/}
}

@misc{ccpa,
  author = {California Office of the Attorney General},
  title = {California Consumer Privacy Act (CCPA)},
  year = {2025},
  url = {https://oag.ca.gov/privacy/ccpa}
}

@inproceedings{wang2018billion,
  title={Billion-scale commodity embedding for e-commerce recommendation in alibaba},
  author={Wang, Jizhe and Huang, Pipei and Zhao, Huan and Zhang, Zhibo and Zhao, Binqiang and Lee, Dik Lun},
  booktitle={KDD},
  year={2018}
}

@inproceedings{jin2023amazon,
  title={Amazon-M2: A Multilingual Multi-locale Shopping Session Dataset for Recommendation and Text Generation},
  author={Jin, Wei and Mao, Haitao and Li, Zheng and Jiang, Haoming and Luo, Chen and Wen, Hongzhi and Han, Haoyu and Lu, Hanqing and Wang, Zhengyang and Li, Ruirui and others},
  booktitle={NeurIPS},
  year={2023}
}

@article{ching2015one,
  title={One trillion edges: Graph processing at facebook-scale},
  author={Ching, Avery and Edunov, Sergey and Kabiljo, Maja and Logothetis, Dionysios and Muthukrishnan, Sambavi},
  journal={Proceedings of the VLDB Endowment},
  volume={8},
  number={12},
  pages={1804--1815},
  year={2015},
  publisher={VLDB Endowment}
}

@inproceedings{armstrong2013linkbench,
  title={Linkbench: a database benchmark based on the facebook social graph},
  author={Armstrong, Timothy G and Ponnekanti, Vamsi and Borthakur, Dhruba and Callaghan, Mark},
  booktitle={Proceedings of the 2013 ACM SIGMOD International Conference on Management of Data},
  pages={1185--1196},
  year={2013}
}

@inproceedings{chen2021machine,
  title={When machine unlearning jeopardizes privacy},
  author={Chen, Min and Zhang, Zhikun and Wang, Tianhao and Backes, Michael and Humbert, Mathias and Zhang, Yang},
  booktitle={Proceedings of the 2021 ACM SIGSAC conference on computer and communications security},
  pages={896--911},
  year={2021}
}

@inproceedings{hu2024learn,
  title={Learn what you want to unlearn: Unlearning inversion attacks against machine unlearning},
  author={Hu, Hongsheng and Wang, Shuo and Dong, Tian and Xue, Minhui},
  booktitle={2024 IEEE Symposium on Security and Privacy (SP)},
  pages={3257--3275},
  year={2024},
  organization={IEEE}
}

@article{bertran2024reconstruction,
  title={Reconstruction attacks on machine unlearning: Simple models are vulnerable},
  author={Bertran, Martin and Tang, Shuai and Kearns, Michael and Morgenstern, Jamie H and Roth, Aaron and Wu, Steven Z},
  journal={Advances in Neural Information Processing Systems},
  volume={37},
  pages={104995--105016},
  year={2024}
}

@inproceedings{thudi2022necessity,
  title={On the necessity of auditable algorithmic definitions for machine unlearning},
  author={Thudi, Anvith and Jia, Hengrui and Shumailov, Ilia and Papernot, Nicolas},
  booktitle={31st USENIX security symposium (USENIX Security 22)},
  pages={4007--4022},
  year={2022}
}

@article{wu2020comprehensive,
  title={A comprehensive survey on graph neural networks},
  author={Wu, Zonghan and Pan, Shirui and Chen, Fengwen and Long, Guodong and Zhang, Chengqi and Yu, Philip S},
  journal={IEEE transactions on neural networks and learning systems},
  volume={32},
  number={1},
  pages={4--24},
  year={2020},
  publisher={IEEE}
}

@article{zhang2018generalized,
  title={Generalized cross entropy loss for training deep neural networks with noisy labels},
  author={Zhang, Zhilu and Sabuncu, Mert},
  journal={Advances in neural information processing systems},
  volume={31},
  year={2018}
}

@inproceedings{bourtoule2021machine,
  title={Machine unlearning},
  author={Bourtoule, Lucas and Chandrasekaran, Varun and Choquette-Choo, Christopher A and Jia, Hengrui and Travers, Adelin and Zhang, Baiwu and Lie, David and Papernot, Nicolas},
  booktitle={2021 IEEE symposium on security and privacy (SP)},
  pages={141--159},
  year={2021},
  organization={IEEE}
}

@inproceedings{
zhang2025catastrophic,
title={Catastrophic Failure of {LLM} Unlearning via Quantization},
author={Zhiwei Zhang and Fali Wang and Xiaomin Li and Zongyu Wu and Xianfeng Tang and Hui Liu and Qi He and Wenpeng Yin and Suhang Wang},
booktitle={The Thirteenth International Conference on Learning Representations},
year={2025},
url={https://openreview.net/forum?id=lHSeDYamnz}
}

@article{truex2019demystifying,
  title={Demystifying membership inference attacks in machine learning as a service},
  author={Truex, Stacey and Liu, Ling and Gursoy, Mehmet Emre and Yu, Lei and Wei, Wenqi},
  journal={IEEE transactions on services computing},
  volume={14},
  number={6},
  pages={2073--2089},
  year={2019},
  publisher={IEEE}
}

@inproceedings{nielsen2022mumin,
  title={Mumin: A large-scale multilingual multimodal fact-checked misinformation social network dataset},
  author={Nielsen, Dan S and McConville, Ryan},
  booktitle={Proceedings of the 45th international ACM SIGIR conference on research and development in information retrieval},
  pages={3141--3153},
  year={2022}
}

@inproceedings{dai2020ginger,
  title={Ginger cannot cure cancer: Battling fake health news with a comprehensive data repository},
  author={Dai, Enyan and Sun, Yiwei and Wang, Suhang},
  booktitle={Proceedings of the International AAAI Conference on Web and Social Media},
  volume={14},
  pages={853--862},
  year={2020}
}

@inproceedings{aliapoulios2021large,
  title={A large open dataset from the Parler social network},
  author={Aliapoulios, Max and Bevensee, Emmi and Blackburn, Jeremy and Bradlyn, Barry and De Cristofaro, Emiliano and Stringhini, Gianluca and Zannettou, Savvas},
  booktitle={Proceedings of the International AAAI Conference on Web and Social Media},
  volume={15},
  pages={943--951},
  year={2021}
}

@inproceedings{yang2016revisiting,
  title={Revisiting semi-supervised learning with graph embeddings},
  author={Yang, Zhilin and Cohen, William and Salakhudinov, Ruslan},
  booktitle={International conference on machine learning},
  pages={40--48},
  year={2016},
  organization={PMLR}
}

@inproceedings{
wang2024neural,
title={Neural Common Neighbor with Completion for Link Prediction},
author={Xiyuan Wang and Haotong Yang and Muhan Zhang},
booktitle={The Twelfth International Conference on Learning Representations},
year={2024},
url={https://openreview.net/forum?id=sNFLN3itAd}
}

@inproceedings{yan2022arcane,
  title     = {ARCANE: An Efficient Architecture for Exact Machine Unlearning},
  author    = {Yan, Haonan and Li, Xiaoguang and Guo, Ziyao and Li, Hui and Li, Fenghua and Lin, Xiaodong},
  booktitle = {Proceedings of the Thirty-First International Joint Conference on
               Artificial Intelligence, {IJCAI-22}},
  pages     = {4006--4013},
  year      = {2022},
  month     = {7},
}

@inproceedings{brophy2021machine,
  title={Machine unlearning for random forests},
  author={Brophy, Jonathan and Lowd, Daniel},
  booktitle={International Conference on Machine Learning},
  pages={1092--1104},
  year={2021},
  organization={PMLR}
}

@inproceedings{liu2023muter,
  title={Muter: Machine unlearning on adversarially trained models},
  author={Liu, Junxu and Xue, Mingsheng and Lou, Jian and Zhang, Xiaoyu and Xiong, Li and Qin, Zhan},
  booktitle={Proceedings of the IEEE/CVF international conference on computer vision},
  pages={4892--4902},
  year={2023}
}

@article{jia2023model,
  title={Model sparsity can simplify machine unlearning},
  author={Jia, Jinghan and Liu, Jiancheng and Ram, Parikshit and Yao, Yuguang and Liu, Gaowen and Liu, Yang and Sharma, Pranay and Liu, Sijia},
  journal={Advances in Neural Information Processing Systems},
  volume={36},
  pages={51584--51605},
  year={2023}
}

@article{tarun2023fast,
  title={Fast yet effective machine unlearning},
  author={Tarun, Ayush K and Chundawat, Vikram S and Mandal, Murari and Kankanhalli, Mohan},
  journal={IEEE Transactions on Neural Networks and Learning Systems},
  year={2023},
  publisher={IEEE}
}

@inproceedings{cao2015towards,
  title={Towards making systems forget with machine unlearning},
  author={Cao, Yinzhi and Yang, Junfeng},
  booktitle={2015 IEEE symposium on security and privacy},
  pages={463--480},
  year={2015},
  organization={IEEE}
}

@article{yao2024large,
  title={Large language model unlearning},
  author={Yao, Yuanshun and Xu, Xiaojun and Liu, Yang},
  journal={Advances in Neural Information Processing Systems},
  volume={37},
  pages={105425--105475},
  year={2024}
}

@article{liu2025rethinking,
  title={Rethinking machine unlearning for large language models},
  author={Liu, Sijia and Yao, Yuanshun and Jia, Jinghan and Casper, Stephen and Baracaldo, Nathalie and Hase, Peter and Yao, Yuguang and Liu, Chris Yuhao and Xu, Xiaojun and Li, Hang and others},
  journal={Nature Machine Intelligence},
  pages={1--14},
  year={2025},
  publisher={Nature Publishing Group UK London}
}

@article{liu2024large,
  title={Large language model unlearning via embedding-corrupted prompts},
  author={Liu, Chris and Wang, Yaxuan and Flanigan, Jeffrey and Liu, Yang},
  journal={Advances in Neural Information Processing Systems},
  volume={37},
  pages={118198--118266},
  year={2024}
}

@article{zhang2024defensive,
  title={Defensive unlearning with adversarial training for robust concept erasure in diffusion models},
  author={Zhang, Yimeng and Chen, Xin and Jia, Jinghan and Zhang, Yihua and Fan, Chongyu and Liu, Jiancheng and Hong, Mingyi and Ding, Ke and Liu, Sijia},
  journal={Advances in Neural Information Processing Systems},
  volume={37},
  pages={36748--36776},
  year={2024}
}

@article{zhang2024unlearncanvas,
  title={UnlearnCanvas: A Stylized Image Dataset to Benchmark Machine Unlearning for Diffusion Models},
  author={Zhang, Yihua and Fan, Chongyu and Zhang, Yimeng and Yao, Yuguang and Jia, Jinghan and Liu, Jiancheng and Zhang, Gaoyuan and Liu, Gaowen and Kompella, Ramana and Liu, Xiaoming and Liu, Sijia},
  journal={NeurIPS},
  year={2024}
}

@inproceedings{li2024machine,
  title={Machine Unlearning for Image-to-Image Generative Models},
  author={Li, Guihong and Hsu, Hsiang and Chen, Chun-Fu and Marculescu, Radu},
  booktitle={The Twelfth International Conference on Learning Representations},
  year={2024}
}

@article{zhang2024recommendation,
  title={Recommendation unlearning via influence function},
  author={Zhang, Yang and Hu, Zhiyu and Bai, Yimeng and Wu, Jiancan and Wang, Qifan and Feng, Fuli},
  journal={ACM Transactions on Recommender Systems},
  volume={3},
  number={2},
  pages={1--23},
  year={2024},
  publisher={ACM New York, NY}
}

@inproceedings{chen2022recommendation,
  title={Recommendation unlearning},
  author={Chen, Chong and Sun, Fei and Zhang, Min and Ding, Bolin},
  booktitle={Proceedings of the ACM web conference 2022},
  pages={2768--2777},
  year={2022}
}

@inproceedings{xu2019powerful,
  title={How powerful are graph neural networks?},
  author={Xu, Keyulu and Hu, Weihua and Leskovec, Jure and Jegelka, Stefanie},
  booktitle={ICLR},
  year={2019}
}

@inproceedings{zhang2024linear,
  title={Linear-time graph neural networks for scalable recommendations},
  author={Zhang, Jiahao and Xue, Rui and Fan, Wenqi and Xu, Xin and Li, Qing and Pei, Jian and Liu, Xiaorui},
  booktitle={Proceedings of the ACM Web Conference 2024},
  pages={3533--3544},
  year={2024}
}

@inproceedings{wang2025enhance,
  title={Towards Reliable GNNs: Adversarial Calibration Learning for Confidence Estimation},
  author={Wang, Yilong and Zhang, Jiahao and Zhao, Tianxiang and Wang, Suhang},
  booktitle={CIKM},
  year={2025}
}

@article{liu2024score,
  title={Score-based generative diffusion models for social recommendations},
  author={Liu, Chengyi and Zhang, Jiahao and Wang, Shijie and Fan, Wenqi and Li, Qing},
  journal={IEEE Transactions on Knowledge and Data Engineering},
  year={2025},
  publisher={IEEE}
}

@inproceedings{fu2021sagn,
  title={SAGN: semantic adaptive graph network for skeleton-based human action recognition},
  author={Fu, Ziwang and Liu, Feng and Zhang, Jiahao and Wang, Hanyang and Yang, Chengyi and Xu, Qing and Qi, Jiayin and Fu, Xiangling and Zhou, Aimin},
  booktitle={Proceedings of the 2021 International Conference on Multimedia Retrieval},
  pages={110--117},
  year={2021}
}

@inproceedings{lin2024trojan,
  title={Are You Using Reliable Graph Prompts? Trojan Prompt Attacks on Graph Neural Networks},
  author={Lin, Minhua and Zhang, Zhiwei and Dai, Enyan and Wu, Zongyu and Wang, Yilong and Zhang, Xiang and Wang, Suhang},
  booktitle={Proceedings of the 31st ACM SIGKDD Conference on Knowledge Discovery and Data Mining V. 2},
  pages={1729--1740},
  year={2025}
}

@inproceedings{wang2025bridging,
  title={Bridging source and target domains via link prediction for unsupervised domain adaptation on graphs},
  author={Wang, Yilong and Zhao, Tianxiang and Wu, Zongyu and Wang, Suhang},
  booktitle={Proceedings of the Eighteenth ACM International Conference on Web Search and Data Mining},
  pages={678--687},
  year={2025}
}

@article{lin2024stealing,
  title={Stealing Training Graphs from Graph Neural Networks},
  author={Lin, Minhua and Dai, Enyan and Xu, Junjie and Jia, Jinyuan and Zhang, Xiang and Wang, Suhang},
  journal={arXiv preprint arXiv:2411.11197},
  year={2024}
}

@inproceedings{dai2023unified,
  title={A unified framework of graph information bottleneck for robustness and membership privacy},
  author={Dai, Enyan and Cui, Limeng and Wang, Zhengyang and Tang, Xianfeng and Wang, Yinghan and Cheng, Monica and Yin, Bing and Wang, Suhang},
  booktitle={Proceedings of the 29th ACM SIGKDD Conference on Knowledge Discovery and Data Mining},
  pages={368--379},
  year={2023}
}

@inproceedings{pareja2020evolvegcn,
  title={Evolvegcn: Evolving graph convolutional networks for dynamic graphs},
  author={Pareja, Aldo and Domeniconi, Giacomo and Chen, Jie and Ma, Tengfei and Suzumura, Toyotaro and Kanezashi, Hiroki and Kaler, Tim and Schardl, Tao and Leiserson, Charles},
  booktitle={Proceedings of the AAAI conference on artificial intelligence},
  volume={34},
  number={04},
  pages={5363--5370},
  year={2020}
}

@article{weber2019anti,
  title={Anti-money laundering in bitcoin: Experimenting with graph convolutional networks for financial forensics},
  author={Weber, Mark and Domeniconi, Giacomo and Chen, Jie and Weidele, Daniel Karl I and Bellei, Claudio and Robinson, Tom and Leiserson, Charles E},
  journal={arXiv preprint arXiv:1908.02591},
  year={2019}
}

@inproceedings{chai2023towards,
  title={Towards learning to discover money laundering sub-network in massive transaction network},
  author={Chai, Ziwei and Yang, Yang and Dan, Jiawang and Tian, Sheng and Meng, Changhua and Wang, Weiqiang and Sun, Yifei},
  booktitle={Proceedings of the AAAI conference on artificial intelligence},
  volume={37},
  number={12},
  pages={14153--14160},
  year={2023}
}

@inproceedings{xu2025dualequinet,
  title={DualEquiNet: A Dual-Space Hierarchical Equivariant Network for Large Biomolecules},
  author={Xu, Junjie and Zhang, Jiahao and Prakash, Mangal and Zhang, Xiang and Wang, Suhang},
  booktitle={NeurIPS},
  year={2025}
}

@article{al2025graph,
  title={Graph-based Molecular In-context Learning Grounded on Morgan Fingerprints},
  author={Al-Lawati, Ali and Lucas, Jason and Zhang, Zhiwei and Mitra, Prasenjit and Wang, Suhang},
  journal={arXiv preprint arXiv:2502.05414},
  year={2025}
}

@article{fan2025computational,
  title={Computational Protein Science in the Era of Large Language Models (LLMs)},
  author={Fan, Wenqi and Zhou, Yi and Wang, Shijie and Yan, Yuyao and Liu, Hui and Zhao, Qian and Song, Le and Li, Qing},
  journal={arXiv e-prints},
  pages={arXiv--2501},
  year={2025}
}

@inproceedings{liu2026continuous,
  title={Continuous-time Discrete-space Diffusion Model for Recommendation},
  author={Chengyi Liu and Xiao Chen and Shijie Wang and Wenqi Fan and Qing Li},
  booktitle={The ACM International Conference on Web Search and Data Mining (WSDM)},
  year={2026}
}

@inproceedings{yang2024conversational,
  title={Conversational dueling bandits in generalized linear models},
  author={Yang, Shuhua and Yuan, Hui and Zhang, Xiaoying and Wang, Mengdi and Zhang, Hong and Wang, Huazheng},
  booktitle={Proceedings of the 30th ACM SIGKDD Conference on Knowledge Discovery and Data Mining},
  pages={3806--3817},
  year={2024}
}

@inproceedings{xue2023lazygnn,
  title={Lazygnn: Large-scale graph neural networks via lazy propagation},
  author={Xue, Rui and Han, Haoyu and Torkamani, MohamadAli and Pei, Jian and Liu, Xiaorui},
  booktitle={International Conference on Machine Learning},
  pages={38926--38937},
  year={2023},
  organization={PMLR}
}

@article{jing2021fast,
  title={Fast and effective protein model refinement using deep graph neural networks},
  author={Jing, Xiaoyang and Xu, Jinbo},
  journal={Nature computational science},
  volume={1},
  number={7},
  pages={462--469},
  year={2021},
  publisher={Nature Publishing Group US New York}
}

@article{liu2025exposing,
  title={Exposing Privacy Risks in Graph Retrieval-Augmented Generation},
  author={Liu, Jiale and Zhang, Jiahao and Wang, Suhang},
  journal={arXiv preprint arXiv:2508.17222},
  year={2025}
}

\else

\fi

\end{document}